\def\showauthornotes{0}
\def\showtableofcontents{1}
\def\showkeys{0}
\def\showdraftbox{0}
\def\showcolorlinks{1}
\def\usemicrotype{1}
\def\showfixme{0}
\def\writemode{0}
\def\arxivmode{0}
\newtheorem{theorem}{Theorem}[section]
\newtheorem*{theorem*}{Theorem}
\newtheorem{proposition}[theorem]{Proposition}
\newtheorem*{proposition*}{Proposition}
\newtheorem{lemma}[theorem]{Lemma}
\newtheorem*{lemma*}{Lemma}
\newtheorem{corollary}[theorem]{Corollary}
\newtheorem*{conjecture*}{Conjecture}
\newtheorem*{fact*}{Fact}
\newtheorem*{hypothesis*}{Hypothesis}
\theoremstyle{definition}
\newtheorem{definition}[theorem]{Definition}
\theoremstyle{remark}
\newtheorem*{claim*}{Claim}
\newtheorem{remark}[theorem]{Remark}
\newtheorem*{remark*}{Remark}
\newtheorem*{observation*}{Observation}
\newtheorem*{algorithm*}{Algorithm}
\let\mathbb\varmathbb
\crefname{lemma}{Lemma}{Lemmas}
\crefname{definition}{Definition}{Definitions}
\newcommand{\Sref}[1]{\hyperref[#1]{\S\ref*{#1}}}
\newcommand{\Authornote}[2]{{\sffamily\small\color{red}{[#1: #2]}}}
\newcommand{\Authornotecolored}[3]{{\sffamily\small\color{#1}{[#2: #3]}}}
\newcommand{\Authorcomment}[2]{{\sffamily\small\color{gray}{[#1: #2]}}}
\newcommand{\Authorstartcomment}[1]{\sffamily\small\color{gray}[#1: }
\newcommand{\Authorfnote}[2]{\footnote{\color{red}{#1: #2}}}
\newcommand{\Authorfixme}[1]{\Authornote{#1}{\textbf{??}}}
\newcommand{\Authormarginmark}[1]{\marginpar{\textcolor{red}{\fbox{\Large #1:!}}}}
\newcommand{\Authornote}[2]{}
\newcommand{\Authornotecolored}[3]{}
\newcommand{\Authorcomment}[2]{}
\newcommand{\Authorstartcomment}[1]{}
\newcommand{\Authorfnote}[2]{}
\newcommand{\Authorfixme}[1]{}
\newcommand{\Authormarginmark}[1]{}
\newcommand{\Dnote}{\Authornote{D}}
\definecolor{forestgreen(traditional)}{rgb}{0.0, 0.27, 0.13}
\newcommand{\paren}[1]{(#1)}
\newcommand{\Paren}[1]{\left(#1\right)}
\newcommand{\abs}[1]{\lvert#1\rvert}
\newcommand{\card}[1]{\lvert#1\rvert}
\newcommand{\norm}[1]{\lVert#1\rVert}
\newcommand{\Norm}[1]{\left\lVert#1\right\rVert}
\newcommand{\iprod}[1]{\langle#1\rangle}
\newcommand{\Esymb}{\mathbb{E}}
\newcommand{\Psymb}{\mathbb{P}}
\DeclareMathOperator*{\E}{\Esymb}
\DeclareMathOperator*{\ProbOp}{\Psymb}
\renewcommand{\Pr}{\ProbOp}
\newcommand{\textparen}[1]{\text{(#1)}}
\newcommand{\because}[1]{\textparen{because #1}}
\renewcommand{\because}[1]{\textparen{because #1}}
\newcommand{\super}[2]{#1^{\paren{#2}}}
\newcommand{\vbig}{\vphantom{\bigoplus}}
\newcommand{\from}{\colon}
\newcommand\bdot\bullet
\newcommand{\R}{\mathbb R}
\newcommand{\cA}{\mathcal A}
\newcommand{\cP}{\mathcal P}
\newcommand{\cR}{\mathcal R}
\renewcommand{\leq}{\leqslant}
\renewcommand{\le}{\leqslant}
\renewcommand{\geq}{\geqslant}
\renewcommand{\ge}{\geqslant}
\newcommand{\draftbox}{\begin{center}
  \fbox{%
    \begin{minipage}{2in}%
      \begin{center}%
          \Large\textsc{Working Draft}\\%
        Please do not distribute%
      \end{center}%
    \end{minipage}%
  }%
\end{center}
\vspace{0.2cm}}
\newcommand{\draftbox}{}
\let\epsilon=\varepsilon
\numberwithin{equation}{section}
\newcommand\MYcurrentlabel{xxx}
\newcommand{\MYstore}[2]{%
  \global\expandafter \def \csname MYMEMORY #1 \endcsname{#2}%
}
\newcommand{\MYload}[1]{%
  \csname MYMEMORY #1 \endcsname%
}
\newcommand{\MYnewlabel}[1]{%
  \renewcommand\MYcurrentlabel{#1}%
  \MYoldlabel{#1}%
}
\newcommand{\MYdummylabel}[1]{}
\newcommand{\torestate}[1]{%
  \let\MYoldlabel\label%
  \let\label\MYnewlabel%
  #1%
  \MYstore{\MYcurrentlabel}{#1}%
  \let\label\MYoldlabel%
}
\newcommand{\restatetheorem}[1]{%
  \let\MYoldlabel\label
  \let\label\MYdummylabel
  \begin{theorem*}[Restatement of \cref{#1}]
    \MYload{#1}
  \end{theorem*}
  \let\label\MYoldlabel
}
\newcommand{\restatelemma}[1]{%
  \let\MYoldlabel\label
  \let\label\MYdummylabel
  \begin{lemma*}[Restatement of \cref{#1}]
    \MYload{#1}
  \end{lemma*}
  \let\label\MYoldlabel
}
\newcommand{\restateprop}[1]{%
  \let\MYoldlabel\label
  \let\label\MYdummylabel
  \begin{proposition*}[Restatement of \cref{#1}]
    \MYload{#1}
  \end{proposition*}
  \let\label\MYoldlabel
}
\newcommand{\restatefact}[1]{%
  \let\MYoldlabel\label
  \let\label\MYdummylabel
  \begin{fact*}[Restatement of \cref{#1}]
    \MYload{#1}
  \end{fact*}
  \let\label\MYoldlabel
}
\newcommand{\restate}[1]{%
  \let\MYoldlabel\label
  \let\label\MYdummylabel
  \MYload{#1}
  \let\label\MYoldlabel
}
\newcommand{\addreferencesection}{
  \phantomsection
  \addcontentsline{toc}{section}{References}
}
\newcommand{\e}{\epsilon}
\let\origparagraph\paragraph
\renewcommand{\paragraph}[1]{\origparagraph{#1.}}
\newmdtheoremenv{framedalgorithm}[theorem]{Algorithm}
\newenvironment{keywords}{\noindent\textbf{Keywords:}}{}
\DeclareMathOperator{\Id}{\mathrm{Id}}
\DeclareUrlCommand\email{}
\DeclareMathOperator*{\pE}{\tilde{\mathbb E}}
\newcommand*{\transpose}[1]{{#1}{}^{\mkern-4mu\intercal}}
\newcommand*{\dyad}[1]{#1#1{}^{\mkern-4mu\intercal}}
\title{Exact tensor completion with sum-of-squares}
\author{%
  Aaron Potechin\thanks{Institute for Advanced Study.
    Supported by the Simons Collaboration for Algorithms and Geometry and by the NSF under agreement No. CCF-1412958.
    Part of this work was done while at Cornell University. }
  \and
  David Steurer\thanks{Institute for Advanced Study and Cornell University,
    \protect\email{dsteurer@cs.cornell.edu}.
    Supported by a Microsoft Research Fellowship, a Alfred P. Sloan Fellowship, NSF awards (CCF-1408673,CCF-1412958,CCF-1350196), and the Simons Collaboration for Algorithms and Geometry.}}
\begin{document}

\maketitle
\draftbox
\thispagestyle{empty}

\begin{abstract}
We obtain the first polynomial-time algorithm for exact tensor completion that improves over the bound implied by reduction to matrix completion.
The algorithm recovers an unknown 3-tensor with $r$ incoherent, orthogonal components in $\mathbb R^n$ from $r\cdot \tilde O(n^{1.5})$ randomly observed entries of the tensor.  
This bound improves over the previous best one of $r\cdot \tilde O(n^{2})$ by reduction to exact matrix completion.
Our bound also matches the best known results for the easier problem of approximate tensor completion (Barak \& Moitra, 2015).

Our algorithm and analysis extends seminal results for exact matrix completion (Candes \& Recht, 2009) to the tensor setting via the sum-of-squares method.
The main technical challenge is to show that a small number of randomly chosen monomials are enough to construct a degree-3 polynomial with precisely planted orthogonal global optima over the sphere and that this fact can be certified within the sum-of-squares proof system.
\end{abstract}

\begin{keywords}
  tensor completion,
  sum-of-squares method,
  semidefinite programming,
  exact recovery,
  matrix polynomials,
  matrix norm bounds
\end{keywords}

\clearpage

\ifnum\showtableofcontents=1
{
\tableofcontents
\thispagestyle{empty}
 }
\fi

\clearpage

\setcounter{page}{1}

\section{Introduction}
\label{sec:introduction}

A basic task in machine learning and signal processing is to infer missing data from a small number of observations about the data.
An important example is \emph{matrix completiton} which asks to recover an unknown low-rank matrix from a small number of observed entries.
This problem has many interesting applications---one of the prominent original motivations was the Netflix Prize that sought improved algorithms for predicting user ratings for movies from a small number of user-provided ratings.
After an extensive research effort \cite{DBLP:journals/focm/CandesR09,DBLP:journals/tit/CandesT10,DBLP:conf/nips/KeshavanMO09,DBLP:conf/colt/SrebroS05}, efficient algorithms with almost optimal, provable recovery guarantees have been obtained:
In order to efficiently recover an unknown incoherent $n$-by-$n$ matrix of rank $r$ it is enough to observe $r\cdot \tilde O(n)$ random entries of the matrix \cite{DBLP:journals/tit/Gross11,DBLP:journals/jmlr/Recht11}.
One of the remaining challenges is to obtain algorithm for the more general and much less understood \emph{tensor completion problem} where the observations do not just consist of pairwise correlations but also higher-order ones.

Algorithms and analyses for matrix and tensor completion come in three flavors:
\begin{compactenum}%
\item algorithms analyzed by statistical learning tools like Rademacher complexity
\cite{DBLP:conf/colt/SrebroS05, DBLP:conf/colt/BarakM16}.
\item iterative algorithms like alternating minimization \cite{DBLP:conf/stoc/JainNS13,DBLP:conf/focs/Hardt14,DBLP:conf/colt/HardtW14}.
\item algorithms analyzed by constructing dual certificates for convex programming relaxations \cite{DBLP:journals/focm/CandesR09,DBLP:journals/tit/Gross11,DBLP:journals/jmlr/Recht11}.
\end{compactenum}
\Dnote{}
While each of these flavors have different benefits, typically only algorithms of the third flavor achieve exact recovery.
(The only exceptions to this rule we are aware of are a recent fast algorithm for matrix completion \cite{DBLP:conf/colt/0002N15} and a recent analysis \cite{DBLP:journals/corr/GeLM16} showing that the commonly used non-convex objective function for positive semidefinite matrix completion has no spurious local minima and thus stochastic gradient descent and other popular optimization programs can solve positive semidefinite matrix completion with arbitrary initialization.)
For all other algorithms, the analysis exhibits a trade-off between reconstruction error and the required number of observations (even when there is no noise in the input).\footnote{We remark that this trade-off is a property of the analysis and not necessarily the algorithm.
  For example, some algorithms of the first flavor are based on the same convex programming relaxations as exact recovery algorithms.
  Also for iterative algorithm, the trade-off between reconstruction error and number of sample comes from the requirement of the analysis that each iteration uses fresh samples.
  For these iterative algorithms, the number of samples depends only logarithmically on the desired accuracy, which means that these analyses imply exact recovery if the bit complexity of the entries is small.
}

In this work, we obtain the first algorithm for exact tensor completion that improves over the bounds implied by reduction to exact matrix completion.
The algorithm recovers an unknown 3-tensor with $r$ incoherent, orthogonal components in $\R^n$ from $r\cdot \tilde O(n^{1.5})$ randomly observed entries of the tensor.  
The previous best bound for exact recovery is $r\cdot \tilde O(n^{2})$, which is implied by reduction to exact matrix completion.
(The reduction views 3-tensor on $\R^n$ as an $n$-by-$n^2$ matrix.
We can recover rank-$r$ matrices of this shape from $r\cdot \tilde O(n^2)$ samples, which is best possible.)
Our bound also matches the best known results for the easier problem of approximate tensor completion \cite{DBLP:conf/nips/0002O14,DBLP:journals/corr/BhojanapalliS15,DBLP:conf/colt/BarakM16} (the results of the last work also applies to a wider range of tensors and does not require orthogonality).

A problem similar to matrix and tensor completion is matrix and tensor sensing.
The goal is to recover an unknown low rank matrix or tensor from a small number of linear measurements.
An interesting phenomenon is that for carefully designed measurements (which actually happen to be rank 1) it is possible to efficiently recover a $3$-tensor of rank $r$ with just $O(r^2\cdot n)$ measurements \cite{DBLP:conf/stoc/ForbesS12}, which is better than the best bounds for tensor completion when $r\ll n^{0.5}$.
We conjecture that for tensor completion from random entries the bound we obtain is up to logarithmic factors best possible among polynomial-time algorithms.

\paragraph{Sum-of-squares method}

Our algorithm is based on \emph{sum-of-squares} \cite{MR931698-Shor87,Parrilo00,MR1814045-Lasserre00}, a very general and powerful meta-algorithm studied extensively in many scientific communities (see for example the survey \cite{DBLP:journals/eccc/BarakS14}).
In theoretical computer science, the main research focus has been on the capabilities of sum-of-squares for approximation problems  \cite{DBLP:conf/stoc/BarakBHKSZ12}, especially in the context of Khot's Unique Games Conjecture \cite{DBLP:conf/stoc/Khot02a}.
More recently, sum-of-squares emerged as a general approach to inference problems that arise in machine learning and have defied other algorithmic techniques.  
This approach has lead to improved algorithms for tensor decomposition \cite{DBLP:conf/stoc/BarakKS15, DBLP:conf/approx/GeM15, DBLP:conf/stoc/HopkinsSSS16, DBLP:journals/corr/MaSS16}, dictionary learning \cite{DBLP:conf/stoc/BarakKS15,DBLP:journals/corr/HazanM16}, tensor principal component analysis \cite{DBLP:conf/colt/HopkinsSS15, DBLP:journals/corr/RaghavendraRS16, DBLP:journals/corr/BhattiproluGL16}, planted sparse vectors \cite{DBLP:conf/stoc/BarakKS14, DBLP:conf/stoc/HopkinsSSS16}.
An exciting direction is also to understand limitations of sum-of-squares for inference problems on concrete input distributions \cite{DBLP:conf/nips/MaW15, DBLP:conf/colt/HopkinsSS15,DBLP:journals/corr/BarakHKKMP16}.

An appealing feature of the sum-of-squares method is that its capabilities and limitations can be understood through the lens of a simple but surprisingly powerful and intuitive restricted proof system called sum-of-squares or Positivstellensatz system \cite{DBLP:journals/apal/GrigorievV01,DBLP:journals/cc/Grigoriev01,DBLP:journals/tcs/Grigoriev01}.
A conceptual contribution of this work is to show that seminal results for inference problem like compressed sensing and matrix completion have natural interpretations as \emph{identifiability proofs} in this system.
Furthermore, we show that this interpretation is helpful in order to analyze more challenging inference problems like tensor completion.
A promising future direction is to find more examples of inference problems where this lens on inference algorithms and identifiability proofs yields stronger provable guarantees.

A technical contribution of our work is that we develop techniques in order to show that sum-of-squares achieves exact recovery.
Most previous works only showed that sum-of-squares gives approximate solutions, which in some cases can be turned to exact solutions by invoking algorithms with local convergence guarantees~\cite{DBLP:conf/approx/GeM15,DBLP:conf/stoc/BarakKS14} or solving successive sum-of-squares relaxations~\cite{DBLP:journals/corr/MaSS16}.

\subsection{Results}

We say that a vector $v\in \R^n$ is $\mu$-incoherent with respect to the coordinate basis $e_1,\ldots,e_n$ if for every index $i\in[n]$,
\begin{equation}
  \iprod{e_i,v}^2 \le \tfrac \mu n \cdot \norm{v}^2\,.
\end{equation}

We say that a 3-tensor $X\in \R^{n}\otimes \R^{n}\otimes \R^{n}$ is orthogonal of rank $r$ if there are orthogonal vectors $\{u_i\}_{i\in [r]}\subseteq \R^{n}$, $\{v_i\}_{i\in [r]}\subseteq \R^{n}$, $\{w_i\}_{i\in [r]}\subseteq \R^{n}$ such that $X=\sum_{i=1}^r u_i\otimes v_i\otimes w_i$.
We say that such a 3-tensor $X$ is $\mu$-incoherent if all of the vectors $u_i,v_i,w_i$ are $\mu$-incoherent.

\begin{theorem}[main]
  \label{thm:main}
  There exists a polynomial-time algorithm that given at least $r\cdot \mu^{O(1)} \cdot \tilde O(n)^{1.5}$ random entries of an unknown orthogonal $\mu$-incoherent $3$-tensor $X\in \R^n\otimes \R^n\otimes \R^n$ of rank $r$, outputs all entries of $X$ with probability at least $1-n^{-\omega(1)}$.
\end{theorem}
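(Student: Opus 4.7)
The plan is to cast tensor completion as polynomial optimization over the unit sphere and to show that a constant-degree sum-of-squares relaxation recovers the tensor components exactly. Write the unknown tensor as $X=\sum_{i=1}^r u_i\otimes v_i\otimes w_i$ and let each index be observed independently with probability $p \sim r\mu^{O(1)}/n^{1.5}$. From the observations I form the random degree-$3$ polynomial
\begin{equation*}
  P(x,y,z)\;=\;\tfrac{1}{p}\sum_{(i,j,k)\in\Omega} X_{ijk}\,x_iy_jz_k\mcom
\end{equation*}
whose expectation equals $\sum_i\iprod{u_i,x}\iprod{v_i,y}\iprod{w_i,z}$ and whose maximizers over the product of unit spheres are, by orthogonality, exactly the $r$ planted triples $(u_i/\|u_i\|,v_i/\|v_i\|,w_i/\|w_i\|)$. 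The algorithm solves the pseudo-moment SDP that maximizes $P$ subject to $\|x\|^2=\|y\|^2=\|z\|^2=1$ at constant degree and then extracts the components from the resulting pseudo-distribution.

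To argue correctness I would follow the \emph{identifiability-within-SoS} paradigm emphasized in the introduction: exhibit a constant-degree sum-of-squares proof that the planted triples are the only pseudo-distribution maximizers of $P$. Concretely, I build a dual certificate in the spirit of the Cand\`es--Recht nuclear-norm certificate for matrix completion, but lifted into the SoS proof system. The target is a polynomial identity of the form
\begin{equation*}
  \mathrm{opt}-P\;=\;\sigma_0 \;+\; \sigma_1(1-\|x\|^2) \;+\; \sigma_2(1-\|y\|^2) \;+\; \sigma_3(1-\|z\|^2)\mcom
\end{equation*}
where $\sigma_0$ is an explicit sum of squares that vanishes precisely on the planted triples. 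Mirroring the Cand\`es--Recht--Gross golfing scheme, I construct $\sigma_0$ iteratively: split $\Omega$ into $O(\log n)$ independent batches and at each round use a fresh batch to subtract off the current residual in the certificate, while staying exactly correct on the $r$ planted directions inside the SoS proof system.

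The main technical obstacle---and the source of the improvement from the matrix-completion bound $\tilde O(n^2)$ down to $\tilde O(n^{1.5})$---is proving sharp matrix-norm bounds on the random polynomial matrices that appear as corrections in the golfing iterations. The relevant matrices live on $\R^n\tensor\R^n$; their naive spectral norm is too large by a factor of $\sqrt n$ to give the desired sample complexity, but they act only on degree-$2$ pseudo-moments of unit-sphere distributions, which carry a tighter SoS-compatible norm. I would prove the necessary concentration via a moment / trace-method argument combined with decoupling, and, crucially, write each concentration statement as an explicit sum-of-squares matrix inequality so that it can be composed with the dual-certificate identity. Keeping these SoS matrix inequalities consistent across the $O(\log n)$ golfing rounds, and in particular driving the residual all the way to zero rather than to a small $\eps$, is where I expect the hardest and most delicate arguments to live; in the approximate setting of Barak--Moitra the analogous bounds only need to hold up to accuracy $\eps$, and upgrading to exact identifiability is the main new ingredient.

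Assuming the dual certificate is in place, extraction of $X$ is then routine: the identifiability proof forces low-degree pseudo-moments such as $\pE[x_iy_jz_k]$ to equal a convex combination of the planted monomials, so orthogonality and incoherence let one read off each $u_i,v_i,w_i$ by standard SoS rounding---for instance a Jennrich-style simultaneous diagonalization of two random contractions of the degree-$4$ moment tensor. Reassembling $X=\sum_i u_i\otimes v_i\otimes w_i$ then reveals every entry.
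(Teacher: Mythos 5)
Your high-level playbook is right—golfing to build a dual certificate, matrix norm bounds via the trace moment method, SoS as the vehicle for turning an identifiability proof into an algorithm—but there is a concrete mismatch at the center of the proposal.

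You set the algorithm to \emph{maximize} the empirical estimate $P(x,y,z) = \tfrac{1}{p}\sum_{(i,j,k)\in\Omega}X_{ijk}x_iy_jz_k$ over the product of unit spheres, and you then want an SoS identity $\mathrm{opt}-P=\sigma_0+\sum_\ell \sigma_\ell(1-\|\cdot\|^2)$ with $\sigma_0$ vanishing exactly on the planted triples. But $P$ is a random polynomial whose constrained maximizers are only approximately the planted triples: the sampling noise perturbs them off $\{(u_i,v_i,w_i)\}$, so no choice of $\sigma_0$ can vanish exactly there while summing to $\mathrm{opt}-P$. This is essentially why objective-maximization analyses (Barak--Moitra style, using Rademacher complexity) give approximate recovery but not exact recovery. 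The paper's algorithm is different: it \emph{minimizes} a nuclear-norm-style objective $\pE[\|x\|^2+\|y\|^2\|z\|^2]$ subject to the hard constraint $(\pE\,x\otimes y\otimes z)_\Omega=X_\Omega$, and the dual certificate $T$ is \emph{not} the empirical estimate $P$ but a distinct tensor built by the golfing iteration starting from $X$ itself (Eq.~\eqref{eq:construction}). The defining properties are that $T$ is supported on $\Omega$, satisfies $T(u_i,v_i,w_i)=1$ exactly, and has its off-planted mass controlled by an SoS/spectral norm bound; because the pseudo-moments match $X$ on $\Omega$ and $T$ is supported on $\Omega$, one gets $\pE\,T(x,y,z)=\iprod{T,X}$ and complementary slackness forces $\pE\,x\otimes y\otimes z=X$. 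You seem to be conflating the three roles (empirical estimator, golfed dual certificate, and SoS multiplier $\sigma_0$) into one object; without separating them, the proof does not close.

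Two smaller points. First, the final Jennrich-style rounding you describe is unnecessary under the paper's setup, and it would also not deliver exact recovery on its own: once the certificate argument pins down $\pE\,x\otimes y\otimes z=X$, one simply outputs that third pseudo-moment. Second, the paper's main technical difficulty is that the spectral condition depends \emph{quadratically} on the constructed $T$ (one must bound $\sum_a T_a\otimes \transpose{T_a}-\sum_i\dyad{(v_i\otimes w_i)}$ after $O(\log n)$ golfing rounds), not merely that the residual has to be driven to zero; you correctly flag that the trace method with decoupling is needed, but the quadratic dependence is the specific reason generic matrix concentration (e.g.\ matrix Bernstein) does not suffice and the bespoke trace-power calculation in \cref{normmethodssection} is required.
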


We note that the analysis also shows that the algorithm is robust to inverse polynomial amount of noise in the input (resulting in inverse polynomial amount of error in the output).

We remark that the running time of the algorithm depends polynomially on the bit complexity on $X$.

\Dnote{}

\section{Techniques}
\label{sec:techniques}

Let $\{u_i\}_{i\in [r]},\{v_i\}_{i\in [r]},\{w_i\}_{i\in [r]}$ be three orthonormal sets in $\R^n$.
Consider a 3-tensor $X\in \R^n\otimes \R^n\otimes \R^n$ of the form $X=\sum_{i=1}^r \lambda _i \cdot u_i\otimes v_i \otimes w_i$ with $\lambda_1,\ldots,\lambda_n\ge 0$.
Let $\Omega\subseteq [n]^3$ be a subset of the entries of $X$.

Our goal is to efficiently reconstruct the unknown tensor $X$ from its restriction $X_\Omega$ to the entries in $\Omega$.
Ignoring computational efficiency, we first ask if this task is information-theoretically possible.
More concretely, for a given set of observations $X_\Omega$, how can we rule out that there exists another rank-$r$ orthogonal $3$-tensor $X'\neq X$ that would give rise to the same observations $X'_\Omega=X_\Omega$?\footnote{We emphasize that we ask here about the uniqueness of $X$ for a fixed set of entries $\Omega$.
  This questions differs from asking about the uniqueness for a random set of entries, which could be answered by suitably counting the number of low-rank $3$-tensors.}

A priori it is not clear how an answer to this information-theoretic question could be related to the goal of obtaining an efficient algorithm.
However, it turns out that the sum-of-squares framework allows us to systematically translate a uniqueness proof to an algorithm that efficiently finds the solution.
(In addition, this solution also comes with a short certificate for uniqueness.\footnote{This certificate is closely related to certificates in the form of dual solutions for convex programming relaxations that are used in the compressed sensing and matrix completion literature.})

\paragraph{Uniqueness proof}
\label{sec:uniqueness}
Let $\Omega\subseteq [n]^3$ be a set of entries and let $X=\sum_{i=1}^r \lambda_i \cdot u_i\otimes v_i\otimes w_i$ be a 3-tensor with $\lambda_1,\ldots,\lambda_r\ge 0$.

It turns out that the following two conditions are enough to imply that $X_\Omega$ uniquely determines $X$:
The first condition is that the vectors $\{(u_i\otimes v_i \otimes w_i)_\Omega\}$ are linearly independent.
The second condition is that exists a 3-linear form $T$ on $\R^n$ with the following properties:
\begin{enumerate}
\item in the monomial basis $T$ is supported on $\Omega$ so that  $T(x,y,z)=\sum_{(i,j,k)\in \Omega} T_{ijk} \cdot x_iy_jx_k$,
\item evaluated over unit vectors, the 3-form $T$ is exactly maximized at the points $(u_i,v_i,w_i)$ so that $T(u_1,v_1,w_1)=\dots=T(u_r,v_r,w_r)=1$ and ${T(x,y,z)}<1$ for all unit vectors $(x,y,z) \not\in \{(u_i,v_i,w_i)\mid i\in[r]\}$.
\end{enumerate}

We show that the two deterministic conditions above are satisfied with high probability if the vectors $\{u_i\},\{v_i\},\{w_i\}$ are incoherent and $\Omega$ is a random set of entries of size at least $r\cdot \tilde O(n^{1.5})$.

Let us sketch the proof that such a 3-linear form $T$ indeed implies uniqueness.
Concretely, we claim that if we let $X'$ be a 3-tensor of the form $\sum_{i=1}^{r'} {\lambda'_i} \cdot u'_i\otimes v'_i\otimes w'_i$ for $\lambda_1',\ldots,\lambda'_{r'}\ge 0$ and unit vectors $\{u'_i\}$, $\{v'_i\}$, $\{w'_i\}$ with $X'_\Omega=X_\Omega$ that minimizes $\sum_{i=1}^{r'} \abs{\lambda'_i}$ then $X'=X$ must hold.
We identify $T$ with an element of $\R^n\otimes \R^n\otimes \R^n$ (the coefficient tensor of $T$ in the monomial basis).
Let $X'$ be as before.
We are to show that $X=X'$.
On the one hand, using that $T(x,y,z)\le 1$ for all unit vectors $x,y,z$,
\begin{displaymath}
  \iprod{T,X'} = \sum_{i=1}^{r'} \lambda_i' \cdot T(u'_i,v'_i,w'_i)
  \le \sum_{i=1}^{r'}\lambda'_i\,.
\end{displaymath}
At the same time, using that $T$ is supported on $\Omega$ and the fact that $X_{\Omega}=X'_\Omega$,
\begin{displaymath}
  \iprod{T,X'}
  = \iprod{T,X}
  = \sum_{i=1}^r \lambda_i \cdot T(u_i,v_i,w_i) =\sum_{i=1}^r \lambda_i\,.
\end{displaymath}
Since $X'$ minimizes $\sum_{i=1}^{r'}\lambda'_i$, equality has to hold in the previous inequality.
It follows that every point $(u'_i,v'_i,w'_i)$ is equal to one of the points $(u_j,v_j,w_j)$, because $T$ is uniquely maximized at the points $\{(u_i,v_i,w_i)\mid i\in[r]\}$.
Since we assumed that $\{(u_i\otimes v_i \otimes w_i)_\Omega\}$ is linearly independent, we can conclude that $X=X'$.

When we show that such a 3-linear form $T$ exists, we will actually show something stronger, namely that the second property is not only true but also has a short certificate in form of a ``degree-4 sum-of-squares proof'', which we describe next.
This certificate also enables us to efficiently recover the missing tensor entries.

\paragraph{Uniqueness proof in the sum-of-squares system}

A degree-4 sos certificate for the second property of $T$ is an $(n+n^2)$-by-$(n+n^2)$ positive-semidefinite matrix $M$ (acting as a linear operator on $\R^n\oplus (\R^n \otimes \R^n)$) that represents the polynomial $\norm{x}^{2} + \norm{y}^2 \cdot \norm{z}^2 - 2 T(x,y,z)$, i.e.,
\begin{equation}
  \iprod{(x,y\otimes z), M (x,y\otimes z)} = \norm{x}^{2} + \norm{y}^2 \cdot \norm{z}^2 - 2 T(x,y,z)\,.
\end{equation}
Furthermore, we require that the kernel of $M$ is precisely the span of the vectors $\{(u_i,v_i\otimes w_i)\mid i\in [r]\}$.
Let's see that this matrix $M$ certifies that $T$ has the property that over unit vectors it is exactly maximized at the desired points $(u_i,v_i,w_i)$.
Let $u,v,w$ be unit vectors such that $(u,v,w)$ is not a multiple of one of the vectors $(u_i,v_i,w_i)$.
Then by orthogonality, both $(u,v\otimes w)$ and  $(-u,v\otimes w)$ have non-zero projection on the orthogonal complement of the kernel of $M$.
Therefore, the bounds $0<\iprod{(u,v\otimes w),M(u,v\otimes w)}=2-2p(u,v,w)$ and $0<\iprod{(-u,v\otimes w),M(-u,v\otimes w)}=2+2p(u,v,w)$ together give the desired conclusion that $\abs{T(u,v,w)}<1$.

\paragraph{Reconstruction algorithm based on the sum-of-squares system}

The existence of a positive semidefinite matrix $M$ as above not only means that reconstruction of $X$ from $X_\Omega$ is possible information-theoretically but also efficiently.
The sum-of-squares algorithm allows us to efficiently search over low-degree moments of objects called \emph{pseudo-distributions} that generalize probability distributions over real vector spaces.
Every pseudo-distribution $\mu$ defines \emph{pseudo-expectation values} $\pE_\mu f$ for all low-degree polynomial functions $f(x,y,z)$, which behave in many ways like expectation values under an actual probability distribution.
In order to reconstruct $X$ from the observations $X_\Omega$, we use the sum-of-squares algorithm to efficiently find a pseudo-distribution $\mu$ that satisfies\footnote{The viewpoint in terms of pseudo-distributions is useful to see how the previous uniqueness proof relates to the algorithm.
  We can also describe the solutions to the constraints \cref{eq:search-1,eq:search-2} in terms of linearly constrained positive semidefinite matrices. See alternative description of \cref{alg:tensor}
}
\begin{gather}
  \pE_{\mu(x,y,z)} \norm{x}^2 + \norm{y}^2\cdot \norm{z}^2 \le 1
  \label{eq:search-1}
 \\
 \Paren{\pE_{\mu(x,y,z)} x \otimes y \otimes z }_\Omega= X_\Omega
 \label{eq:search-2}
\end{gather}
Note that the distribution over the vectors $(u_i,v_i,w_i)$ with probabilities $\lambda_i$ satisfies the above conditions.
Our previous discussion about uniqueness shows that the existence of a positive semidefinite matrix $M$ as above implies no other distribution satisfies the above conditions.
It turns out that the matrix $M$ implies that this uniqueness holds even among pseudo-distributions in the sense that any pseudo-distribution that satisfies \cref{eq:search-1,eq:search-2} must satisfy $\pE_{\mu(x,y,z)} x\otimes y\otimes z=X$, which means that the reconstruction is successful.\footnote{The matrix $M$ can also be viewed as a solution to the dual of the convex optimization problem of finding a pseudo-distribution that satisfies conditions \cref{eq:search-1,eq:search-2}.}

\origparagraph{When do such uniqueness certificates exist?}
The above discussion shows that in order to achieve reconstruction it is enough to show that uniqueness certificates of the form above exist.
We show that these certificates exists with high probability if we choose $\Omega$ to be a large enough random subset of entries (under suitable assumptions on $X$).
Our existence proof is based on a randomized procedure to construct such a certificate heavily inspired by similar constructions for matrix completion \cite{DBLP:journals/tit/Gross11, DBLP:journals/jmlr/Recht11}.
(We note that this construction uses the unknown tensor $X$ and is therefore not ``constructive'' in the context of the recovery problem.)

Before describing the construction, we make the requirements on the 3-linear form $T$ more concrete.
We identify $T$ with the linear operator from $\R^n\otimes \R^n$ to $\R^n$ such that $T(x,y,z)=\iprod{x, T (y\otimes z)}$.
Furthermore, let $T_a$ be linear operators on $\R^n$ such that $T(x,y,z)=\sum_{a=1}^n x_a\cdot \iprod{y,T_a z}$.
Then, the following conditions on $T$ imply the existence of a uniqueness certificate $M$ (which also means that recover succeeds),
\begin{enumerate}
\item every unknown entry $(i,j,k)\not\in \Omega$ satisfies $\iprod{e_i,T (e_j\otimes e_k)}=0$,
 
\item every index $i\in [r]$ satisfies $u_i = T (v_i\otimes w_i)$,

\item the matrix $\sum_{a=1}^n T_a \otimes \transpose{T_a} - \sum_{i=1}^r \dyad{(v_i\otimes w_i)}$ has spectral norm at most $0.01$.
\end{enumerate}
We note that the uniqueness certificates for matrix completion \cite{DBLP:journals/tit/Gross11, DBLP:journals/jmlr/Recht11} have similar requirements.
The key difference is that we need to control the spectral norm of an operator that depends quadratically on the constructed object $T$ (as opposed to a linear dependence in the matrix completion case).
Combined with the fact that the construction of $T$ is iterative (about $\log n$ steps), the spectral norm bound unfortunately requires significant technical work.
In particular, we cannot apply general matrix concentration inequalities and instead apply the trace moment method.
(See \cref{normmethodssection}.)

We also note that the fact that the above requirements allow us to construct the certifcate $M$ is not immediate and requires some new ideas about matrix representations of polynomials, which might be useful elsewhere.
(See \cref{sec:zero-matching}.)

Finally, we note that the transformation applied to $T$ in order to obtain the matrix for the third condition above appears in many works about 3-tensors \cite{DBLP:conf/colt/HopkinsSS15, DBLP:conf/colt/BarakM16} with the earliest appearance in a work on refutation algorithms for random 3-SAT instances (see \cite{DBLP:journals/toc/FeigeO07}).

The iterative construction of the linear operator $T$ exactly follows the recipe from matrix completion \cite{DBLP:journals/tit/Gross11, DBLP:journals/jmlr/Recht11}.
Let $\cR_\Omega$ be the projection operator into the linear space of operators $T$ that satify the first requirement.
Let $\cP_T$ be the (affine) projection operator into the affine linear space of operators $T$ that satisfy the second reqirement.
We start with $\super T 0=X$.
At this point we satisfy the second condition.
(Also the matrix in the third condition is $0$.)
In order to enforce the first condition we apply the operator $\cR_\Omega$.
After this projection, the second condition is most likely no longer satisfied.
To enforce the second condition, we apply the affine linear operator $\cP_T$ and obtain $\super T 1 = \cP_T(\cR_\Omega X)$.
The idea is to iterate this construction and show that after a logarithmic number of iterations both the first and second condition are satisfied up to an inverse polynomially small error (which we can correct in a direct way).
The main challenge is to show that the iterates obtained in this way satisfy the desired spectral norm bound.
(We note that for technical reasons the construction uses fresh randomness $\Omega$ for each iteration like in the matrix completion case \cite{DBLP:journals/jmlr/Recht11,DBLP:journals/tit/Gross11}.
Since the number of iterations is logarithmic, the total number of required observations remains the same up to a logarithmic factor.)

\section{Preliminaries}
\label{sec:preliminaries}

Unless explicitly stated otherwise, $O(\cdot)$-notation hides absolute multiplicative constants.
Concretely, every occurrence of $O(x)$ is a placeholder for some function $f(x)$ that satisfies $\forall x\in \R.\, \abs{f(x)}\le C\abs{x}$ for some absolute constant $C>0$.
Similarly, $\Omega(x)$ is a placeholder for a function $g(x)$ that satisfies $\forall x\in \R.\, \abs{g(x)} \ge \abs{x}/C$ for some absolute constant $C>0$.

\newcommand{\support}{\mathrm{support}}

Our algorithm is based on a generalization of probability distributions over $\R^n$.
To define this generalization the following notation for the formal expectation of a function $f$ on $\R^n$ with respect to a finitely-supported function $\mu\from \R^n\to \R$,
$$
\pE_{\mu} f = \sum_{x\in \support(\mu)} \mu(x)\cdot f(x)\,.
$$
A \emph{degree-$d$ pseudo-distribution over $\R^n$} is a finitely-supported function $\mu\from \R^n\to \R$ such that $\pE_\mu 1 = 1$ and $\pE_\mu f^2\ge 0$ for every polynomial $f$ of degree at most $d/2$.

A key algorithmic property of pseudo-distributions is that their low-degree moments have an efficient separation oracle.
Concretely, the set of degree-$d$ moments $\pE_\mu (1,x)^{\otimes d}$ such that $\mu$ is a degree-$d$ pseudo-distributions over $\R^n$ has an $n^{O(d)}$-time separation oracle.
Therefore, standard convex optimization methods allow us to efficiently optimize linear functions over low-degree moments of pseudo-distributions (even subject to additional convex constraints that have efficient separation oracles) up to arbitrary numerical accuracy.

\section{Tensor completion algorithm}
\label{sec:algorithm}

In this section, we show that the following algorithm for tensor completion succeeds in recovering the unknown tensor from partial observations assuming the existence of a particular linear operator $T$.
We will state conditions on the unknown tensor that imply that such a linear operator exists with high probability if the observed entries are chosen at random.
We use essentially the same convex relaxation as in \cite{DBLP:conf/colt/BarakM16} but our analysis differs significantly.

\begin{framedalgorithm}[Exact tensor completion based on degree-4 sum-of-squares]\mbox{}\\
\label[algorithm]{alg:tensor}\textbf{Input:} locations $\Omega\subseteq[n]^3$ and partial observations $X_\Omega$ of an unknown $3$-tensor $X\in\R^n\otimes\R^n\otimes \R^n$.
  \\
  \textbf{Operation:}
  Find a degree-$4$ pseudo-distribution $\mu$ on $\R^n\oplus \R^n\oplus \R^n$ such that the third moment matches the observations $\Paren{\pE_{\mu(x,y,z)} x\otimes y \otimes z}_\Omega=X_\Omega$ so as to minimize
  \begin{displaymath}
    \pE_{\mu(x,y,z)}\norm{x}^2+\norm{y}^2\cdot \norm{z}^2\,.
  \end{displaymath}
  Output the $3$-tensor $\pE_{\mu(x,y,z)} x\otimes y \otimes z\in \R^n\otimes \R^n\otimes \R^n$.
  \\
  \textbf{Alternative description:}
  Output a minimum trace, positive semidefinite matrix $Y$ acting on $\R^n\oplus (\R^n \otimes \R^n)$ with blocks $Y_{1,1}$, $Y_{1,2}$ and $Y_{2,2}$ such that $(Y_{1,2})_\Omega=X_\Omega$ matches the observations, and $Y_{2,2}$ satisfies the additional symmetry constraints that each entry \begin{math}
    \iprod{e_j\otimes e_k,Y_{2,2} (e_{j'}\otimes e_{k'})}
  \end{math}
  only depends on the index sets $\{j,j'\},\{k,k'\}$.
\end{framedalgorithm}

Let $\{u_i\},\{v_i\},\{w_i\}$ be three orthonormal sets in $\R^n$, each of cardinality $r$.

We reason about the recovery guarantees of the algorithm in terms of the following notion of certifcate.

\begin{definition}
  \label[definition]{def:certificate}
  We say that a linear operator $T$ from $\R^{n}\otimes \R^n$ to $\R^n$ is a \emph{degree-4 certificate} for $\Omega$ and orthonormal sets $\{u_i\},\{v_i\},\{w_i\}\subseteq \R^n$ if the following conditions are satisfies
  \begin{enumerate}
  \item the vectors $\{(u_i\otimes v_j\otimes w_k)_\Omega \mid (i,j,k)\in S\}$ are linearly independent, where $S\subseteq[n]^3$ is the set of triples with at least two identical indices from $[r]$,
  \item every entry  $(a,b,c)\not\in \Omega$ satisfies $\iprod{e_a,T (e_b\otimes e_c)}=0$,
  \item If we view $T$ as a 3-tensor in $(\R^n)^{\otimes 3}$ whose $(a,b,c)$ entry is $\iprod{e_a,T (e_b\otimes e_c)}$, every index $i\in [r]$ satisfies $(\transpose u_i\otimes \transpose v_i \otimes \Id ) T = w_i$,  $(\transpose u_i\otimes \Id \otimes \transpose w_i ) T = v_i$, and $(\Id\otimes \transpose v_i \otimes\transpose  w_i ) T = u_i$. 
  \item  the following matrix has spectral norm at most $0.01$,
  \begin{displaymath}
    \sum_{a=1}^n T_a \otimes \transpose{T_a} - \sum_{i=1}^r \dyad{(v_i\otimes w_i)}\,,
  \end{displaymath}
  where $\{T_a\}$ are matrices such that $\iprod{x,T (y\otimes x)} =\sum_{a=1}^n x_a \cdot \iprod{y,T_a z}$.
  \end{enumerate}
\end{definition}

In \cref{sec:recovery}, we prove that existence of such certifcates implies that the above algorithm successfully recovers the unknown tensor, as formalized by the following theorem.

\begin{theorem}
  \label{thm:recovery}
  Let $X\in\R^n\otimes\R^n\otimes \R^n$ be any 3-tensor of the form $\sum_{i=1}^r \lambda_i\cdot u_i\otimes v_i\otimes w_i$ for $\lambda_1,\ldots,\lambda_r\in \R_+$.
  Let $\Omega\subseteq [n]^3$ be a subset of indices.
  Suppose there exists degree-4 certificate in the sense of \cref{def:certificate}.
  Then, given the observations $X_\Omega$ the above algorithm recovers the unknown tensor $X$ exactly.
\end{theorem}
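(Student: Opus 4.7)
The plan is to lift the uniqueness argument of \cref{sec:uniqueness} from actual distributions on orthogonal triples to all degree-4 pseudo-distributions feasible for the algorithm's convex program. The key object is a positive semidefinite matrix $M$ on $\R^n\oplus(\R^n\otimes\R^n)$ witnessing $\|x\|^2+\|y\|^2\|z\|^2 \ge 2\,T(x,y,z)$ as a degree-4 sum of squares, with kernel exactly $K:=\sspan\{(u_i,\,v_i\otimes w_i):i\in[r]\}$. I would build $M$ from the block template with $(1,1)$-block $\Id_n$, off-diagonal block $-T$ (viewed as a map from $\R^n\otimes\R^n$ to $\R^n$), and a carefully chosen $(2,2)$-block $C$ representing the polynomial $\|y\|^2\|z\|^2$. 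Condition~3 immediately gives $T(v_i\otimes w_i)=u_i$, so the first block of $M(u_i,\,v_i\otimes w_i)$ vanishes; the remaining requirement $C(v_i\otimes w_i)=\transpose{T} u_i$ is a non-trivial constraint on $C$, since $\transpose{T} u_i$ need not equal $v_i\otimes w_i$. Exploiting the non-uniqueness of matrix representations of a fixed polynomial and using condition~4 to control the spectrum, this is the zero-matching construction developed in \cref{sec:zero-matching}.

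Given such an $M$, I would tightly bound the algorithm's objective on both sides. Condition~3 implies $T(u_i,v_i,w_i)=\iprod{u_i,T(v_i\otimes w_i)}=1$ for each $i\in[r]$; combined with the fact that $T$ is supported on $\Omega$ (condition~2) and the feasibility constraint $(\pE_\mu x\otimes y\otimes z)_\Omega=X_\Omega$, this yields $\pE_\mu T(x,y,z) = \iprod{T,X} = \sum_i\lambda_i =: \Lambda$. The probability distribution placing mass $\lambda_i/\Lambda$ on the scaled triple $(\sqrt{\Lambda}\,u_i,\,\Lambda^{1/4} v_i,\,\Lambda^{1/4} w_i)$ is feasible and achieves objective $2\Lambda$, so the optimum is at most $2\Lambda$; meanwhile $M\succeq 0$ provides a degree-4 sum-of-squares proof of $\|x\|^2+\|y\|^2\|z\|^2\ge 2T(x,y,z)$, so every feasible pseudo-distribution $\mu$ satisfies $\pE_\mu[\|x\|^2+\|y\|^2\|z\|^2]\ge 2\Lambda$. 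Hence the optimum is exactly $2\Lambda$ and any optimal $\mu$ satisfies $\pE_\mu \iprod{(x,y\otimes z),\,M(x,y\otimes z)} = 0$.

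Setting $\tilde M := \pE_\mu \dyad{(x,y\otimes z)}\succeq 0$, this identity reads $\tr(M\tilde M)=0$, which for two PSD matrices forces $M\tilde M=0$; every column of $\tilde M$ therefore lies in $\ker M = K$. Reading the column of $\tilde M$ indexed by the basis vector $e_a$ of the first block shows that, with $Y := \pE_\mu x\otimes y\otimes z$, each slice $Y_{a,\cdot,\cdot}$ lies in $\sspan\{v_j\otimes w_j\}_{j\in[r]}$; reading the column indexed by $e_b\otimes e_c$ of the second block shows that each slice $Y_{\cdot,b,c}$ lies in $\sspan\{u_i\}_{i\in[r]}$. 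Projecting the first identity onto $u_{i}$ in the $a$-direction and combining with the second forces the structural form
\begin{equation*}
  Y \;=\; \sum_{i,j\in[r]} \gamma_{ij}\, u_i\otimes v_j\otimes w_j
\end{equation*}
for some scalars $\gamma_{ij}$.

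Finally, the observation constraint $Y_\Omega = X_\Omega$ reads $\sum_{i,j}(\gamma_{ij}-\lambda_i\delta_{ij})(u_i\otimes v_j\otimes w_j)_\Omega = 0$. Every triple $(i,j,j)$ appearing here has at least two equal indices from $[r]$ and so belongs to the set $S$ of condition~1, whose linear independence forces $\gamma_{ij} = \lambda_i\delta_{ij}$, yielding $Y = X$ and completing the recovery argument. I expect the hardest step to be the construction of $M$ in the first paragraph: simultaneously representing the polynomial $\|y\|^2\|z\|^2$ via the $(2,2)$-block, mapping each $v_i\otimes w_i$ to $\transpose{T} u_i$, and keeping $M$ positive semidefinite with kernel exactly $K$ is the substantive zero-matching fact that the paper develops separately.
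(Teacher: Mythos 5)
Your plan is correct and matches the paper's proof essentially step for step: construct the PSD matrix $M$ with the block template $\bigl(\begin{smallmatrix}\Id & -T\\ -\transpose{T} & C\end{smallmatrix}\bigr)$, where the $(2,2)$-block $C$ is built by the zero-matching lemma (\cref{lem:zero-matching}) applied to $R=\sum_a T_a\otimes \transpose{T_a} - \sum_i \dyad{(v_i\otimes w_i)}$; show the certificate forces $\pE_\mu\iprod{(x,y\otimes z),M(x,y\otimes z)}=0$ at optimality; deduce that the range of the moment matrix lies in $\ker M$; and close with the linear-independence of $\{(u_i\otimes v_j\otimes w_j)_\Omega\}$ from condition~1. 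The only cosmetic difference is how you extract the structural form $Y=\sum_{i,j}\gamma_{ij}\,u_i\otimes v_j\otimes w_j$: you read it off slice-by-slice (first-block columns land in $\sspan\{v_j\otimes w_j\}$, second-block columns land in $\sspan\{u_i\}$, then use orthonormality of $\{v_j\otimes w_j\}$ to combine), whereas the paper writes the whole PSD moment matrix at once as $\sum_{i,j}\gamma_{ij}(u_i,v_i\otimes w_i)\transpose{(u_j,v_j\otimes w_j)}$ and takes its off-diagonal block. Both routes are valid and of comparable length.
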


In \cref{sec:certificate}, we show that degree-4 certificates are likely to exist when $\Omega$ is a random set of appropriate size.

\begin{theorem}
  \label{thm:certificate}
  Let $\{u_i\},\{v_i\},\{w_i\}$ be three orthonormal sets of $\mu$-incoherent vectors in $\R^n$, each of cardinality $r$.
  Let $\Omega\subseteq [n]^3$ be a random set of tensor entries of cardinality $m=r\cdot n^{1.5} (\mu \log n)^C$ for an absolute constant $C\ge 1$.
  Then, with probability $1-n^{-\omega(1)}$, there exists a linear operator $T$ that satisfies the requirements of \cref{def:certificate}.
\end{theorem}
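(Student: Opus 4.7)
The plan is to construct the certifying operator $T$ via a Gross--Recht style ``golfing scheme'' adapted to the rank-$r$ orthogonal tensor variety, as foreshadowed in \cref{sec:techniques}. First I would split $\Omega$ into $\ell = \Theta(\log n)$ independent uniformly random sub-samples $\Omega_1,\ldots,\Omega_\ell$ of comparable size; a standard coupling argument shows this split is legitimate at the cost of one $\log n$ factor, absorbed into $C$. For each block let $\cR_k$ be the rescaled indicator projection onto tensors supported on $\Omega_k$, normalized so that $\E\cR_k = \mathrm{Id}$ on $(\R^n)^{\otimes 3}$. Let $\cP_T$ be the projector onto the tangent space at $X=\sum_{i=1}^r u_i\otimes v_i\otimes w_i$ of the orthogonal rank-$r$ tensor variety with these factor directions; $\cP_T$ is a fixed polynomial in the three coordinate projectors $\sum_i u_i\transpose{u_i}$, $\sum_i v_i\transpose{v_i}$, $\sum_i w_i\transpose{w_i}$, and its range coincides with the space of tensors for which condition~3 of \cref{def:certificate} reduces to the single linear equation $\cP_T(T)=X$.

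With this setup I would iterate $R_0 := X$, $T^{(k)} := \cR_k R_{k-1}$, and $R_k := (\mathrm{Id}-\cP_T\cR_k)R_{k-1}$ for $k=1,\ldots,\ell$, and set $T_0 := \sum_{k=1}^\ell T^{(k)}$. By incoherence and a standard matrix Bernstein bound applied inside the tangent space, each operator $(\mathrm{Id}-\cP_T\cR_k)$ has norm at most $1/2$ on the tangent space with high probability, so $\norm{R_\ell}$ is inverse-polynomially small once $\ell = \Theta(\log n)$. The telescoping identity $\cP_T T_0 = X - R_\ell$ then shows condition~3 of \cref{def:certificate} holds up to the tiny error $R_\ell$. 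Condition~2 (support in $\Omega$) holds for $T_0$ by construction. To upgrade to exact condition~3, I would add a small deterministic correction $E$ supported on an unused slice of $\Omega$ and chosen so that $\cP_T E = R_\ell$; existence and polynomially-small norm of such $E$ follow from condition~1, which is itself a Gram-matrix concentration argument for the $O(r^2)$ random vectors $\{(u_i\otimes v_j\otimes w_k)_\Omega : (i,j,k)\in S\}$ using $\mu$-incoherence and matrix Bernstein.

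The hard part will be condition~4. The operator $M(T):=\sum_a T_a\otimes \transpose{T_a}-\sum_i\dyad{(v_i\otimes w_i)}$ depends \emph{quadratically} on $T$, so expanding $T = T_0 + E$ produces $O(\ell^2)$ cross blocks $M_{k_1,k_2} := \sum_a T^{(k_1)}_a\otimes \transpose{T^{(k_2)}_a}$ whose two inputs share all the intervening $\cP_T\cR_j$ factors; the summands are strongly correlated and generic matrix-concentration bounds are too lossy. My plan, matching \cref{normmethodssection}, is the trace moment method: for each pair $(k_1,k_2)$, expand $\E\Tr(M_{k_1,k_2}^{p})$ for even $p \approx \log n$ as a sum over closed walks on a multipartite graph whose edges record the entries sampled into $\Omega_{k_1}$ and $\Omega_{k_2}$ and whose vertex weights are entries of the iterated $\cP_T$ operators, each bounded by $\mu^{O(1)}/n$ by incoherence. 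Counting walks by topology and summing should yield $\norm{M_{k_1,k_2}}\le \mu^{O(1)}\polylog(n)\cdot r/n^{1.5}$, with an additional geometric factor in $\abs{k_1-k_2}$ coming from the intervening $(\mathrm{Id}-\cP_T\cR_j)$ operators that forces the double sum over $(k_1,k_2)$ to converge. The ``planted'' part of $T$ aligned with $\sum_i u_i\otimes v_i\otimes w_i$ will exactly reproduce the subtracted $\sum_i \dyad{(v_i\otimes w_i)}$, leaving the residual spectral norm below $0.01$ once $C$ is chosen large enough. This combinatorial walk count --- not the construction of $T$ itself --- is the main obstacle and the place where the tensor setting genuinely departs from matrix completion, where a single matrix Bernstein step sufficed.
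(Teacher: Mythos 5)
Your overall plan matches the paper's proof of this theorem: a golfing construction along a tangent-space projector, residual correction via the Gram-matrix concentration that also establishes condition~1, and a trace-moment bound for the quadratic spectral condition~4.

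However, your description of $\cP_T$ contains a genuine error, and it marks exactly where tensors depart from matrices: the tangent-space projector $P$ at $X=\sum_{i=1}^r u_i\otimes v_i\otimes w_i$ is \emph{not} a polynomial in the coordinate projectors $\Pi_U=\sum_iu_i\transpose{u_i}$, $\Pi_V=\sum_iv_i\transpose{v_i}$, $\Pi_W=\sum_iw_i\transpose{w_i}$. Any polynomial in those three commuting projectors depends only on the subspaces $\operatorname{span}\{u_i\}$, $\operatorname{span}\{v_i\}$, $\operatorname{span}\{w_i\}$ and is therefore invariant under relabeling the $u_i$, $v_i$, $w_i$ independently; the tangent space $\operatorname{span}\{u_i\otimes v_i\otimes w,\;u_i\otimes v\otimes w_i,\;u\otimes v_i\otimes w_i\}$ is not, since it couples matching indices across modes. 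The paper realizes $P$ as $P_{UV}+P_{UW}+P_{VW}-2P_{UVW}$, where, e.g., $P_{UV}$ projects onto $\operatorname{span}\{u_i\otimes v_i\otimes e_c:i\in[r],\,c\in[n]\}$ of rank $rn$, whereas the Kronecker analog $\Pi_U\otimes\Pi_V\otimes\Id$ has rank $r^2n$. If you took the latter as $\cP_T$, your matrix-Bernstein claim $\norm{\Id-\cP_T\cR_k}\le\tfrac12$ on the ``tangent space'' would require roughly $m\ge r^2n\,\mu\log n$ samples, which exceeds the theorem's $m\approx rn^{1.5}(\mu\log n)^C$ as soon as $r$ exceeds $\sqrt n$ (and exceeds $n^3$ near the allowed upper end $r\approx n/\mu$). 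With the correct, index-coupled $P$ of dimension $O(rn)$, the step is sound. Two smaller omissions worth noting: expanding $\sum_aT_a\otimes\transpose{T_a}-\sum_i\dyad{(v_i\otimes w_i)}$ also produces cross terms of the golfing iterates against $X$ itself (the paper devotes \cref{xcrosstermnormboundtheorem} to these) and against the error correction (controlled crudely by its polynomially small Frobenius norm), neither of which is captured by the $M_{k_1,k_2}$ blocks you describe; and the paper obtains the geometric decay of the residual as a byproduct of the trace-moment theorem itself rather than by a separate Bernstein step, though your route is also viable under the assumed $m$.
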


Taken together the two theorems above imply our main result \cref{thm:main}.

\subsection{Simpler proofs via higher-degree sum-of-squares}
\label{sec:simpler}

Unfortunately the proof of \cref{thm:certificate} requires extremely technical spectral norm bounds for random matrices.

It turns out that less technical norm bounds suffice if we use degree 6 sum-of-squares relaxations.
For this more powerful algorithm, weaker certificates are enough to ensure exact recovery and the proof that these weaker certificates exist with high probability is considerably easier than the proof that degree-4 certificates exist with high probability.

In the following we describe this weaker notion of certificates and state their properties.
In the subsequent sections we prove properties of these certificates are enough to imply our main result \cref{thm:main}.

\begin{framedalgorithm}[Exact tensor completion based on higher-degree sum-of-squares]\mbox{}\\
\label[algorithm]{alg:tensor-higher}\textbf{Input:} locations $\Omega\subseteq[n]^3$ and partial observations $X_\Omega$ of an unknown $3$-tensor $X\in\R^n\otimes\R^n\otimes \R^n$.
  \\
  \textbf{Operation:}
  Find a degree-$6$ pseudo-distribution $\mu$ on $\R^n\oplus \R^n\oplus \R^n$ so as to minimize
  \begin{math}
    \pE_{\mu(x,y,z)}\norm{x}^2+\norm{z}^2
  \end{math}
  subject to the following constraints
  \begin{align}
    \Paren{\pE_{\mu(x,y,z)} x\otimes y \otimes z}_{\Omega}&=X_{\Omega}\,,\\
    \pE_{\mu(x,y,z)} (\norm{y}^2-1)\cdot p(x,y,z) &= 0 \text{ for all $p(x,y,z)\in \R[x,y,z]_{\le 4}$}\,.
  \end{align}
  Output the $3$-tensor $\pE_{\mu(x,y,z)} x\otimes y \otimes z\in \R^n\otimes \R^n\otimes \R^n$.
\end{framedalgorithm}

Let $\{u_i\},\{v_i\},\{w_i\}$ be three orthonormal sets in $\R^n$, each of cardinality $r$.
We reason about the recovery guarantees of the above algorithm in terms of the following notion of certificate.
The main difference to degree-4 certificate (\cref{def:certificate}) is that the spectral norm condition is replaced by a condition in terms of sum-of-squares representations. 

\begin{definition}
  \label[definition]{def:certificate-higher}
  We say that a 3-tensor $T\in (\R^n)^{\otimes 3}$ is a \emph{higher-degree certificate} for $\Omega$ and orthonormal sets $\{u_i\},\{v_i\},\{w_i\}\subseteq \R^n$ if the following conditions are satisfies
  \begin{enumerate}
  \item the vectors $\{(u_i\otimes v_i\otimes w_i)_\Omega\}_{i \in [r]}$ are linearly independent,
  \item every entry  $(a,b,c)\not\in \Omega$ satisfies $\iprod{T,(e_a \otimes e_b \otimes e_c)}=0$,
  \item every index $i\in [r]$ satisfies $(\transpose u_i\otimes \transpose v_i \otimes \Id ) T = w_i$,  $(\transpose u_i\otimes \Id \otimes \transpose w_i ) T = v_i$, and $(\Id\otimes \transpose v_i \otimes\transpose  w_i ) T = u_i$,
  \item  the following degree-4 polynomials in $\R[x,y,z]$ are sum of squares
  \begin{gather}
    \norm{x}^2 + \norm{y}^2\cdot \norm{z}^2 - 1/\e \cdot \iprod{T',x\otimes y \otimes z}\,,  \\
    \norm{y}^2 + \norm{x}^2\cdot \norm{z}^2-   1/\e \cdot \iprod{T',x\otimes y \otimes z}\,,\\
    \norm{z}^2 + \norm{x}^2\cdot \norm{y}^2-   1/\e \cdot \iprod{T',x\otimes y \otimes z}\,.
  \end{gather}
  where $T'=T-\sum_{i=1}^r u_i\otimes v_i\otimes w_i$ and $\e>0$ is an absolute constant (say $\e=10^{-6}$).
  \end{enumerate}
\end{definition}

In the following sections we prove that higher-degree certificates imply that \cref{alg:tensor-higher} successfully recovers the desired tensor and that they exist with high probability for random $\Omega$ of appropriate size.

\subsection{Higher-degree certificates imply exact recovery}
\label{sec:recovery-higher}

Let $\{u_i\}$, $\{v_i\}$, $\{w_i\}$ be orthonormal bases in $\R^n$.
We say that a degree-$\ell$ pseudo-distribution $\mu(x,y,z)$ satisfies the constraint $\norm{y}^2=1$, denoted $\mu\models\ \{\norm{y}^2=1\}$, if $\pE_{\mu(x,y,z)} p(x,y,z)\cdot (1-\norm{y}^2)=0$ for all polynomials $p\in \R[x,y,z]_{\le \ell-2}$

We are to show that a higher-degree certificate in the sense of \cref{def:certificate-higher} implies that \cref{alg:tensor-higher} reconstructs the partially observed tensor exactly.
A key step of this proof is the following lemma about expectation values of higher degree pseudo-distributions.

\begin{lemma}
  \label[lemma]{lem:pseudo-expectation}
  Let $T\in(\R^n)^{\otimes 3}$ be a higher-degree certificate as in \cref{def:certificate-higher} for the set $\Omega\subseteq[n]^3$ and the vectors $\{u_i\}_{i\in [r]}, \{v_i\}_{i\in [r]}, \{w_i\}_{i\in [r]}$.
  Then, every degree-6 pseudo-distribution $\mu(x,y,z)$ with $\mu \models \{ \norm{y}^2=1\}$ satisfies
  \begin{multline}
    \pE_{\mu(x,y,z)} T(x,y,z) \le \pE_{\mu(x,y,z)} \frac{\norm{x}^2+\norm{z}^2}2
    -\tfrac 1 {100} \cdot \sum_{i=r+1}^n \paren{\iprod{u_i,x}^2 + \iprod{w_i,z}^2}\\
    - \tfrac 1 {100} \cdot \sum_{i=1}^n\sum_{j\in [n]\setminus \{i\}} \iprod{v_i,y}^2 \cdot \Paren{\vbig\iprod{u_j,x}^2 + \iprod{w_j,z}^2 }
  \end{multline} 
\end{lemma}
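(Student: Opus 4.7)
\emph{Proof sketch.} The plan is to prove the inequality as a degree-$6$ SOS statement modulo the ideal generated by $\norm{y}^2 - 1$, which then implies the desired pseudo-expectation bound. Write $\alpha_i = \iprod{u_i, x}$, $\beta_i = \iprod{v_i, y}$, $\gamma_i = \iprod{w_i, z}$, extending $\{u_i\},\{v_i\},\{w_i\}$ to orthonormal bases of $\R^n$, and set $T' = T - \sum_{i=1}^r u_i\otimes v_i\otimes w_i$ so that $T(x,y,z) = \iprod{T', x\otimes y\otimes z} + \sum_{i\le r}\alpha_i\beta_i\gamma_i$. The key algebraic identity is the symmetrized completed-square formula
\[
\tfrac{1}{2}(\alpha_i^2 + \gamma_i^2) - \alpha_i\beta_i\gamma_i \;=\; \tfrac{1}{4}\bigl[(\alpha_i - \beta_i\gamma_i)^2 + (\gamma_i - \beta_i\alpha_i)^2\bigr] + \tfrac{1}{4}(1-\beta_i^2)(\alpha_i^2+\gamma_i^2),
\]
whose right-hand side is manifestly SOS under $\norm{y}^2 = 1$ because $1 - \beta_i^2 = \sum_{j\ne i}\beta_j^2$ there.

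Summing over $i\le r$ and substituting into the target, together with the rewrite $\sum_i\sum_{j\ne i}\beta_i^2(\alpha_j^2+\gamma_j^2) = \sum_j(1-\beta_j^2)(\alpha_j^2+\gamma_j^2) = S_r + (B - B')$ valid under $\norm{y}^2 = 1$ (where $B = \sum_{i>r}(\alpha_i^2+\gamma_i^2)$, $B' = \sum_{i>r}\beta_i^2(\alpha_i^2+\gamma_i^2)$, and $S_r = \sum_{i\le r}(1-\beta_i^2)(\alpha_i^2+\gamma_i^2)$), the target collapses after bookkeeping to
\[
\tfrac{48}{100}B + \tfrac{1}{100}B' + \tfrac{24}{100}S_r + \tfrac{1}{4}\sum_{i\le r}\bigl[(\alpha_i - \beta_i\gamma_i)^2 + (\gamma_i - \beta_i\alpha_i)^2\bigr] \;\ge\; \iprod{T', x\otimes y\otimes z}
\]
needing to hold modulo $\norm{y}^2 = 1$ as an SOS statement. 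The left side is manifestly SOS and, as a consistency check, vanishes at the aligned tight points $(\pm u_i, \pm v_i, \pm w_i)$, matching the vanishing of $\iprod{T', \cdot}$ there forced by the slice conditions in \cref{def:certificate-higher}(3).

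To conclude, I would apply the SOS certificate from \cref{def:certificate-higher}(4), which gives $\iprod{T', x\otimes y\otimes z} \le \e(\norm{x}^2 + \norm{y}^2\norm{z}^2)$ and thus $\iprod{T',\cdot} \le \e(\norm{x}^2+\norm{z}^2)$ under $\norm{y}^2 = 1$. The main technical obstacle is that this crude bound contains an $\e\sum_{i\le r}(\alpha_i^2+\gamma_i^2)$ contribution that is \emph{not} dominated term-by-term by the slack on the left side of the reduction (at an aligned point, $\sum_{i\le r}(\alpha_i^2+\gamma_i^2) = 2$ while every slack term vanishes). The key idea to resolve this is to exploit the slice identities $T'(\cdot, v_i, w_i) = T'(u_i, \cdot, w_i) = T'(u_i, v_i, \cdot) = 0$ for $i\le r$, which allow one to expand $\iprod{T', x\otimes y\otimes z} = \sum_{(i,j)\in U}\beta_i\gamma_j\, T'(x, v_i, w_j)$ where $U$ excludes the aligned-diagonal pairs $(i,i)$ with $i\le r$, and then apply an SOS Cauchy--Schwarz inequality of the form
\[
\iprod{T', x\otimes y\otimes z}^2 \;\le\; \Bigl(\norm{z}^2 - \sum_{i\le r}\beta_i^2\gamma_i^2\Bigr)\cdot \sum_{(i,j)\in U} T'(x, v_i, w_j)^2,
\]
whose first factor is SOS under $\norm{y}^2 = 1$ and vanishes at the aligned points. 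Combining this refined bound on $\iprod{T',\cdot}$ (linearized via an AM--GM step) with the reduction above and choosing $\e = 10^{-6} \ll 1/100$ then yields the desired SOS certificate.
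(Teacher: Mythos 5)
Your treatment of the aligned part is correct: the completed-square identity $\tfrac{1}{2}(\alpha_i^2+\gamma_i^2) - \alpha_i\beta_i\gamma_i = \tfrac{1}{4}\bigl[(\alpha_i-\beta_i\gamma_i)^2+(\gamma_i-\beta_i\alpha_i)^2\bigr] + \tfrac{1}{4}(1-\beta_i^2)(\alpha_i^2+\gamma_i^2)$ checks out, and your bookkeeping reduction to $\iprod{T',x\otimes y\otimes z}\le \tfrac{48}{100}B+\tfrac{1}{100}B'+\tfrac{24}{100}S_r+\tfrac{1}{4}\sum_{i\le r}\bigl[(\alpha_i-\beta_i\gamma_i)^2+(\gamma_i-\beta_i\alpha_i)^2\bigr]$ is also correct. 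This is a clean, genuinely different derivation of the slack compared to the paper's \cref{lem:bound-on-first-part}, which reaches essentially the same conclusion through a longer chain of AM--GM steps. You also diagnose the right obstacle: the crude bound $\iprod{T',\cdot}\le\e(\norm{x}^2+\norm{z}^2)$ contributes an $\e\sum_{i\le r}(\alpha_i^2+\gamma_i^2)$ term that the slack cannot absorb, since all slack vanishes at the aligned points.

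The proposed fix for $T'$ has a genuine gap. After Cauchy--Schwarz and AM--GM you are left needing to dominate $\tfrac{\lambda}{2}\sum_{(i,j)\in U}T'(x,v_i,w_j)^2$ by the slack. But $\sum_{i,j}T'(x,v_i,w_j)^2=\sum_{b,c}\bigl(\sum_a T'_{abc}x_a\bigr)^2$ is a Frobenius-type quantity governed by the operator norm of $T'$ as a map $\R^n\otimes\R^n\to\R^n$, which the certificate conditions in \cref{def:certificate-higher} do not control --- they only bound the trilinear form $T'(x,y,z)$ pointwise, an ``injective-norm'' statement. Worse, this factor does not vanish at the aligned points: at $(u_1,v_1,w_1)$ it equals $\sum_{b,c\ne 1}(T'_{1bc})^2$, which is generically nonzero, while every slack term is zero, so the required domination fails outright. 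The paper's \cref{lem:bound-on-second-part} sidesteps this by a different decomposition: using $\norm{y}^2=1$ to write $T'(x,y,z)=\sum_i y_i^2\,T'(x,y,z)$, then for each $i$ splitting $T'(x,y,z)$ into four pieces via the slice identities $T'(\cdot,v_i,w_i)=T'(u_i,\cdot,w_i)=T'(u_i,v_i,\cdot)=0$ and applying one of the three certificate SOS inequalities to each piece with restricted arguments $x',y',z'$. That directly produces the $\sum_i\sum_{j\ne i}y_i^2(\cdots)$ slack pattern and never requires spectral control of $T'$. Your aligned-part identity can be spliced in place of \cref{lem:bound-on-first-part} and combined with that decomposition to complete the proof.
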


To prove this lemma it will be useful to introduce the sum-of-squares proof system.
Before doing that let us observe that the lemma indeed allows us to prove that \cref{alg:tensor-higher} works.

\begin{theorem}[Higher-degree certificates imply exact recovery]
  Suppose there exists a higher-degree certificate $T$ in the sense of \cref{def:certificate-higher} for the set $\Omega\subseteq[n]^3$ and the vectors $\{u_i\}_{i\in [r]}, \{v_i\}_{i\in [r]}, \{w_i\}_{i\in [r]}$.
  Then, \cref{alg:tensor-higher} recovers the partially observed tensor exactly.
  In other words, if $X=\sum_{i=1}^r \lambda_i \cdot u_i\otimes v_i\otimes w_i$ with $\lambda_1,\ldots,\lambda_r\ge 0$ and $\mu(x,y,z)$ is a degree-6 pseudo-distribution with $\mu \models \{ \norm{y}^2=1\}$ that minimizes $\pE_{\mu(x,y,z)}\tfrac{1}{2}\paren{\norm{x}^2+\norm{z}^2}$ subject to $\paren{\pE_{\mu(x,y,z)} x\otimes y \otimes z}_{\Omega}=X_\Omega$, then $\pE_{\mu(x,y,z)} x\otimes y \otimes z= X$.
\end{theorem}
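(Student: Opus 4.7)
The strategy is to sandwich the inequality of \cref{lem:pseudo-expectation} between matching lower and upper bounds, forcing every correction term to vanish, and then read off the tensor moment. I will verify three facts about any optimal degree-$6$ pseudo-distribution $\mu$ satisfying the algorithm's constraints: (a)~$\pE_\mu T(x,y,z) = \sum_i \lambda_i$; (b)~$\pE_\mu \tfrac12(\norm{x}^2+\norm{z}^2) \le \sum_i \lambda_i$; (c)~$\pE_\mu x\otimes y\otimes z$ is supported only on the diagonal terms $\{u_i\otimes v_i\otimes w_i : i\in[r]\}$. Combined with \cref{lem:pseudo-expectation} and the linear-independence requirement of \cref{def:certificate-higher}, these three facts will give $\pE_\mu x\otimes y\otimes z = X$.

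\paragraph{Bounds (a) and (b).}
For (a), I will use that $T$ is supported on $\Omega$ (property~2 of \cref{def:certificate-higher}) together with the observation constraint $(\pE_\mu x\otimes y\otimes z)_\Omega = X_\Omega$ to collapse $\pE_\mu \iprod{T,\,x\otimes y\otimes z}$ to $\iprod{T,X}$, and then invoke the interpolation property $T(u_i,v_i,w_i) = \iprod{w_i,w_i} = 1$ (property~3) to evaluate $\iprod{T,X} = \sum_i \lambda_i$. For (b), I will exhibit the explicit feasible actual distribution $\nu$ that outputs $(\sqrt{r\lambda_i}\,u_i,\, v_i,\, \sqrt{r\lambda_i}\,w_i)$ with probability $1/r$ for each $i\in[r]$: direct computation shows $\pE_\nu x\otimes y\otimes z = X$, that $\norm{y}^2 = 1$ holds surely so $\nu \models \{\norm{y}^2=1\}$, and that $\pE_\nu\tfrac12(\norm{x}^2+\norm{z}^2) = \sum_i \lambda_i$, so optimality of $\mu$ yields (b).

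\paragraph{From (a), (b) to (c) and the conclusion.}
Substituting (a) and (b) into \cref{lem:pseudo-expectation} gives $\sum_i \lambda_i \le \sum_i \lambda_i - \tfrac{1}{100}\cdot(\text{correction})$; since every correction term is a pseudo-expectation of an explicit sum of squares, each is individually non-negative and hence must equal zero. Extending $\{u_i\}, \{v_i\}, \{w_i\}$ to orthonormal bases of $\R^n$, this yields $\pE_\mu \iprod{u_i,x}^2 = \pE_\mu \iprod{w_i,z}^2 = 0$ for $i>r$ and $\pE_\mu \iprod{v_i,y}^2 \iprod{u_j,x}^2 = \pE_\mu \iprod{v_i,y}^2\iprod{w_j,z}^2 = 0$ for $i\ne j$. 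Pseudo-Cauchy--Schwarz $(\pE_\mu pq)^2 \le (\pE_\mu p^2)(\pE_\mu q^2)$, valid for $\deg p,\deg q \le 3$ under a degree-$6$ pseudo-distribution, then upgrades each zero-square to a zero moment: taking $p=\iprod{v_i,y}\iprod{u_j,x}$ and $q=\iprod{w_k,z}$ kills $\pE_\mu \iprod{u_j,x}\iprod{v_i,y}\iprod{w_k,z}$ whenever $i\ne j$, the symmetric choice handles $i\ne k$, and the cases $j>r$ or $k>r$ fall directly from Cauchy--Schwarz against $\pE_\mu\iprod{u_j,x}^2=0$ or $\pE_\mu\iprod{w_k,z}^2=0$. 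Expanding $\pE_\mu x\otimes y\otimes z$ in the orthonormal tensor basis $\{u_j\otimes v_i\otimes w_k\}$ therefore leaves only terms with $i=j=k\in[r]$, giving (c). Restricting this expansion to $\Omega$ and invoking linear independence of $\{(u_i\otimes v_i\otimes w_i)_\Omega\}_{i\in[r]}$ (property~1) forces the surviving diagonal coefficients to equal $\lambda_i$, so $\pE_\mu x\otimes y\otimes z = X$.

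\paragraph{Main obstacle.}
Conditional on \cref{lem:pseudo-expectation} (whose proof, deferred in the excerpt, must use the sum-of-squares representations from certificate property~4 together with the constraint $\mu\models\{\norm{y}^2 = 1\}$), the only real subtlety is the degree budget for pseudo-Cauchy--Schwarz: the $p$ and $q$ above have degrees $2$ and $1$, fitting inside the degree-$6$ budget but not a degree-$4$ one, which explains why \cref{alg:tensor-higher} uses degree $6$ rather than degree $4$.
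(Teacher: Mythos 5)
Your proposal is correct and follows essentially the same route as the paper: use a feasible actual distribution to bound the objective, combine with \cref{lem:pseudo-expectation} to force the square correction terms to vanish, then use pseudo-Cauchy--Schwarz and linear independence of $\{(u_i\otimes v_i\otimes w_i)_\Omega\}$ to pin down the moment tensor. One minor inaccuracy in your closing remark: the Cauchy--Schwarz step you describe, with $\deg p=2$ and $\deg q=1$, actually fits within a degree-$4$ pseudo-distribution; the genuine reason degree $6$ is required is that \cref{lem:pseudo-expectation} itself involves degree-$6$ polynomials and its SOS proof is a $\vdash_6$ derivation, so one needs degree-$6$ pseudo-moments for that inequality to hold.
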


\begin{proof}
  Consider the distribution $\mu^*$ over vectors $(x,y,z)$ such that $(\sqrt {\lambda_in}\cdot u_i,  v_i, \sqrt{\lambda_in}\cdot w_i)$ has probability $1/n$.
  By construction, $\E_{\mu^*(x,y,z)} x\otimes y \otimes z=X$.
  We have
  \begin{displaymath}
    \pE_{\mu(x,y,z)} T(x,y,z) = \E_{\mu^*(x,y,z)} T(x,y,z) = \sum_{i=1}^r \lambda_i = \E_{\mu^*(x,y,z)}\tfrac{1}2 \paren{\norm{x}^2 + \norm{z}^2}\,.
  \end{displaymath}
  By \cref{lem:pseudo-expectation} and the optimality of $\mu$, it follows that 
  \begin{displaymath}
    \pE_{\mu(x,y,z)}\tfrac 1 {100} \cdot \sum_{i=r+1}^n \paren{\iprod{u_i,x}^2 + \iprod{w_i,z}^2}\\
    + \tfrac 1 {100} \cdot \sum_{i=1}^n\sum_{j\in [n]\setminus \{i\}} \iprod{v_i,y}^2 \cdot \Paren{\vbig\iprod{u_j,x}^2 + \iprod{w_j,z}^2 } = 0
  \end{displaymath}
  Since the summands on the left-hand side are squares it follows that each summand has pseudo-expectation $0$.
  It follows that $\pE_{\mu}\iprod{u_i,x}^2=\pE_{\mu}\iprod{v_i,y}^2=\pE_{\mu}\iprod{w_i,z}^2=0$ for all $i > r$ and $\pE_{\mu} \iprod{v_i,y}^2\iprod{u_j,x}^2 = \pE_{\mu} \iprod{v_i,y}^2\iprod{w_j,z}^2=0$ for all $i\neq j$.
  By the Cauchy--Schwarz inequality for pseudo-expectations, it follows that $\pE_{\mu(x,y,z)}\iprod{x\otimes y\otimes z, u_i\otimes v_j\otimes w_k}=0$ unless $i=j=k\in [r]$.
  Consequently, $\pE_{\mu(x,y,z)}x\otimes y\otimes z$ is a linear combination of the vectors $\{u_i\otimes v_i\otimes w_i \mid i\in [r]\}$.
  Finally, the linear independence of the vectors $\{(u_i\otimes v_i\otimes w_i)_\Omega \mid i\in [r]\}$ implies that $\pE_{\mu}x\otimes y\otimes z=X$ as desired.
\end{proof}

It remains to prove \cref{lem:pseudo-expectation}.
Here it is convenient to use formal notation for sum-of-squares proofs.
We will work with polynomials $\R[x,y,z]$ and the polynomial equation $\cA=\{\norm{y}^2=1\}$.
For $p\in \R[x,y,z]$, we say that there exists a degree-$\ell$ SOS proof that $\cA$  implies $p\ge 0$, denoted $\cA \vdash_\ell p\ge 0$, if there exists a polynomial $q\in \R[x,y,z]$ of degree at most $\ell-2$ such that $p+ q\cdot (1-\norm{y}^2)$ is a sum of squares of polynomials.
This notion proof allows us to reason about pseudo-distributions.
In particular, if $\cA \vdash_\ell p\ge 0$ then every degree-$\ell$ pseudo-distribution $\mu$ with $\mu\models\cA$ satisfies $\pE_{\mu} p\ge 0$.

We will change coordinates such that $u_i=v_i=w_i=e_i$ is the $i$-th coordinate vector for every $i\in [n]$.
Then, the conditions on $T$ in \cref{def:certificate-higher} imply that
\begin{equation}
  \label{eq:parts-of-T}
  \iprod{T,(x\otimes y \otimes z)} = \sum_{i=1}^r x_i y_i z_i + T'(x,y,z)\,,
\end{equation}
where $T'$ is a 3-linear form with the property that $T'(x,x,x)$ does not contain squares (i.e. is multilinear).
Furthermore, the conditions imply the following SOS proofs for $T'$:
\begin{compactenum}
\item $\emptyset \vdash_4 T'(x,y,z)\le \e \cdot \Paren{\norm{x}+\norm{y}^2\cdot \norm{z}^2}$,
\item $\emptyset \vdash_4 T'(x,y,z)\le \e \cdot \Paren{\norm{y}+\norm{x}^2\cdot \norm{z}^2}$,
\item $\emptyset \vdash_4 T'(x,y,z)\le \e \cdot \Paren{\norm{z}+\norm{x}^2\cdot \norm{y}^2}$.
\end{compactenum}

The following lemma gives an upper bound on one of the parts in \cref{eq:parts-of-T}.

\begin{lemma}
  \label[lemma]{lem:bound-on-first-part}
  For $\cA = \{\norm{y}^2=1\}$, the following inequality has a degree-6 sum-of-squares proof,
  \begin{multline}
    \cA  \vdash_6 \sum_{i=1}^r x_i y_i z_i
    \le \tfrac 12 \norm{x}^2 + \tfrac 12 \norm{z}^2 - \tfrac 1 4 \sum_{i=r+1}^n (x_i^2 + z_i^2)\\
    - \tfrac 18 \sum_{i\neq j} y_i^2 \cdot \Paren{\vbig x_j^2 + z_j^2 + y_j^2 \cdot (\norm{x}^2 + \norm{z}^2)}\,.
  \end{multline}
\end{lemma}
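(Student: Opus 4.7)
My plan is to establish the inequality coordinate-wise by a short AM-GM argument and then reshape the resulting bound using the constraint $\cA = \{\norm{y}^2 = 1\}$. The starting point is the pair of degree-$4$ squares $(x_i - y_i z_i)^2 \ge 0$ and $(z_i - y_i x_i)^2 \ge 0$, which certify $2 x_i y_i z_i \le x_i^2 + y_i^2 z_i^2$ and $2 x_i y_i z_i \le z_i^2 + y_i^2 x_i^2$ respectively. Averaging and summing over $i \in [r]$ produces the baseline SoS bound
\begin{equation*}
  \sum_{i=1}^r x_i y_i z_i \le \tfrac{1}{4}\sum_{i=1}^r (x_i^2 + z_i^2)(1 + y_i^2),
\end{equation*}
which does not yet use $\cA$.

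Next I invoke $\cA$ twice to reshape the right-hand side. First, the trivial identity $\sum_{i \le r}(x_i^2 + z_i^2) = (\norm{x}^2 + \norm{z}^2) - \sum_{i > r}(x_i^2 + z_i^2)$ handles the constant-in-$y$ part. Second, the polynomial identity
\begin{equation*}
\sum_i y_i^2(x_i^2 + z_i^2) \equiv (\norm{x}^2 + \norm{z}^2) - \sum_{i \ne j} y_i^2 (x_j^2 + z_j^2) \pmod{1 - \norm{y}^2},
\end{equation*}
obtained by expanding $\norm{x}^2 = \norm{x}^2 \cdot \norm{y}^2 = \sum_{i,j} y_i^2 x_j^2$ and splitting into diagonal and off-diagonal pairs, handles the $y_i^2$-weighted part. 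Substituting into the baseline and discarding the SoS residual $\tfrac{1}{4}\sum_{i > r}y_i^2(x_i^2 + z_i^2)$ yields
\begin{equation*}
  \sum_{i=1}^r x_i y_i z_i \le \tfrac12(\norm{x}^2 + \norm{z}^2) - \tfrac14\sum_{i > r}(x_i^2+z_i^2) - \tfrac14\sum_{i \ne j}y_i^2(x_j^2+z_j^2).
\end{equation*}

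The final step is to trade half of the cross-penalty $\tfrac{1}{4}\sum_{i \ne j}y_i^2(x_j^2 + z_j^2)$ for the target's higher-order penalty $\tfrac{1}{8}\sum_{i \ne j}y_i^2 y_j^2(\norm{x}^2 + \norm{z}^2)$. This relies on a further appeal to $\cA$ in the form $\sum_{i\ne j}y_i^2 y_j^2 \equiv 1 - \sum_i y_i^4 \pmod{1 - \norm{y}^2}$, combined with the decomposition $(\norm{x}^2 + \norm{z}^2)\sum_i y_i^4 = \sum_k(x_k^2 + z_k^2)y_k^4 + \sum_{i \ne k}(x_k^2 + z_k^2)y_i^4$, whose second summand is manifestly SoS.

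The main obstacle will be this last step: writing the difference between the two forms of cross-penalty as an explicit degree-$6$ SoS, modulo $1 - \norm{y}^2$. After reducing via $\cA$, the task collapses to certifying nonnegativity of an expression of the shape $\sum_{k \ne i}(x_k^2 + z_k^2) y_i^2 (y_i^2 - y_k^2)$, which is not pointwise SoS and therefore requires pairing the $(k,i)$ and $(i,k)$ contributions, separating the $i \le r$ and $i > r$ cases, and completing the square using the spherical constraint (most likely via squares of the form $y_i(y_i x_k - y_k x_i)$ and analogous $z$-versions). Once this explicit SoS decomposition is written down, verifying the full identity is a routine check and the lemma follows.
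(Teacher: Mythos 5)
Your steps 1--2 are fine (and essentially match the paper, just symmetrized up front instead of at the end), but the ``trade half the cross-penalty'' step is a genuine gap, and it cannot be filled. After reducing modulo $\norm{y}^2=1$, the trade requires
$\sum_{i\ne j} y_i^2(x_j^2 + z_j^2) \ge \sum_{i\ne j} y_i^2 y_j^2(\norm{x}^2 + \norm{z}^2)$
on the sphere, which is false: with $n=2$, $x=z=e_1$, $y=(\cos\theta,\sin\theta)$, the left-hand side is $2\sin^2\theta$ while the right-hand side is $4\cos^2\theta\sin^2\theta$, which is strictly larger whenever $0<|\theta|<\pi/4$. Equivalently, the reduced form $\sum_{k\ne i}(x_k^2+z_k^2)\,y_i^2(y_i^2-y_k^2)$ that you isolated is negative on this test point even after imposing $\norm{y}^2=1$, so no SoS certificate, with or without the constraint, can exist.

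The deeper issue is that the lemma as printed is too strong: at $n=r=2$, $x=z=e_1$, $y=(\cos\theta,\sin\theta)$ the left side is $\cos\theta\approx 1-\tfrac12\theta^2$ while the right side works out to $1-\tfrac14\sin^2\theta-\tfrac12\cos^2\theta\sin^2\theta\approx 1-\tfrac34\theta^2$, so the inequality fails for small $\theta$. The coefficient on $\sum_{i\ne j}y_i^2y_j^2(\norm{x}^2+\norm{z}^2)$ should be $\tfrac{1}{16}$, not $\tfrac{1}{8}$ (the paper's ``combining these three inequalities'' step silently doubles this penalty when it multiplies the two bounds on $\sum_i y_i^2z_i^2$ by $\tfrac12$). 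With the corrected target your plan does work if, instead of trading, you generate the higher-order penalty independently as the paper does: split $\tfrac14\sum_{i\le r} y_i^2(x_i^2+z_i^2)$ into two equal halves, apply your identity $\sum_i y_i^2(x_i^2+z_i^2)\equiv (\norm{x}^2+\norm{z}^2)-\sum_{i\ne j}y_i^2(x_j^2+z_j^2)$ to one half, and to the other half apply the separate SoS bound $y_i^2(x_i^2+z_i^2)\le\tfrac12(x_i^2+z_i^2)+\tfrac12 y_i^4(\norm{x}^2+\norm{z}^2)$ together with $\sum_i y_i^4\equiv 1-\sum_{i\ne j}y_i^2y_j^2$; this yields the $\tfrac{1}{16}$ coefficient, which is still enough for the downstream application in \cref{lem:pseudo-expectation}, where only positivity of the coefficients for small $\e$ matters.
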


\begin{proof}
  We bound the left-hand side in the lemma as follows,
  \begin{align}
    \cA \vdash_6 \sum_{i=1}^{r}{{x_i}{y_i}{z_i}}
    & \leq \sum_{i=1}^{r}\paren{\tfrac 12x^2_i + \tfrac 12y^2_iz^2_i}\\
    & \le \tfrac12{||x||^2} - \tfrac{1}{2}\sum_{i>r}{x^2_i}
      + \tfrac{1}{2}\sum_{i=1}^n{{y^2_i}{z^2_i}}\,.
  \end{align}
  We can further bound $\sum_i y_i^2 z_i^2$ as follows,
  \begin{align}
    \cA \vdash_6 \sum_{i=1}^n{{y^2_i}{z^2_i}}
    & = \Paren{\sum_{i=1}^n y_i^2} \cdot \Paren{\sum_{i=1}^n z_i^2} - \sum_{i\neq j} y_i^2 \cdot z_j^2\\
    & = \Paren{\sum_{i=1}^n z_i^2} - \sum_{i\neq j} y_i^2 \cdot z_j^2\,.
  \end{align}
  We can prove a different bound on $\sum_i y_i^2 z_i^2$ as follows,
  \begin{align}
    \cA \vdash_6 \sum_{i=1}^n{{y^2_i}{z^2_i}}
    & \le \tfrac 12 \norm{z}^2 + \tfrac12 \sum_{i=1}^n y_i^4 z_i^2\\
    & \le \tfrac 12 \norm{z}^2 + \tfrac12 \sum_{i=1}^n y_i^4 \norm{z}^2\\
    & =  \tfrac 12 \norm{z}^2
      + \tfrac12 \Paren{\sum_{i=1}^n y_i^2} \cdot \Paren{\sum_{i=1}^n y_i^2\norm{z}^2}
      - \tfrac12 \sum_{i\neq j} y_i^2 \cdot y_j^2 \norm{z}^2\\
    & = \norm{z}^2 - \tfrac 12 \sum_{i\neq j} y_i^2 \cdot y_j^2 \norm{z}^2\,.
  \end{align}
  By combining these three inequalities, we obtain the inequality
  \begin{displaymath}
    \cA \vdash_6 \sum_{i=1}^{r}{{x_i}{y_i}{z_i}}
    \le \tfrac 12 \norm{x}^2 + \tfrac12 \norm{z}^2
    -\tfrac12 \sum_{i>r} x_i^2
    - \tfrac 12 \sum_{i\neq j} y_i^2 \cdot z_j^2
    - \tfrac 14 \sum_{i\neq j} y_i^2 \cdot y_j^2 \norm{z}^2\,.
  \end{displaymath}
  By symmetry between $z$ and $x$, the same inequality holds with $x$ and $z$ exchanged.
  Combining these symmetric inequalities, we obtain the desired inequality
  \begin{multline}
    \cA \vdash_6 \sum_{i=1}^{r}{{x_i}{y_i}{z_i}}
    \le \tfrac 12 \norm{x}^2 + \tfrac12 \norm{z}^2
    -\tfrac14 \sum_{i>r} (x_i^2 + z_i^2)\\
    - \tfrac 14 \sum_{i\neq j} y_i^2 \cdot (x_j^2 + z_j^2)
    - \tfrac 18 \sum_{i\neq j} y_i^2 \cdot y_j^2 (\norm{x}^2 + \norm{z}^2)\,.
  \end{multline}
\end{proof}

It remains to bound the second part in \cref{eq:parts-of-T}, which the following lemma achieves.

\begin{lemma}
  \label{boundonother}
  \label[lemma]{lem:bound-on-second-part}
$A \vdash_6 T'(x,y,z) \leq \frac{3\epsilon}{2}\sum_{i}{\sum_{j \neq i}{y^2_i\left(x^2_j + z^2_j + \frac{1}{2}y^2_j(||x||^2 + ||z||^2)\right)}}$
\end{lemma}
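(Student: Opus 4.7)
The plan is to derive the SOS proof by combining the three hypothesis bounds on $T'$ (the bounds (1), (2), (3) listed just before the lemma) with carefully chosen SOS weights, exploiting both the vanishing entries $T'_{iib} = T'_{ibi} = T'_{bii} = 0$ for $i \in [r]$ (from condition 3 of the certificate) and the multilinearity of $T'(x,x,x)$, which forces $T'_{aaa} = 0$ and $T'_{aab} + T'_{aba} + T'_{baa} = 0$ for all $a \neq b$. The identity $\|y\|^2 = 1$ from $\cA$ is used throughout to simplify monomials.

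The strategy rests on the identity
\[
T'(x,y,z) \;=\; T'(P_k x,\, y,\, P_k z) \;+\; x_k\, T'(e_k, y, z) \;+\; z_k\, T'(x, y, e_k) \;-\; x_k z_k\, T'(e_k, y, e_k),
\]
valid for each $k \in [n]$, where $P_k x := x - x_k e_k$. For $k \in [r]$ the cross term vanishes because $T'(e_k, y, e_k) = 0$ (condition 3); for $k > r$ the multilinearity identity $T'(e_k, y, e_k) = -T'(e_k, e_k, y) - T'(y, e_k, e_k)$ plays the analogous role. First I would substitute $x \mapsto P_k x$, $z \mapsto P_k z$ into hypothesis (1) to obtain $T'(P_k x, y, P_k z) \leq \e(\|P_k x\|^2 + \|y\|^2 \|P_k z\|^2)$ as a degree-4 SOS in $(x,y,z)$; weighting by $y_k^2$ and summing over $k$, using $\|y\|^2 = 1$, the right-hand side collapses to $\e \sum_{i \neq j} y_i^2 (x_j^2 + z_j^2)$ --- matching a coefficient-$\e$ version of the first two groups of terms in the target.

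Expanding the resulting left-hand side via the identity above and $\sum_k y_k^2 = 1$ produces $T'(x,y,z) - R_x - R_z + C$, where $R_x := \sum_k y_k^2 x_k T'(e_k, y, z)$, $R_z := \sum_k y_k^2 z_k T'(x, y, e_k)$, and $C := \sum_k y_k^2 x_k z_k T'(e_k, y, e_k)$. I would then bound $R_x$ by rewriting it as $T'(\xi, y, z)$ with $\xi := \sum_k y_k^2 x_k e_k$ and applying hypothesis (1), using that $\|\xi\|^2 = \sum_k y_k^4 x_k^2 \leq \sum_k y_k^2 x_k^2$ is SOS mod $\cA$ (since $y_k^2(1 - y_k^2) = y_k^2 \sum_{j \neq k} y_j^2$ is SOS); $R_z$ is handled symmetrically via hypothesis (3). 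The cross term $C$ is bounded by first applying the multilinearity identity to reduce $T'(e_k, y, e_k)$ to cyclic reorderings and then invoking hypothesis (2) weighted by appropriate $y_k^2$-type SOS factors. The remaining target piece $\tfrac{3\e}{4}(\|x\|^2 + \|z\|^2)\sum_{i \neq j} y_i^2 y_j^2$ is produced by weighting the symmetric average of hypotheses (1) and (3) by the SOS $y_k^2 y_l^2$ for each $k \neq l$ and summing, which after $\|y\|^2 = 1$ gives the SOS inequality $(1 - \sum_l y_l^4)(\e(\|x\|^2 + \|z\|^2) - T'(x,y,z)) \geq 0$.

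The main obstacle will be the bookkeeping in the bound on $R_x, R_z, C$: each of these introduces ``spurious'' diagonal terms of the form $\sum_k y_k^2 x_k^2$ or $\sum_k y_k^2 z_k^2$ (or degree-6 analogues) that must be canceled against either the leading off-diagonal $\sum_{i \neq j} y_i^2(x_j^2 + z_j^2)$ contribution or the $(1 - \sum_l y_l^4)(\|x\|^2 + \|z\|^2)$ piece. The factor-$\tfrac{3}{2}$ (as opposed to $1$) in front of the first group and the factor $\tfrac{3}{4}$ (as opposed to $\tfrac{1}{2}$) in front of the second group are precisely the slack needed to absorb these corrections; matching the numerical constants is essentially a routine but delicate calculation once the three SOS weightings above are fixed.
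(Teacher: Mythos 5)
Your weighting by $y_k^2$ and summing over $k$ mirrors the paper's reduction to bounding $y_i^2\,T'(x,y,z)$ for each $i\in[n]$, so the outer shell is right. But your plan decomposes only $x$ and $z$ with respect to $e_k$ while leaving $y$ untouched, and that is a genuine gap. The remainder $R_x = T'(\xi,y,z)$ with $\xi=\sum_k y_k^2 x_k e_k$ has the \emph{full} $y$ and $z$ in its last two slots, so whichever of the three hypothesis SOS bounds you apply to it produces an irreducible order-$\e$ term: $\e\|y\|^2\|z\|^2=\e\|z\|^2$ from hypothesis~(1), $\e\|y\|^2=\e$ from hypothesis~(2), or $\e\|z\|^2$ from hypothesis~(3); symmetrically $R_z$ produces an $\e\|x\|^2$. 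None of these vanishes when $y$ is a coordinate vector, whereas every term on the right-hand side of the lemma does (each carries a factor $y_i^2(\cdots_j)$ with $j\neq i$ or a factor $y_i^2 y_j^2$), and so does the slack you budgeted, $\tfrac{3\e}{4}(\|x\|^2+\|z\|^2)\sum_{i\neq j}y_i^2 y_j^2$. Hence the spurious $\e(\|x\|^2+\|z\|^2)$ cannot be absorbed, and the obstacle you flag at the end (``diagonal terms of the form $\sum_k y_k^2 x_k^2$'') is not the real one.

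The paper's proof avoids this by decomposing $y=y_1 e_1+y'$ in addition to $x$ and $z$. With the change of coordinates $u_i=v_i=w_i=e_i$, condition~3 of \cref{def:certificate-higher} forces $T'_{11c}=T'_{1b1}=T'_{a11}=0$, so the eight-term trilinear expansion of $T'(x_1 e_1+x',\,y_1 e_1+y',\,z_1 e_1+z')$ collapses to the four pieces in which at most one argument is along $e_1$: $T'(x_1 e_1,y',z')$, $T'(x',y_1 e_1,z')$, $T'(x',y',z_1 e_1)$, $T'(x',y',z')$. Each piece has at least two primed arguments, and for each one can choose a hypothesis among (1)--(3) so that every norm on its right-hand side is a primed norm $\|x'\|^2$, $\|y'\|^2$, or $\|z'\|^2$ (with the leftover scalar $x_1^2$, $y_1^2$, or $z_1^2$ crudely bounded by $\|x\|^2$, $1$, or $\|z\|^2$). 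Since $\|x'\|^2=\sum_{j\neq 1}x_j^2$ and likewise for $y',z'$, every resulting term is a sum over $j\neq 1$ with an attached $y$-weight or norm factor, which is exactly the target shape; multiplying by $y_i^2$ and summing over $i$ via $\|y\|^2=1$ finishes. To salvage your version you would need to fold a decomposition of $y$ into your identity, at which point the argument coincides with the paper's.
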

\begin{proof}
It is enough to show the following inequality for all $i\in [n]$,
\begin{displaymath}
  \cA \vdash_6 {y^2_i}T'(x,y,z) \leq \epsilon\sum_{j \neq i}{\left(\frac{3}{2}y^2_i(x^2_j + z^2_j) + \frac{1}{2}{y^2_i}{y^2_j}(||x||^2 + ||z||^2)\right)}
\end{displaymath}
By symmetry it suffices to consider the case $i=1$.
Let $x' = x - x_1\cdot e_1$, $y' = y - y_1\cdot e_1$, and $z' = z - z_1\cdot e_1$.
We observe that 
\begin{align*}
\cA \vdash_4 T'(x,y,z) &= T'(x_1e_1 + x',y_1e_1 + y',z_1e_1 + z') \\
&= T'({x_1}e_1,y',z') + T'(x',{y_1}e_1,z') \\
&+ T'(x',y',{z_1}e_1) + T'(x',y',z')
\end{align*}
We now apply the following inequalities
\begin{enumerate}
\item $\cA \vdash_4 T'({x_1}e_1,y',z') \leq \frac{\epsilon}{2}\left(x^2_1||y'||^2 + ||z'||^2\right) \leq \frac{\epsilon}{2}\sum_{j \neq 1}(y^2_j||x||^2 + z^2_j)$
\item $\cA \vdash_4 T'(x',y_1{e_1},z') \leq \frac{\epsilon}{2}\left(||x'||^2{y^2_1} + ||z'||^2\right) \leq \frac{\epsilon}{2}\sum_{j \neq 1}(x^2_j + z^2_j)$
\item $\cA \vdash_4 T'(x',y',z_1{e_1}) \leq \frac{\epsilon}{2}\left(z^2_1||y'||^2 + ||x'||^2\right) \leq \frac{\epsilon}{2}\sum_{j \neq 1}(y^2_j||z||^2 + x^2_j)$
\item $\cA \vdash_4 T'(x',y',z') \leq \frac{\epsilon}{2}\left(||x'||^2||y'||^2 + ||z'||^2\right) \leq \frac{\epsilon}{2}\sum_{j \neq 1}(x^2_j + z^2_j)$
\end{enumerate}
\end{proof}

We can now prove \cref{lem:pseudo-expectation}.

\begin{proof}[Proof of \cref{lem:pseudo-expectation}]
  Taken together, \cref{lem:bound-on-first-part,lem:bound-on-second-part} imply
  \begin{multline}
    \cA \vdash_6 \iprod{T,x\otimes y\otimes z} 
    \le \tfrac 12 \norm{x}^2 + \tfrac 12 \norm{z}^2 - (\tfrac 1 4 -O(\e))\sum_{i=r+1}^n (x_i^2 + z_i^2)\\
    - (\tfrac 18 - O(\e)) \sum_{i\neq j} y_i^2 \cdot \Paren{\vbig x_j^2 + z_j^2 + y_j^2 \cdot (\norm{x}^2 + \norm{z}^2)}\,,
  \end{multline}
  where the absolute constant hidden by $O(\cdot)$ notation is at most $10$.
  Therefore for $\e<1/100$, as we assumed in \cref{def:certificate-higher},
  we get a SOS proof of the inequality,
  \begin{multline}
    \cA \vdash_6 \iprod{T,x\otimes y\otimes z} 
    \le \tfrac 12 \norm{x}^2 + \tfrac 12 \norm{z}^2 - \tfrac 1 8 \sum_{i=r+1}^n (x_i^2 + z_i^2)\\
    - \tfrac 1{16} \sum_{i\neq j} y_i^2 \cdot \Paren{\vbig x_j^2 + z_j^2 + y_j^2 \cdot (\norm{x}^2 + \norm{z}^2)}\,,
  \end{multline}
  This SOS proof implies that that every degree-6 pseudo-distribution $\mu(x,y,z)$ with $\mu\models \cA$ satisfies the desired inequality,
  \begin{multline}
    \pE_{\mu(x,y,z)}\iprod{T,x\otimes y\otimes z} 
    \le \pE_{\mu(x,y,z)} \tfrac 12 \norm{x}^2 + \tfrac 12 \norm{z}^2 - \tfrac 1 8 \sum_{i=r+1}^n (x_i^2 + z_i^2)\\
    - \tfrac 1{16} \sum_{i\neq j} y_i^2 \cdot \Paren{\vbig x_j^2 + z_j^2 + y_j^2 \cdot (\norm{x}^2 + \norm{z}^2)}\,,
  \end{multline}
\end{proof}

\subsection{Constructing the certificate $T$}
\label{sec:certificate-higher}
In this section we give a procedure for constructing the certificate $T$. This construction is directly inspired by the construction of the dual certificate in \cite{DBLP:journals/tit/Gross11,DBLP:journals/jmlr/Recht11} (sometimes called quantum golfing). We will then prove that $T$ satisfies all of the conditions for a higher-degree certificate of $\Omega$. In Section \ref{sec:certificate} we will show that $T$ also satisfies the conditions for a degree-4 certificate for $\Omega$. 

Let $\{u_i\},\{v_i\},\{w_i\}\subseteq \R^n$ be three orthonormal bases, with all vectors $\mu$-incoherent.
Let $X=\sum_{i=1}^r u_i\otimes v_i\otimes w_i$.
Let $\Omega\subseteq[n]^3$ chosen at random such that each element is included independently with probability $m/n^{1.5}$ (so that $\card{\Omega}$ is tightly concentrated around $m$).

Let $P$ be the projector on the span of the vectors $u_i\otimes v_j \otimes w_k$ such that an index in $[r]$ appears at least twice in $(i,j,k)$ (i.e., at least one of the conditions $i=j\in [r]$, $i=k\in[r]$, $j=k\in[r]$ is satisfied).
Let $R_\Omega$ be the linear operator on $\R^n\otimes \R^n\otimes \R^n$ that sets all entries outside of $\Omega$ to $0$ (so that $(R_\Omega[ T])_\Omega = R_\Omega[T]$) and is scaled such that $\E_\Omega R_\Omega=\Id$.
Let $\bar R_\Omega$ be $\Id-R_\Omega$.

Our goal is to construct $T\in \R^n\otimes \R^n \otimes \R^n$ such that $P[T]=X$, $(T)_\Omega=T$, and the spectral norm condition in \cref{def:certificate} is satisfied.
The idea for constructing $T$ is to start with $T=X$.
Then, move to closest point $T'$ that satisfies $R_\Omega[T']=T'$
Then, move to closest point $T''$ that satisfies $P [T''] =X$ and repeat.
To implement this strategy, we define
\begin{equation}
  \label{eq:construction}
  \super T k = \sum_{j=0}^{k-1} (-1)^{j}R_{\Omega_{j+1}} (P \bar R_{\Omega_{j}})\cdots (P \bar R_{\Omega_1}) [X]\,,
\end{equation}
where $\Omega_1,\ldots,\Omega_k$ are iid samples from the same distribution as $\Omega$.

By induction, we can show the following lemma about linear constraints that the constructed stensors $\super T k$ satisfy.

\begin{lemma}
  For every $k\ge 1$, the tensor $\super T k$ satisfies $(T)_\Omega=T$ and
  \begin{displaymath}
    P [\super T k] + (-1)^{k} P(P\bar R_{\Omega_k})\cdots(P\bar R_{\Omega_1}) [X] = X\,.
  \end{displaymath}
\end{lemma}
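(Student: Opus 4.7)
The plan is to proceed by induction on $k$, using the recursive relationship
\begin{equation*}
  \super T {k+1} \;=\; \super T k \,+\, (-1)^{k} R_{\Omega_{k+1}} (P\bar R_{\Omega_k}) \cdots (P\bar R_{\Omega_1}) [X]
\end{equation*}
that is built into the definition \cref{eq:construction}. The support claim $(\super T k)_\Omega = \super T k$, reading $\Omega = \Omega_1 \cup \cdots \cup \Omega_k$, follows immediately: every summand of $\super T k$ has outermost operator $R_{\Omega_{j+1}}$ and is therefore supported on $\Omega_{j+1} \subseteq \Omega$, so the sum is supported on the union.

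For the linear identity I would handle the base case $k=1$ by a direct one-line computation. Writing $R_{\Omega_1} = \Id - \bar R_{\Omega_1}$ and using that $X$ lies in the image of $P$ (because each rank-one summand $u_i\otimes v_i\otimes w_i$ has all three indices in $[r]$ and coinciding), we have $P\super T 1 = P[X] - P\bar R_{\Omega_1}[X] = X - P\bar R_{\Omega_1}[X]$, which rearranges to the required identity.

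For the inductive step I would apply $P$ to both sides of the recursion and split $PR_{\Omega_{k+1}} = P - P\bar R_{\Omega_{k+1}}$. The key observation that makes the telescoping work is that the iteratively constructed ``error'' vector $b_j := (P\bar R_{\Omega_j}) \cdots (P\bar R_{\Omega_1})[X]$ always lies in the image of $P$, since its outermost operator is $P$, so $Pb_j = b_j$ by idempotence. The $P$-piece then collapses cleanly against the inductive hypothesis, while the $-P\bar R_{\Omega_{k+1}}$ piece produces precisely the new error term $b_{k+1}$ with the sign correctly flipped from $(-1)^k$ to $(-1)^{k+1}$ in the statement for $k+1$.

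I do not anticipate a real technical obstacle here: the lemma is a purely algebraic identity and its proof is essentially a telescoping calculation. The only care required is in tracking the sign convention and in applying $P^2 = P$ to the quantities $b_j$. The content of the lemma is structural rather than quantitative --- it says that the iterative construction is designed so that the deviation of $P\super T k$ from $X$ is captured by the single product $b_k$ of alternating $P$ and $\bar R$ operators. The quantitative payoff will come in later sections, which must show that $b_k$ shrinks geometrically in a suitable norm so that $k = O(\log n)$ iterations suffice to drive $P\super T k$ within inverse-polynomial distance of $X$ while maintaining the support constraint exactly.
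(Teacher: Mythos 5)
Your strategy---induction on $k$, decomposing $PR_{\Omega_{k+1}}$, and using $P^2=P$ so that $b_j := (P\bar R_{\Omega_j})\cdots(P\bar R_{\Omega_1})[X]$ satisfies $Pb_j=b_j$---is exactly the intended argument (the paper gives only ``by induction, we can show''), and the support claim is handled correctly. But the signs do \emph{not} close the way you assert, and carrying out the algebra in full would reveal it. With $R_{\Omega}=\Id-\bar R_{\Omega}$, the split you use, the base case gives $P[\super T 1]=X-b_1$, i.e.\ $P[\super T 1]+b_1=X$, whereas the lemma for $k=1$ demands $P[\super T 1]-b_1=X$. In the inductive step, $(-1)^kPR_{\Omega_{k+1}}b_k=(-1)^k(b_k-b_{k+1})$, and combining this with $P[\super T k]=X-(-1)^kb_k$ gives $P[\super T {k+1}]=X-(-1)^kb_{k+1}$, so the coefficient of $b_{k+1}$ stays $(-1)^k$; it does not flip to $(-1)^{k+1}$ as you claim. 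Under the sign convention you use, the lemma as literally stated is false, and your proof sketch papers over this by asserting the flip.

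The root cause is an inconsistency in the paper itself: the Section~4 line ``Let $\bar R_\Omega$ be $\Id-R_\Omega$'' disagrees with the entrywise description in Section~5, which has $\bar R_\Omega(a,b,c)=n^3/m-1$ on $\Omega$ and $-1$ elsewhere, i.e.\ $\bar R_\Omega=R_\Omega-\Id$. The latter sign is the one under which \cref{eq:construction}, the present lemma, and the error term $E$ in the proof of \cref{thm:certificate} are mutually consistent. Writing $R_{\Omega_{k+1}}=\Id+\bar R_{\Omega_{k+1}}$ gives $PR_{\Omega_{k+1}}b_k=b_k+b_{k+1}$, so the $(-1)^j$ in \cref{eq:construction} produces a genuine alternating telescope, $P[\super T k]=\sum_{j=0}^{k-1}(-1)^j(b_j+b_{j+1})=X+(-1)^{k-1}b_k$, which is exactly the lemma; with this one sign corrected, both your base case and your inductive step close cleanly and the flip you wanted actually occurs. (Equivalently, one could keep $\bar R_\Omega=\Id-R_\Omega$ and delete the $(-1)^j$ from \cref{eq:construction} and the $(-1)^k$ from the lemma statement and from $E$ in the proof of \cref{thm:certificate}; the errata cancel and your original calculation then proves the corrected statement.)
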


Here, $P(P\bar R_{\Omega_k})\cdots(P\bar R_{\Omega_1}) [X]$ is an error term that decreases geometrically.
In the parameter regime of \cref{thm:certificate}, the norm of this term is $n^{-\omega(1)}$ for some $k=(\log n)^{O(1)}$.

The following lemma shows that it is possible to correct such small errors.
This lemma also implies that the linear independence condition in \cref{def:certificate} is satisfied with high probability.
(Therefore, we can ignore this condition in the following.)

\begin{lemma}
  \label[lemma]{lem:correct-error}
  Suppose $m\ge r n \mu \cdot (\log n)^{O(1)}$.
  Then, with probability $1-n^{-\omega(1)}$ over the choice of $\Omega$, the following holds:
  For every $E\in\R^n\otimes\R^n\otimes\R^n$ with $P[E]=E$, there exists $Y$ with $(Y)_\Omega=Y$ such that $P[Y]=E$ and $\norm{Y}_F\le O(1)\cdot \norm{E}_F$.
\end{lemma}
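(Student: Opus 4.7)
The plan is to solve the underdetermined system $P[Y] = E$ over tensors supported on $\Omega$ by inverting a random operator that is close to the identity on the subspace $V$ onto which $P$ projects. Concretely, I would first establish that, with probability $1 - n^{-\omega(1)}$,
\begin{equation*}
  \|P R_\Omega P - P\|_{V \to V} \le \tfrac12,
\end{equation*}
and then use this approximation to construct $Y$ by a short algebraic manipulation.

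Given the spectral bound, the construction reads as follows. Let $A := (P R_\Omega P)|_V$, which is invertible on $V$ with $\|A^{-1}\|_{\mathrm{op}} \le 2$, and set $Z := A^{-1}[E] \in V$ together with $Y := R_\Omega[Z]$. The support condition $(Y)_\Omega = Y$ is automatic because $R_\Omega$ zeros entries outside of $\Omega$. Using $P[Z] = Z$ (since $Z \in V$),
\begin{equation*}
  P[Y] \;=\; P R_\Omega P[Z] \;=\; A[Z] \;=\; E,
\end{equation*}
so the defining equation is satisfied. The Frobenius norm bound follows from the identity $R_\Omega^\top R_\Omega = p^{-1} R_\Omega$ (with $p = m/n^3$), which yields
\begin{equation*}
  \|Y\|_F^2 \;=\; p^{-1}\iprod{Z, R_\Omega[Z]} \;=\; p^{-1}\iprod{Z, A[Z]} \;=\; p^{-1}\iprod{Z, E};
\end{equation*}
Cauchy--Schwarz together with $\|Z\|_F \le 2\|E\|_F$ then delivers the claimed bound $\|Y\|_F \le O(1)\cdot \|E\|_F$ after the appropriate bookkeeping of the scaling factor.

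The main obstacle lies entirely in establishing the spectral approximation. A naive application of matrix Bernstein to $PR_\Omega P - P = p^{-1}\sum_{(i,j,k)}(\xi_{ijk}-p)\cdot Pe_{ijk}e_{ijk}^\top P$ (with $\xi_{ijk}$ independent Bernoulli$(p)$ variables and $e_{ijk} := e_i\otimes e_j \otimes e_k$), using the incoherence estimate $\|Pe_{ijk}\|_F^2 \le O(\mu/n)$ obtained by decomposing $V$ via inclusion--exclusion into three tangent-like subspaces $V_{uv}, V_{uw}, V_{vw}$ (indexed by the pair of coordinates that are forced to coincide in $[r]$) and computing $\|P_{V_{uv}} e_{ijk}\|^2 = \bigl(\sum_{a\in[r]} \iprod{u_a,e_i}^2 \iprod{v_a,e_j}^2\bigr) \bigl(\sum_b \iprod{w_b,e_k}^2\bigr) \le \mu/n$, yields concentration only under the suboptimal hypothesis $m \ge \mu n^2 \log n$. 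Matching the stated hypothesis $m \ge rn\mu(\log n)^{O(1)}$ requires a sharper analysis --- for instance a trace-moment computation that exploits the $3rn$-dimensional structure of $V$ and the combinatorics of products of sampling operators, in the spirit of the norm bounds for the degree-$4$ certificate constructed in \cref{normmethodssection}. Once the spectral approximation is in hand, the rest of the proof is the routine algebra displayed above.
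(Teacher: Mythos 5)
Your overall route is the same as the paper's, just phrased in operator language: the paper reduces the lemma to showing that the Gram matrix of the $\le 3rn$ vectors $\{(u_i\otimes v_j\otimes w_k)_\Omega\}_{(i,j,k)\in S}$ is well-conditioned (\cref{lem:simple-spectral}), and since that Gram matrix is exactly the matrix of $(PR_\Omega P)|_V$ in the orthonormal basis of $V$, this is precisely your spectral statement $\norm{PR_\Omega P-P}_{V\to V}\le 1/2$; your algebraic construction $Y=R_\Omega A^{-1}[E]$ is then the same least-squares solution the paper has in mind. The genuine gap is in your last paragraph: you conclude that standard matrix concentration is insufficient at the stated sample size and that a trace-moment argument is required. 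It is not. The only problem is your incoherence estimate $\norm{P_{V_{uv}}e_{ijk}}^2\le \mu/n$, which you obtain by bounding one factor by its maximum and summing the other over all of $[n]$. Since the sum ranges only over $a\in[r]$, bounding \emph{both} factors by $\mu/n$ gives $\sum_{a\in[r]}\iprod{u_a,e_i}^2\iprod{v_a,e_j}^2\le r\mu^2/n^2$, hence $\norm{Pe_{ijk}}^2\le 3r\mu^2/n^2$ (which improves on $\mu/n$ exactly in the regime $r\mu\le n$ assumed throughout). Plugging this into your own Bernstein setup gives variance and uniform-bound parameters $p^{-1}\cdot 3r\mu^2/n^2=3r\mu^2 n/m$, so concentration holds once $m\gtrsim rn\mu^2(\log n)^{O(1)}$ --- the same threshold the paper's Gram-matrix argument needs. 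No trace-moment machinery is required here; that heavier technology is reserved for the quadratic operators $\sum_a T_a\otimes \transpose{T_a}$, not for this linear projection statement.

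Two smaller points. First, your "bookkeeping of the scaling factor" does not produce $O(1)$: your own identity gives $\norm{Y}_F^2=p^{-1}\iprod{Z,E}\le 2p^{-1}\norm{E}_F^2$, i.e. $\norm{Y}_F\le O(\sqrt{n^3/m})\cdot\norm{E}_F$, and no $Y$ supported on $\Omega$ can do better up to constants (Cauchy--Schwarz of $\norm{E}_F^2=\iprod{(E)_\Omega,Y}$ against $\norm{(E)_\Omega}_F\approx\sqrt{p}\,\norm{E}_F$). This looseness is inherited from the lemma statement itself and is harmless where the lemma is invoked (there $\norm{E}_F\le n^{-C+10}$ and only a $\poly(n)$ overhead is needed), but you should state the bound you actually prove rather than assert $O(1)$. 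Second, the hypothesis of the lemma should be read with $\mu^2$ rather than $\mu$ if one tracks constants as in \cref{lem:simple-spectral}; this discrepancy is in the paper, not introduced by you.
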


\begin{proof}
  Let $S\subseteq [n]^3$ be such that $P$ is the projector to the vectors $u_i\otimes v_j\otimes w_k$ with $(i,j,k)\in S$.
  By construction of $P$ we have $\card{S}\le 3 r n$.
  In order to show the conclusion of the lemma it is enough to show that the vectors $(u_i\otimes v_j\otimes w_k)_{\Omega}$ with $(i,j,k)\in S$ are well-conditioned in the sense that the ratio of the largest and smallest singular value is $O(1)$.
  This fact follows from standard matrix concentration inequalities.
  See \cref{lem:simple-spectral}.
\end{proof}

The main technical challenge is to show that the construction satisfies the condition that the following degree 4 polynomials are sums of squares (where $T' = T - X$).   
\begin{gather}
    \norm{x}^2 + \norm{y}^2\cdot \norm{z}^2 - 1/\e \cdot \iprod{T',x\otimes y \otimes z}\,,  \\
    \norm{y}^2 + \norm{x}^2\cdot \norm{z}^2-   1/\e \cdot \iprod{T',x\otimes y \otimes z}\,,\\
    \norm{z}^2 + \norm{x}^2\cdot \norm{y}^2-   1/\e \cdot \iprod{T',x\otimes y \otimes z}\,.
\end{gather}
We show how to prove the first statement, the other statements can be proved with symmetrical arguments. To prove the first statement, we decompose $T'$ into pieces of the form $(\bar{R}_{\Omega_l}P)\cdots(\bar{R}_{\Omega_1}P)(\bar{R}_{\Omega_0}X)$, $P'(\bar{R}_{\Omega_l}P)\cdots(\bar{R}_{\Omega_1}P)(\bar{R}_{\Omega_0}X)$ (where $P'$ is a part of $P$), or $E$. For each piece $A$, we prove a norm bound $\norm{\sum_{a}{A_a \otimes A_a^T}} \leq B$. Since $\sum_{a}{A_a \otimes A_a^T}$ represents the same polynomial as $\transpose{A}A$, this proves that $B\norm{y}^2\norm{z}^2 - \transpose{(y \otimes z)}\transpose{A}A(y \otimes z)$ is a degree 4 sum of squares. Now note that $\transpose{(y \otimes z)}\transpose{A}A(y \otimes z) - \sqrt{B}\transpose{x}A(y \otimes z) - \sqrt{B}\transpose{(y \otimes z)}\transpose{A}x + B\norm{x}^2$ is also a sum of squares. Combining these equations and scaling we have that $\norm{x}^2 + \norm{y}^2\norm{z}^2 - \frac{2}{\sqrt{B}}\transpose{x}A(y \otimes z)$ is a degree 4 sum of squares.

Thus, it is sufficient to prove norm bounds on $\norm{\sum_{a}{A_a \otimes A_a^T}}$. We have an appropriate bound in the case when $A = E$ because $E$ has very small Frobenius norm. For the cases when $A = (\bar{R}_{\Omega_l}P)\cdots(\bar{R}_{\Omega_1}P)(\bar{R}_{\Omega_0}X)$ or $A = P(\bar{R}_{\Omega_l}P)\cdots(\bar{R}_{\Omega_1}P)(\bar{R}_{\Omega_0}X)$, we use the following theorem
\begin{theorem}
  \label{thm:squarenormboundtheorem}
  Let $A = (\bar{R}_{\Omega_l}P)\cdots(\bar{R}_{\Omega_1}P)(\bar{R}_{\Omega_0}X)$ or $P'(\bar{R}_{\Omega_l}P)\cdots(\bar{R}_{\Omega_1}P)(\bar{R}_{\Omega_0}X)$ where $P'$ is a part of $P$.
  There is an absolute constant $C$ such that for any $\alpha > 1$ and $\beta > 0$, 
  $$\Pr\left[\Norm{\sum_{a}{A_a \otimes A_a^T}} > \alpha^{-(l+1)}\right] < n^{-\beta}$$
  as long as $m > C\alpha\beta\mu^{\frac{3}{2}}r n^{1.5}\cdot\log(n)$ and
  $m > C\alpha\beta\mu^{2}rn\log(n)$.
\end{theorem}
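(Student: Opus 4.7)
}

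The plan is to bound $\bignorm{\sum_a A_a \otimes A_a^T}$ via the trace moment method, computing a high moment of the symmetrization of this matrix and then applying Markov's inequality. Since general matrix concentration inequalities are insufficient here (as the paper explicitly notes), the argument needs to track the combinatorial structure of the iterated random projections by hand.

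First I would set up the trace expansion. Let $N = \sum_a A_a \otimes A_a^T$ and let $M = \half(N + N^\dagger)$ be its symmetrization. Since $\Norm{N}\le 2\Norm{M}$, it suffices to control $\Norm{M}$, and for any even integer $k \ge 1$,
\begin{displaymath}
  \Pr\Brac{\Norm{M} > \alpha^{-(l+1)}} \;\le\; \alpha^{k(l+1)} \cdot \E\Brac{\Tr(M^k)}\mper
\end{displaymath}
Expanding $A_{abc}$ as a multilinear polynomial in the Bernoulli indicator variables $\Ind[(a',b',c')\in\Omega_j]$ over the layers $j=0,\ldots,l$, the quantity $\Tr(M^k)$ becomes a sum over ``closed walks'': sequences of pairs of tensor indices that traverse the indices $b,b',c,c'$ in the definition of $M$, with each edge labelled by a full index triple $(a,b,c)$ and a layer structure inherited from the product $(\bar R_{\Omega_l}P)\cdots(\bar R_{\Omega_1}P)(\bar R_{\Omega_0}X)$.

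Next I would use the fact that $\E \bar R_{\Omega_j} = 0$ in each layer independently. For a term in the trace expansion to have nonzero expectation, the Bernoulli variables contributed by each layer $\Omega_j$ must ``pair up'' (or coincide in larger multiplicities) across the $2k(l+1)$ edges of the walk. Each matched pair at a particular layer contributes the scaling $m/n^{1.5}$ from $R_{\Omega_j}$ being a rescaled indicator, while the incoherence hypothesis replaces any ``detached'' coordinate of $u_i, v_i, w_i$ by a factor of $(\mu/n)^{1/2}$. Organizing the sum by the combinatorial type (which indices in the walk coincide and which layer variables are matched), each such type contributes a product of $(\mu/n)^{\#\text{free coordinates}}$, $(n^{1.5}/m)^{\#\text{matchings}}$, the rank bound $r$ for each free $[r]$-index, and a factor $n^{\#\text{free walk vertices}}$ from summing over the unconstrained indices.

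The core of the argument is a combinatorial bookkeeping lemma showing that for every admissible type, the product of these factors is at most $\paren{Cr n^{1.5} \mu^{3/2} (\log n)/m}^{l+1}$, possibly with additional $\paren{Crn\mu^2/m}$ penalties coming from the special vertices where $P$ or $P'$ acts (which is why both hypotheses on $m$ appear). This is the step I expect to be the main obstacle: each layer $\bar R_{\Omega_j} P$ introduces both new summation variables and new constraints, and the $P$-projectors couple index pairs in ways that require careful case analysis depending on whether an identification happens at an $X$-index, a $P$-index, or a $\bar R$-index. The analysis resembles the trace method arguments in earlier SOS works (e.g.~\cite{DBLP:conf/colt/HopkinsSS15, DBLP:journals/corr/MaSS16}) but is complicated by the $P$-projection structure specific to our certificate; the calculation should be encapsulated in the machinery of \cref{normmethodssection}.

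Finally, summing over all types using that the number of types of trace length $2k(l+1)$ is at most $(2k(l+1))^{O(k(l+1))}$, we obtain
\begin{displaymath}
  \E\Brac{\Tr(M^k)} \;\le\; n^{O(1)} \cdot \Paren{\frac{C r n^{1.5}\mu^{3/2}\log n}{m}}^{k(l+1)}\mper
\end{displaymath}
Substituting $m \ge C'\alpha\beta \mu^{3/2} r n^{1.5}\log n$ with a slightly larger absolute constant $C'$ and choosing $k = \Theta(\beta \log n)$ drives the bracket in the Markov inequality below $n^{-\beta}$, yielding the stated probability bound. The second hypothesis $m \ge C\alpha\beta\mu^2 rn\log n$ is used to control the types in which $P$-projectors force a two-index collision, where the incoherence gives a $\mu^2/n$ factor instead of $\mu^{3/2}/n^{1.5}$.
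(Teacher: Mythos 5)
Your proposal and the paper reach the same conclusion by genuinely different routes. You attack $\sum_a A_a \otimes A_a^T$ monolithically: expand the full product $(\bar R_{\Omega_l}P)\cdots(\bar R_{\Omega_0}X)$ in terms of the Bernoulli variables of all $l+1$ layers at once, take a high trace power, and do the combinatorial accounting over closed walks with all layers coupled simultaneously. The paper instead factors the problem modularly. It introduces an intermediate notion of a tensor being ``$B$-bounded'' (entrywise and partial-average $\ell_\infty$-type bounds, defined in \cref{normmethodssection}), shows that $X$ is $(r\mu^3/(n_1 n_2 n_3))$-bounded (\cref{actualtensorbounds}), proves a single-layer contraction lemma (\cref{iterativebound}) saying that if $A$ is $B$-bounded then $P\bar R_\Omega A$ is $(B/C)$-bounded with high probability, and only then applies a trace-moment spectral bound (\cref{noncrosstermtheorem}, \cref{noncrosstermtheoremwithP}) to the outermost single layer of a $B$-bounded tensor. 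The iteration and a union bound produce the geometric decay $\alpha^{-(l+1)}$. Your global approach avoids defining the intermediate boundedness notion, but at the cost of having to handle intersection patterns that couple all $l+1$ layers at once---this is essentially the much more technical computation the paper relegates to Appendix~\ref{sec:full-trace-power} for the cross-term bound \cref{normboundtheorem} (needed only for the degree-4 certificate). In fact, the paper's stated motivation for introducing the degree-6 relaxation in \cref{sec:simpler} was precisely to replace the monolithic trace calculation you sketch with the cleaner modular one, so while your plan is viable, you are doing the hard thing where a simpler route was available. A secondary gap: you treat the $P$-projector penalties somewhat informally (``additional $\paren{Crn\mu^2/m}$ penalties''), whereas the paper's modular structure isolates the $P$ contribution into a separate lemma (\cref{noncrosstermtheoremwithP}) and into the Gram-matrix structure of $P = P_{UV}+P_{UW}+P_{VW}-2P_{UVW}$ used in \cref{iterativebound}; making your version precise would require replicating that bookkeeping inside the global walk sum.
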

\begin{proof}
This theorem follows directly from combining Proposition \ref{actualtensorbounds}, Theorem \ref{noncrosstermtheorem}, Theorem \ref{iterativebound}, and Theorem \ref{noncrosstermtheoremwithP}.
\end{proof}

\subsubsection{Final correction of error terms}
\label{sec:bernstein}

In this section, we prove a spectral norm bound that allows us to correct error terms that are left at the end of the construction.
The proof uses the by now standard Matrix Bernstein concentration inequality.
Similar proofs appear in the matrix completion literature \cite{DBLP:journals/tit/Gross11,DBLP:journals/jmlr/Recht11}.

Let $\{u_i\},\{v_i\},\{w_i\}\subseteq \R^n$ be three orthonormal bases, with all vectors $\mu$-incoherent.
Let $\Omega\subseteq[n]^3$ be $m$ entires sampled uniformly at random with replacement.
(This sampling model is different from what is used in the rest of the proof.
However, it is well known that the models are equivalent in terms of the final recovery problem.)

\begin{lemma}
  \label[lemma]{lem:simple-spectral}
  Let $S\subseteq [n]^3$.
  Suppose $m= \mu \card{S} (\log n)^C$ for an absolute constant $C\ge 1$.
  Then with probability $1-n^{\omega(1)}$ over the choice of $\Omega$, the vectors $(u_i\otimes v_j\otimes w_k)_\Omega$ for $(i,j,k)\in S$ are well-conditioned in the sense that the ratio between the largest and smallest singular value is at most $1.1$.
\end{lemma}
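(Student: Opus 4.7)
The plan is to apply the matrix Bernstein inequality to a Gram-type matrix indexed by $S$. Index rows and columns by triples $(i,j,k)\in S$; for each sample $\omega=(a,b,c)\in[n]^3$ drawn iid uniformly, define the rank-one positive semidefinite matrix
\begin{equation*}
X_\omega \;=\; n^{3}\cdot v_\omega \transpose{v_\omega}\mcom\quad v_\omega[(i,j,k)] \;=\; u_i[a]\,v_j[b]\,w_k[c]\mper
\end{equation*}
The three orthonormality hypotheses then give $\E_{\omega} X_\omega = I_{S\times S}$, since $\E_\omega u_i[a]u_{i'}[a]v_j[b]v_{j'}[b]w_k[c]w_{k'}[c] = \iprod{u_i,u_{i'}}\iprod{v_j,v_{j'}}\iprod{w_k,w_{k'}}/n^{3}$. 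Consequently the Gram matrix of the rescaled restricted vectors $\{(n^{3/2}/\sqrt{m})\cdot (u_i\otimes v_j\otimes w_k)_\Omega \mid (i,j,k)\in S\}$ equals $G = \tfrac{1}{m}\sum_{\omega\in\Omega}X_\omega$, and by Weyl's inequality it suffices to establish $\Norm{G-I}\le 0.05$ with probability $1-n^{-\omega(1)}$.

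The two inputs for matrix Bernstein are a uniform bound on $\Norm{X_\omega}$ and a bound on $\Norm{\E X_\omega^2}$. Since $X_\omega$ is rank-one and PSD,
\begin{equation*}
\Norm{X_\omega} \;=\; \Tr(X_\omega) \;=\; n^{3}\sum_{(i,j,k)\in S} u_i[a]^2\, v_j[b]^2\, w_k[c]^2 \;\le\; \mu^{3}\card{S}\mcom
\end{equation*}
invoking $\mu$-incoherence of each of the three families at the last step. For the variance, the identity $X_\omega^{2}=\Norm{X_\omega}\cdot X_\omega$ (which holds for every rank-one PSD matrix) combined with the preceding uniform bound gives $\E X_\omega^{2}\sle \mu^{3}\card{S}\cdot I$, so $\Norm{\E X_\omega^{2}}\le \mu^{3}\card{S}$.

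Substituting these two quantities into the standard matrix Bernstein inequality yields
\begin{equation*}
\Norm{G-I}\;\le\;O\Paren{\sqrt{\mu^{3}\card{S}\log n/m}\;+\;\mu^{3}\card{S}\log n/m}\mcom
\end{equation*}
which is at most $0.05$ as soon as $m\ge \mu^{O(1)}\card{S}(\log n)^C$ for a sufficiently large absolute constant $C$ (so the hypothesis on $m$ in the statement should be read with a polynomial dependence on $\mu$). Weyl's inequality then places all singular values of the rescaled column matrix in the interval $[\sqrt{0.95},\sqrt{1.05}]\subseteq[0.97,1.03]$, giving the claimed ratio at most $1.1$. There is no real obstacle here: the argument is a direct tensor-analogue of the standard warm-up lemma from the matrix-completion dual-certificate literature \cite{DBLP:journals/tit/Gross11,DBLP:journals/jmlr/Recht11}, and the only item requiring care is tracking the precise power of $\mu$ that arises from the uniform and variance bounds above (the naive accounting gives $\mu^{3}$, which is anyway harmless in the regimes where \cref{lem:simple-spectral} is invoked).
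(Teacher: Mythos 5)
Your proof is correct and uses the same strategy as the paper: form the $S\times S$ Gram matrix of the restricted vectors, write it as a sum of iid rank-one PSD terms indexed by the sampled entries, bound the operator norm and matrix variance via $\mu$-incoherence, and invoke matrix Bernstein \cite{DBLP:journals/focm/Tropp12} to conclude the spectrum is within a constant factor of the identity. In fact your accounting is more careful than the paper's: the paper records $\E A_i = n^{-1.5}\Id$ and $\norm{A_i}\le \card{S}\mu/n^{1.5}$, but for a uniformly random triple $\omega\in[n]^3$ the correct values are $\E A_i = n^{-3}\Id$ and $\norm{A_i}\le \card{S}\mu^3/n^3$, exactly as you compute after rescaling. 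Your resulting sufficient condition $m\gtrsim \mu^{3}\card{S}(\log n)^{C}$ corrects the paper's stated dependence (the lemma hypothesis writes $\mu\card{S}(\log n)^C$ and the proof says $\mu^2\card{S}\log n$), and you appropriately flag this discrepancy; it is harmless in the regimes where the lemma is invoked.
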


\begin{proof}
  For $s=(i,j,k)\in S$, let $y_s = u_i\otimes v_j \otimes w_k$.
  Let $\Omega=\{\omega_1,\ldots,\omega_m\}$, where $\omega_1,\ldots,\omega\in [n]^3$ are sampled uniformly at random with replacement.
  Let $A$ be the $S$-by-$S$ Gram matrix of the vectors $(y_s)_\Omega$.
  Then, $A$ is the sum of $m$ identically distributed rank-1 matrices $A_i$,
  $$
  A = \sum_{i=1}^m A_i \quad\text{with}\quad (A_i)_{s,s'} = (y_s)_{\omega_i}\cdot (y_{s'})_{\omega_i}\,.
  $$
  Each $A_i$ has expectation $\E A_i = n^{-1.5} \Id$ and spectral norm at most $\card{S}\cdot\mu/n^{1.5}$.
  Standard matrix concentration inequalities \cite{DBLP:journals/focm/Tropp12} show that $m\ge O(\card{S}\mu^2\log n)$ is enough to ensure that the sum is spectral close to its expectation $(m/n^{1.5})\Id$ in the sense that $0.99 A \preceq (m/n^{1.5}) \Id \preceq 1.1 A$.
\end{proof}

\subsection{Degree-4 certificates imply exact recovery}
\label{sec:recovery}

In this section we prove \cref{thm:recovery}.
We need the following technical lemma, which we prove in \cref{sec:zero-matching}.

\begin{lemma}

\label[lemma]{lem:zero-matching}
  Let $R$ be self-adjoint linear operator $R$ on $\R^n\otimes \R^n$.
  Suppose $\iprod{(v_j\otimes w_k), R (v_{i}\otimes w_{i})}=0$ for all indices $i,j,k\in [r]$ such that $i\in \{j,k\}$.
  Then, there exists a self-adjoint linear operator $R'$ on $\R^n\otimes \R^n$ such that $R' (v_i\otimes w_i)=0$ for all $i\in [r]$, the spectral norm of $R'$ satisfies $\norm{R'}\le 10\norm{R}$, and $R'$ represents the same polynomial in $\R[y,z]$,
  \begin{displaymath}
    \iprod{(y\otimes z), R' (y\otimes z)} = \iprod{(y\otimes z), R (y\otimes z)}\,.
  \end{displaymath}
\end{lemma}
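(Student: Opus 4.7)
The strategy is to set $R' = R + \Delta$, with $\Delta$ self-adjoint and polynomial-preserving, arranged to zero out each $R(v_i \otimes w_i)$. First I would change coordinates so that $v_i = w_i = e_i$ for $i \in [r]$; the hypothesis (interpreted with $j, k$ ranging over all of $[n]$, which is the form actually verified in the degree-$4$ certificate application via property~(3) of \cref{def:certificate}) then says that $R(e_i \otimes e_i)$, viewed as an $n \times n$ matrix, has zero $i$-th row and $i$-th column for every $i \in [r]$. Next I would characterize self-adjoint polynomial-preserving perturbations: expanding $\iprod{y \otimes z, \Delta(y \otimes z)}$ and matching monomials shows that such a $\Delta$ must vanish whenever the row and column share their first or second coordinate, and on each ``4-cycle'' $\{(a, c), (a, d), (b, c), (b, d)\}$ with $a \ne b, c \ne d$ must satisfy $\Delta_{(a, c), (b, d)} = -\Delta_{(a, d), (b, c)}$.

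Using this single degree of freedom per 4-cycle, I define $\Delta_{(j, k), (i, i)} := -R_{(j, k), (i, i)}$ and $\Delta_{(j, i), (i, k)} := R_{(j, k), (i, i)}$ for each $i \in [r]$ and each $(j, k)$ with $j, k \ne i$, together with their self-adjoint partners. By the hypothesis the ``on-cross'' entries of $R$ are already zero so they need no cancellation, and the 4-cycles used for distinct $i$ never overlap in a conflicting way, so $\Delta$ is well-defined. By construction $R' = R + \Delta$ kills every $e_i \otimes e_i$ while representing the same polynomial as $R$.

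The main work is bounding $\norm{\Delta}$. Split $\Delta = \Delta_1 + \Delta_2$, where $\Delta_1$ collects cancellation entries in the rows and columns of $(i, i)$ and $\Delta_2$ collects the transferred entries at $((j, i), (i, k))$. Letting $E$ be the $n^2 \times r$ matrix with orthonormal columns $e_1 \otimes e_1, \ldots, e_r \otimes e_r$, the hypothesis makes the $i$-th column of $RE$ supported off the $i$-cross, yielding the identity $\Delta_1 = -(RE) E^T - E (RE)^T$, so $\norm{\Delta_1} \le 2 \norm{RE} \le 2 \norm{R}$. For $\Delta_2$, the crucial observation is that entries contributed by distinct $i$'s live at disjoint matrix positions (each forces both the second row-coordinate and the first column-coordinate to equal $i$); writing $\Delta_2 = A + A^T$ with $(A \xi)_{(a, b)} = \sum_{k \ne b} R_{(a, k), (b, b)} \Xi_{b, k}$ for $b \in [r], a \ne b$, where $\Xi$ is the matrix reshape of $\xi$, this expresses $A \xi$ as a collection of per-$b$ contractions of $R(e_b \otimes e_b)$ against the $b$-th row of $\Xi$. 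Using $\norm{R(e_b \otimes e_b)} \le \norm{R}$ and summing the per-$b$ estimates gives $\norm{A}^2 \le \norm{R}^2$, so $\norm{\Delta_2} \le 2\norm{R}$ and in total $\norm{R'} \le 5 \norm{R} \le 10 \norm{R}$.

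The main technical obstacle is the bound on $\norm{\Delta_2}$: a naive per-$i$ estimate produces a factor of $r$, and the $r$-independent bound requires exploiting both the disjointness of the transferred positions across $i$ and the fact that each contraction acts against only a single row of $\Xi$ rather than all of $\xi$. Bookkeeping for the self-adjoint partners and the verification that the $r$ different 4-cycle modifications remain mutually consistent are the only other places where care is required.
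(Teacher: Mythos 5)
Your construction of $\Delta = R' - R$ is the same as the paper's $-Z$, and your split into a ``cancellation'' piece $\Delta_1$ and a ``transfer'' piece $\Delta_2$ mirrors the paper's four-term decomposition; the per-row contraction argument for $\Delta_2$ is a repackaging of the paper's block-diagonal-plus-Frobenius bound. Your observation that the hypothesis must really hold with $j,k$ ranging over $[n]$ (not just $[r]$), as verified in the degree-4 application, is correct and is what the paper's own proof implicitly uses.

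One identity is off: $\Delta_1 = -(RE)E^T - E(RE)^T$ double-counts the positions $\bigl((m,m),(i,i)\bigr)$ with distinct $m,i\in[r]$, producing $-2R_{(m,m),(i,i)}$ there, whereas polynomial preservation together with the 4-cycle relation $\Delta_{(m,m),(i,i)} = -\Delta_{(m,i),(i,m)} = -R_{(m,m),(i,i)}$ forces exactly $-R_{(m,m),(i,i)}$. The corrected formula is $\Delta_1 = -(RE)E^T - E(RE)^T + E(E^TRE)E^T$; the extra term has norm at most $\norm{R}$, so $\norm{\Delta_1}\le 3\norm{R}$ and the final bound becomes $\norm{R'}\le 6\norm{R}$, still comfortably within $10\norm{R}$. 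With that repair the argument is sound.
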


We can now prove that certificates in the sense of \cref{def:certificate} imply that our algorithm successfully recoves the unknown tensor.

\begin{proof}[Proof of \cref{thm:recovery}]
  Let $T$ be a certificate in the sense of \cref{def:certificate}.
  
  Our goal is to construct a positive semidefinite matrix $M$ on $\R^n\oplus \R^n\otimes \R^n$ that represents the following polynomial
  \begin{displaymath}
    \iprod{(x,y\otimes z), M (x,y\otimes z)} = \norm{x}^2 + \norm{y}^2\cdot \norm{z}^2 - 2 \iprod{x,T (y\otimes z)}\,.
  \end{displaymath}
  Let $T_a$ be matrices such that $\iprod{x,T(x\otimes y)}=\sum_a x_a \cdot T_a(y,z)$.
  Since $\norm{x}^2 + \sum_{a=1}^n T_a(y,z)^2  - 2\iprod{x,T(y\otimes z)}=\norm{x-T (y\otimes z)}$ is a sum of squares of polynomials, it will be enough to find a positive semidefinite matrix that represents the polynomial $\norm{y}^2\cdot \norm{z}^2 - \sum_{a=1}^n T_a(y,z)^2$.
  (This step is a polynomial version of the Schur complement condition for positive semidefiniteness.)
  Let $R$ be the following linear operator
  \begin{displaymath}
    R= \sum_{a=1}^n T_a \otimes \transpose{T_a} - \sum_{i=1}^r \dyad{(v_i\otimes w_i)}\,,
  \end{displaymath}
  \begin{lemma}

\label[lemma]{lem:checkingRcondition}
  $R$ satisfies the requirement of \cref{lem:zero-matching}.
\end{lemma}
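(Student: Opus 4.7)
The plan is to verify the hypothesis of \cref{lem:zero-matching} by direct computation, relying only on the three contraction identities for $T$ built into condition~(3) of \cref{def:certificate}. Self-adjointness of $R$ is immediate since both summands are Gram matrices, so the task reduces to checking that $\iprod{v_j\otimes w_k,\,R(v_i\otimes w_i)} = 0$ for every triple $i,j,k\in[r]$ with $i\in\{j,k\}$.

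The first step is to identify $\sum_{a=1}^n T_a \otimes \transpose{T_a}$ with $\transpose{T}T$, where $T$ is viewed as the linear map $\R^n\otimes \R^n\to \R^n$ and $\transpose{T}$ is its adjoint. This holds because the quadratic form of $\sum_a T_a\otimes\transpose{T_a}$ on $y\otimes z$ equals $\sum_a\iprod{y,T_a z}^2 = \snorm{T(y\otimes z)}$, which is also the quadratic form of $\transpose{T}T$. Combined with the orthonormality of $\{v_i\otimes w_i\}_{i\in[r]}$ in the rank-$r$ Gram term, this reduces the target quantity to
\begin{displaymath}
\iprod{v_j\otimes w_k,\,R(v_i\otimes w_i)}
= \iprod{T(v_j\otimes w_k),\,T(v_i\otimes w_i)} - \delta_{ij}\delta_{ik}\mper
\end{displaymath}

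Next I would plug in condition~(3) of \cref{def:certificate} in its three equivalent forms. The identity $T(v_i\otimes w_i)=u_i$ rewrites the remaining inner product as the trilinear form $T(u_i,v_j,w_k)$. The other two identities $(\transpose{u_i}\otimes\transpose{v_i}\otimes\Id)T=w_i$ and $(\transpose{u_i}\otimes\Id\otimes\transpose{w_i})T=v_i$ say respectively that $T(u_i,v_i,\cdot)=w_i$ and $T(u_i,\cdot,w_i)=v_i$. A case split on whether $i=j=k$, $i=j\neq k$, or $i=k\neq j$ then reduces $T(u_i,v_j,w_k)$ to $\iprod{u_i,u_i}$, $\iprod{w_i,w_k}$, or $\iprod{v_i,v_j}$ respectively, each of which equals $\delta_{ij}\delta_{ik}$ by orthonormality of $\{u_i\},\{v_i\},\{w_i\}$. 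Thus the inner product vanishes in every required case.

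The only conceptual step is the identification $\sum_a T_a\otimes\transpose{T_a} = \transpose{T}T$; once this is available the proof is a routine matching of three contraction identities of $T$ against the three configurations of $i\in\{j,k\}$. No spectral estimates enter at this stage — the spectral bound in condition~(4) of \cref{def:certificate} will only be consumed inside \cref{lem:zero-matching} itself to produce the perturbed operator $R'$ whose kernel contains $\{v_i\otimes w_i\}_{i\in[r]}$.
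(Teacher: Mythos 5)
There is a genuine gap in the first step. The matrices $\sum_{a}T_a\otimes\transpose{T_a}$ and $\transpose{T}T$ are \emph{not} equal; they only represent the same polynomial $\sum_a T_a(y,z)^2$ in $\R[y,z]$. Two distinct self-adjoint matrices can represent the same polynomial (the redundancy in matrix representations is precisely what \cref{lem:zero-matching} is about), and the bilinear form $\iprod{v_j\otimes w_k,\,M(v_i\otimes w_i)}$ is \emph{not} determined by the polynomial that $M$ represents in general. Concretely, with the paper's tensoring convention one has $\bigl(\sum_a T_a\otimes\transpose{T_a}\bigr)_{(b,c),(b',c')}=\sum_a (T_a)_{bc'}(T_a)_{b'c}$ whereas $\bigl(\transpose{T}T\bigr)_{(b,c),(b',c')}=\sum_a (T_a)_{bc}(T_a)_{b'c'}$, and already for $T_a=\Id$ these disagree. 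So the step ``this reduces the target quantity to $\iprod{T(v_j\otimes w_k),T(v_i\otimes w_i)}-\delta_{ij}\delta_{ik}$'' is not justified by the reasoning you give.

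The missing idea is exactly what the paper's proof uses: because the hypothesis of \cref{lem:zero-matching} restricts to $i\in\{j,k\}$, one of the two $\R^n$-coordinates is \emph{repeated} in the bilinear form, and for a self-adjoint $M$ the quantity $\iprod{y\otimes z',\,M(y\otimes z)}$ (with shared $y$; symmetrically with shared $z$) depends only on the polynomial $M$ represents. One can check this by matching coefficients: for distinct $b_0\ne b_0'$, the coefficient of $y_{b_0}y_{b_0'}z'_{c_0'}z_{c_0}$ in $\iprod{y\otimes z',M(y\otimes z)}$ is $M_{(b_0,c_0'),(b_0',c_0)}+M_{(b_0',c_0'),(b_0,c_0)}$, which by self-adjointness equals half the coefficient of $y_{b_0}y_{b_0'}z_{c_0}z_{c_0'}$ in $\iprod{y\otimes z,M(y\otimes z)}$. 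Only \emph{after} this observation may you replace $\sum_a T_a\otimes\transpose{T_a}$ by $\transpose{T}T$ in the evaluations $\iprod{v_j\otimes w_k,\,\cdot\,(v_j\otimes w_j)}$ and $\iprod{v_j\otimes w_k,\,\cdot\,(v_k\otimes w_k)}$. The remainder of your argument --- passing through $T(v_i\otimes w_i)=u_i$ and case-splitting on which contraction identity to apply --- agrees with the paper and is correct; it is the initial ``identification'' that needs the repeated-coordinate argument rather than a claim of matrix equality.
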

\begin{proof}
Consider $\iprod{(v_j\otimes w_k), R (v_{j}\otimes w_{j})}$. Since $v_j$ is repeated, the value of this expression will be the same if we replace $R$ by an $R_2$ which represents the same polynomial. Thus, we can replace $R$ by $R_2 = \sum_{a=1}^n{\transpose{T_a}T_a} - \sum_{i=1}^r \dyad{(v_i\otimes w_i)} = \transpose{T}T - \sum_{i=1}^r \dyad{(v_i\otimes w_i)}$

We now observe that $\iprod{(v_j\otimes w_k), R_2 (v_{j}\otimes w_{j})} = 
\iprod{(v_j\otimes w_k),\transpose{T}(u_j) - (v_{j}\otimes w_{j})} = 0$.
By a symmetrical proof, $\iprod{(v_j\otimes w_k), R (v_{k}\otimes w_{k})} = 0$ as well.
\end{proof}
  By Lemma \cref{lem:zero-matching}, there exists a self-adjoint linear operator $R'$ that represents the same polynomial as $R$, has spectral norm $\norm{R'}\le 10\norm{R}\le 0.1$, and sends all vectors $v_i\otimes w_i$ to $0$.
  Since $R'$ sends all vectors $v_i\otimes w_i$ to $0$ and $\norm{R'}\le 0.1$, the following matrix
  \begin{displaymath}
    R''=\sum_{i=1}^r \dyad{(v_i\otimes w_i)} + R'
  \end{displaymath}
  has $r$ eigenvalues of value $1$ (corresponding to the space spanned by $v_i\otimes w_i$) and all other eigenvalues are at most $0.1$ (because the non-zero eigenvalues of $R'$ have eigenvectors orthogonal to all $v_i\otimes w_i$).
  At the same time, $R''$ represents the following polynomial,
  \begin{displaymath}
    \iprod{(y\otimes z),R''(y\otimes z)} = \sum_{a=1}^n T_a(y,z)^2\,.
  \end{displaymath}
  Let $P$ be a positive semidefinite matrix that represents the polynomial $\norm{x}^2 + \sum_{a=1}^n T_a(y,z)^2  - 2\iprod{x,T(y\otimes z)}$ (such a matrix exists because the polynomial is a sum of squares).
  We choose $M$ as follows
  \begin{displaymath}
    M =
    \Paren{\begin{matrix}
        \Id & -T \\
        \transpose {(T)} & \transpose {(T)} T 
      \end{matrix}}
    + \Paren{\begin{matrix}
        0 & 0 \\
        0 & \Id -R''
      \end{matrix}}
  \end{displaymath}
  Since $R''\preceq \Id$, this matrix is positive semidefinite.
  Also, $M$ represents $\norm{x}^2+\norm{y}^2\cdot \norm{z}^2-2\iprod{x,T(y \otimes z)}$.
  Since $u_i=T (v_i\otimes w_i)$ for all $i\in [r]$ and the kernel of $\Id-R''$ only contains span of $v_i\otimes w_i$, the kernel of $M$ is exactly the span of the vectors $(u_i,v_i\otimes w_i)$.
  
  Next, we show that the above matrix $M$ implies that \cref{alg:tensor} recovers the unknown tensor $X$.
  Recall that the algorithm on input $X_\Omega$ finds a pseudo-distribution $\mu(x,y,z)$ so as to minimize $\pE_{\mu}\norm{x}^2 + \norm{y}^2\cdot \norm{z}^2$ such that $(\pE_{\mu}x\otimes y \otimes z)_\Omega=X_\Omega$.
  Since everything is scale invariant, we may assume that $X=\sum_{i=1}^r \lambda_i \cdot u_i\otimes v_i\otimes w_i$ for $\lambda_1,\ldots,\lambda_r\ge 0$ and $\sum_i \lambda_i = 1$.
  Then, a valid pseudo-distribution would be the probability distribution over $(u_1,v_1,w_1),\ldots,(u_r,v_r,w_r)$ with probabilities $\lambda_1,\ldots,\lambda_r$.
  Let $\mu$ be the pseudo-distribution computed by the algorithm.
  By optimality of $\mu$, we know that the objective value satisfies $\pE_{\mu} \norm{x}^2+\norm{y}^2\cdot \norm{z}^2\le \E_{i\sim \lambda} \norm{u_i}^2+\norm{v_i}^2\cdot \norm{w_i}^2=2$.
  Then, if we let $Y=\E_{\mu} \dyad{(x, y \otimes z)}$,
  \begin{align*}
    0\le \iprod{M,Y} 
    &= \pE_{\mu(x,y,z)} \norm{x}^2 + \norm{y}^2\cdot \norm{z}^2 - 2\iprod{x,T (y\otimes z)}\\
     &\le 2 - 2\pE_{\mu(x,y,z)}\iprod{x,T (y\otimes z)}\\
     &= 2 - 2\E_{i\sim \lambda}\iprod{u_i,T (v_i\otimes w_i)}\\ 
        & = 0 
  \end{align*}
  The first step uses that $M$ and $Y$ are psd.
  The second step uses that $M$ represents the polynomial $\norm{x}^2 + \norm{y}^2\cdot \norm{z}^2 - 2\iprod{x,T (y\otimes z)}$.
  The third step uses that $\mu$ minimizes the objective function.
  The fourth step uses that the entries of $T$ are $0$ outside of $\Omega$ and that $\mu$ matches the observations $(\pE_{\mu} x\otimes y \otimes z)_\Omega=X_\Omega$.
  The last step uses that $u_i=T (v_i\otimes w_i)$ for all $i\in [r]$.

  We conclude that $\iprod{M,Y}=0$, which means that the range of $Y$ is contained in the kernel of $M$.
  Therefore, $Y=\sum_{i,j=1}^r \gamma_{i,j}\cdot (u_i, v_i\otimes w_i)\transpose{(u_j, v_j\otimes w_j)}$ for scalars $\{\gamma_{i,j}\}$.
  We claim that the multipliers must satisfy $\gamma_{i,i}=\lambda_i$ and $\gamma_{i,j}=0$ for all $i\neq j\in [r]$.
  Indeed since $\mu$ matches the observations in $\Omega$, 
  \begin{displaymath}
    0 = \sum_{i,j=1}^n (\lambda_i - \gamma_{i,j}\delta_{ij}) \cdot (u_i\otimes v_j\otimes w_j)_\Omega \,.
  \end{displaymath}
  Since the vectors $(u_i\otimes v_j\otimes w_j)_\Omega$ are linearly independent, we conclude that $\gamma_{i,j}=\lambda_{i}\cdot \delta_{ij}$ as desired.
  (This linear independence was one of the requirements of the certificate in \cref{def:certificate}.)
\end{proof}

\subsection{Degree-4 certificates exist with high probability}
\label{sec:certificate}

In this section we show that our certificate $T$ in fact satisfies the conditions for a degree-4 certificate, proving \cref{thm:certificate}. 

We use the same construction as in \cref{sec:certificate-higher}.
The main, remaining technical challenge for \cref{thm:certificate} is to show that the construction satisfies the spectral norm condition of \cref{def:certificate}.
This spectral norm bound follows from the following theorem which we give a proof sketch for in \cref{sec:full-trace-power}.
\begin{theorem}
  \label{thm:normboundtheorem}
  Let $A = (\bar{R}_{\Omega_l}P)\cdots(\bar{R}_{\Omega_1}P)(\bar{R}_{\Omega_0}X)$ or $P(\bar{R}_{\Omega_l}P)\cdots(\bar{R}_{\Omega_1}P)(\bar{R}_{\Omega_0}X)$ and let $B = (\bar{R}_{\Omega_{l'}}P)\cdots(\bar{R}_{\Omega_1}P)(\bar{R}_{\Omega_0}X)$ or $P(\bar{R}_{\Omega_{l'}}P)\cdots(\bar{R}_{\Omega_1}P)(\bar{R}_{\Omega_0}X)$. 
  There is an absolute constant $C$ such that for any $\alpha > 1$ and $\beta > 0$, 
  $$\Pr\left[\Norm{\sum_{a}{A_a \otimes B_a^T}} > \alpha^{-(l+l'+2)}\right] < n^{-\beta}$$
  as long as $m > C\alpha\beta\mu^{\frac{3}{2}}r n^{1.5}\cdot\log(n)$ and
  $m > C\alpha\beta\mu^{2}rn\log(n)$.
\end{theorem}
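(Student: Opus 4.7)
The plan is to prove \cref{thm:normboundtheorem} via the trace moment method, as flagged in the introduction (the paper explicitly notes that ``we cannot apply general matrix concentration inequalities''). Writing $M = \sum_{a} A_a \otimes \transpose{B_a}$, we have $\|M\|^{2k} \le \tr\paren{(M\transpose{M})^k}$, so it suffices to bound $\E \tr\paren{(M\transpose M)^k}$ for $k = \Theta(\beta \log n)$ and then apply Markov's inequality. Choosing this $k$ converts a moment bound of the form $\alpha^{-2k(l+l'+2)} n^{O(1)}$ into the desired high-probability tail bound $\alpha^{-(l+l'+2)}$, provided the constants hidden in the $O(\cdot)$ are linear in $k$.

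First I would expand $\tr((M\transpose M)^k)$ as a sum over closed combinatorial walks. Each occurrence of $M$ or $\transpose M$ brings in a length-$(l+1)$ or length-$(l'+1)$ product $(\bar R_{\Omega_l} P) \cdots (\bar R_{\Omega_1} P)(\bar R_{\Omega_0} X)$ (possibly preceded by $P$), together with a tensor leg indexed by $a$. Expanding $X = \sum_i u_i \otimes v_i \otimes w_i$ and $P$ in the $\{u_i \otimes v_j \otimes w_k\}$ basis, and writing each $\bar R_{\Omega_j}$ coordinate-wise, the resulting sum is indexed by a colored diagram: a vertex for each intermediate index, an edge for each factor $\bar R_{\Omega_j}$ labeled by the iteration $j$, and a decoration tracking whether legs live in $u$-, $v$-, or $w$-space. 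Crucially, since the different $\Omega_j$ are independent and each $\bar R_\Omega$ has $\E[\bar R_\Omega] = 0$, a diagram contributes zero unless every $\Omega_j$-edge is traversed at least twice. This matching constraint is the engine of cancellation and is directly analogous to (but more delicate than) the matching that powers Wigner-style spectral bounds.

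Next I would bound the contribution of each diagram type. Each surviving edge pair contributes a factor depending on $m/n^{1.5}$ (the scaling in $R_\Omega$) together with a product of four basis-vector coordinates, which by $\mu$-incoherence is at most $(\mu/n)^{2}$. Gluing these factors along the diagram and summing over vertex labels produces, for each topological type of diagram, a value that one can read off from an associated ``skeleton graph''; the number of free index summations equals the number of connected components of the skeleton, which in turn controls the dominant $n^{\text{power}}$ factor. The whole expression then factors as (diagram value) $\times$ (number of diagrams of that topology). The hypotheses $m \ge C \alpha \beta \mu^{3/2} r n^{1.5} \log n$ and $m \ge C \alpha \beta \mu^2 r n \log n$ are precisely what is needed to show that each surviving $(l+l'+2)$-length factor in a diagram contributes at most $\alpha^{-2}$, yielding the geometric $\alpha^{-2k(l+l'+2)}$ decay.

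The hard part will be the combinatorial enumeration: there are doubly-exponentially many diagram topologies in $k$ and in $l+l'$, and one must show that their total count is dominated by the per-diagram decay. This requires a careful classification of diagrams by the number of ``excess'' edge repetitions and tensor crossings induced by the $P$ projections (which, unlike ordinary matrix products, contract three legs at once and couple the $u$-, $v$-, $w$-colorings nontrivially). My approach would be to stratify diagrams by their Euler characteristic or equivalently by (edges) $-$ (vertices) $+$ (components), show that each unit of ``excess'' costs at least one factor of $\mu^2 r / n^{0.5}$ or $\mu^2 r / n$ via the hypothesis on $m$, and then absorb the diagram count into the slack. This is essentially the strategy announced in \cref{normmethodssection} of the paper; \cref{thm:squarenormboundtheorem} (the $A=B$ version) is obtained as an immediate corollary by specializing $B = A$ and $l' = l$.
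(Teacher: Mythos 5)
Your proposal follows the same strategy as the paper's actual proof in \cref{sec:full-trace-power}: the trace moment method with $q=\Theta(\beta\log n)$, expansion of $\tr((MM^T)^q)$ into a diagram/hypergraph sum, the observation that independence and zero mean of the $\bar R_{\Omega_j}$ forces every level-$j$ triangle to be repeated, incoherence bounds on the surviving entry products, and a classification of diagrams by an excess/Euler-characteristic parameter that the hypotheses on $m$ penalize. One accounting imprecision worth flagging: the dominant $n$-power is governed by the total number of distinct summed-over indices (roughly, edges plus components minus excess), not by the number of connected components alone as you assert; the paper tracks this via the $\Delta$-discrepancy coefficients and the hyperedge-ordering argument of \cref{choosingordering}, and the random-partitioning reduction of \cref{randompartitionsection} is needed to justify the level-wise equality/inequality splitting — these pieces constitute the bulk of the work that your sketch explicitly defers as "the hard part."
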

\begin{remark}
If it were true in general that $||\sum_{a}{A_{a} \otimes B^T_a}|| \leq \sqrt{||\sum_{a}{A_{a} \otimes A^T_{a}}||}\sqrt{||\sum_{a}{B_{a} \otimes B^T_{a}}||}$ then it would be sufficient to use Theorem \ref{thm:squarenormboundtheorem} and we would not need to prove Theorem \ref{thm:normboundtheorem}. Unfortunately, this is not true in general.

That said, it may be possible to show that even if we do not know directly that $||\sum_{a}{A_{a} \otimes B^T_a}||$ is small, since $||\sum_{a}{A_{a} \otimes A^T_{a}}||$ and $||\sum_{a}{B_{a} \otimes B^T_{a}}||$ are both small there must be some alternative matrix representation of $\sum_{a}{A_{a} \otimes B^T_a}$ which has small norm, and this is sufficient. We leave it as an open problem whether this can be done.
\end{remark}

We have now all ingredients to prove \cref{thm:certificate}.

\begin{proof}[Proof of \cref{thm:certificate}]
  Let $k=(\log n)^C$ for some absolute constant $C\ge 1$.
  Let $E= (-1)^k P(P\bar R_{\Omega_k})\cdots(P\bar R_{\Omega_1}) [X]$.
  By \cref{lem:correct-error} there exists $Y$ with $(Y)_\Omega=Y$ and $P[Y]=E$ such that $\norm{Y}_F\le O(1)\norm{E}$.
  We let $T=\super T k + Y$.
  This tensor satisfies the desired linear constraints $(T)_\Omega=T$ and $P[T]=X$.
  Since $E$ has the form of the matrices in \cref{thm:normboundtheorem}, the bound in \cref{thm:normboundtheorem} implies $\norm{E}_F\le 2^{-k}\cdot n^{10}\le n^{-C+10}$.
  (Here, we use that the norm in the  conclusion of \cref{thm:normboundtheorem} is within a factor of $n^{10}$ of the Frobenius norm.)
  
  We are to prove that the following matrix has spectral norm bounded by $0.01$,
  \begin{displaymath}
    \sum_{a=1}^n (T)_a \otimes (T)_a^T - \sum_{a=1}^n X_a \otimes X_a^T\,.
  \end{displaymath}
  We expand the sum according to the definition of $\super T \ell$ in \cref{eq:construction}.
  Then, most terms that appear in the expansion have the form as in \cref{thm:normboundtheorem}.
  Since those terms decrease geometrically, we can bound their contribution by $0.001$ with probability $1-n^{-\omega(1)}$.
  The terms that involve the error correction $Y$ is smaller than $0.001$ because $Y$ has polynomially small norm $\norm{Y}_F\le n^{-C+10}$.
  The only remaining terms are cross terms between $X$ and a tensor of the form as in \cref{thm:normboundtheorem}.
  We can bound the total contribution of these terms also bounded by at most $0.001$ using \cref{xcrosstermnormboundtheorem}.
\end{proof}

\section{Matrix norm bound techniques}\label{normmethodssection}
In this section, we describe the techniques that we will use to prove probabilistic norm bounds on matrices of the form $Y = \sum_{a}{(\bar{R}_{\Omega}A)_a \otimes (\bar{R}_{\Omega}A)^T_a}$. We will prove these norm bounds using the trace moment method, which obtains probabilistic bounds on the norm of a matrix $Y$ from bounds on the expected value of $tr((YY^T)^q)$ for sufficiently large $q$. This will require analyzing $tr((YY^T)^q)$, which will take the form of a sum of products, where the terms in the product are either entries of $A$ or terms of the form $\bar{R}_{\Omega}(a,b,c)$ where $\bar{R}_{\Omega}(a,b,c) = \frac{n^3}{m} - 1$ if $(a,b,c) \in \Omega$ and $-1$ otherwise. To analyze $tr((YY^T)^q)$, we will group products together which have the same expected behavior on the $\bar{R}_{\Omega}(a,b,c)$ terms, forming smaller sums of products. For each of these sums, we can then use the same bound on the expected behavior of the $\bar{R}_{\Omega}(a,b,c)$ terms for each product in the sum. This allows us to move this bound outside of the sum, leaving us with a sum of products of entries of $A$. We will then bound the value of these sums by carefully choosing the order in which we sum over the indices.

In the reainder of this section and in the next two sections, we allow for our tensors to have asymmetric dimensions. We account for this with the following definitions.
\begin{definition}
We define $n_1$ to the dimension of the $u$ vectors, $n_2$ to be the dimension of the $v$ vectors, and $n_3$ to be the dimension of the $w$ vectors. We define $n_{max} = \max{\{n_1,n_2,n_3\}}$
\end{definition}
\subsection{The trace moment method}
We use the trace moment method through the following proposition and corollary.
\begin{proposition}\label{powertonormboundprop}
For any random matrix $Y$, for any integer $q \geq 1$ and any $\epsilon > 0$, 
$$Pr\left[||Y|| > \sqrt[2q]{\frac{E\left[tr((YY^T)^q)\right]}{\epsilon}}\right] < \epsilon$$
\end{proposition}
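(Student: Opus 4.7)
The plan is to reduce this to a one-line application of Markov's inequality, using the fact that for any fixed matrix the operator norm is dominated by any Schatten $2q$-norm. First I would observe that if $\sigma_1 \geq \sigma_2 \geq \cdots \geq \sigma_n \geq 0$ are the singular values of $Y$, then $YY^T$ has eigenvalues $\sigma_i^2$, and therefore $(YY^T)^q$ has eigenvalues $\sigma_i^{2q}$. Taking the trace gives
\begin{equation*}
\tr((YY^T)^q) = \sum_i \sigma_i^{2q} \geq \sigma_1^{2q} = \|Y\|^{2q}.
\end{equation*}
This holds pointwise (i.e.\ for every realization of the random matrix $Y$), which is the key deterministic ingredient.

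Next I would take expectations over the randomness of $Y$. Since $\tr((YY^T)^q)$ is a nonnegative random variable, Markov's inequality gives, for any threshold $t > 0$,
\begin{equation*}
\Pr\bigl[\tr((YY^T)^q) > t\bigr] \;<\; \frac{\E[\tr((YY^T)^q)]}{t}.
\end{equation*}
Combining this with the pointwise inequality $\|Y\|^{2q} \leq \tr((YY^T)^q)$ yields
\begin{equation*}
\Pr\bigl[\|Y\|^{2q} > t\bigr] \;<\; \frac{\E[\tr((YY^T)^q)]}{t}.
\end{equation*}

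Finally, I would choose $t = \E[\tr((YY^T)^q)]/\epsilon$ so that the right-hand side equals $\epsilon$, and take the $2q$-th root of both sides inside the probability (which is monotone on nonnegatives) to conclude the desired bound. There is no real obstacle here; this is a routine trace-moment / Markov argument, and the statement of the proposition is essentially packaging this inequality into a form ready to feed into the upcoming trace-moment analysis of $\tr((YY^T)^q)$ for the specific matrix of interest.
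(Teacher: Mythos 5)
Your argument is correct and matches the paper's proof: both apply Markov's inequality to the nonnegative random variable $\tr((YY^T)^q)$ and then use the deterministic inequality $\|Y\|^{2q} \leq \tr((YY^T)^q)$ to transfer the tail bound to $\|Y\|$. The only difference is cosmetic — you spell out the singular-value decomposition step that justifies $\|Y\|^{2q} \leq \tr((YY^T)^q)$, while the paper treats it as an ``observation.''
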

\begin{proof}
By Markov's inequality, for all integers $q \geq 1$ and all $\epsilon > 0$
$$Pr\left[tr((YY^T)^q) > \frac{E\left[tr((YY^T)^q)\right]}{\epsilon}\right] < \epsilon$$
The result now follows from the observation that if 
$||Y|| > \sqrt[2q]{\frac{E\left[tr((YY^T)^q)\right]}{\epsilon}}$ then $tr((YY^T)^q) > \frac{E\left[tr((YY^T)^q)\right]}{\epsilon}$.
\end{proof}
\begin{corollary}\label{powertonormboundcorollary}
For a given $p \geq 1$, $r \geq 0$, $n > 0$, and $B > 0$, for a random matrix $Y$, if $E\left[tr\left((YY^T)^q\right)\right] \leq ({q^p}B)^{2q}n^r$ for all integers $q > 1$ then for all $\beta > 0$, 
$$Pr\left[||Y|| > Be^p\left(\frac{(r+\beta)}{2p}\ln{n} + 1\right)^p\right] < n^{-\beta}$$
\end{corollary}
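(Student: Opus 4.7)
The plan is to apply Proposition~\ref{powertonormboundprop} with $\epsilon = n^{-\beta}$ and then optimize over the integer parameter $q$. Substituting the hypothesis $E[\tr((YY^T)^q)] \le (q^p B)^{2q} n^r$ into the conclusion of the proposition gives, for every integer $q \ge 2$,
\begin{equation}
\Pr\left[\|Y\| > q^p B \cdot n^{(r+\beta)/(2q)}\right] < n^{-\beta},
\end{equation}
so the task reduces to choosing $q$ so that $q^p \cdot n^{(r+\beta)/(2q)} \le e^p\bigl(\tfrac{r+\beta}{2p}\ln n + 1\bigr)^p$.

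Writing the quantity to be minimized as $q^p \exp(c/q)$ with $c = \tfrac{r+\beta}{2}\ln n$, the continuous-variable derivative is proportional to $pq - c$, so the minimum of the real-variable relaxation occurs at $q_\star = c/p = \tfrac{r+\beta}{2p}\ln n$, where $\exp(c/q_\star) = e^p$ and $q_\star^p = \bigl(\tfrac{r+\beta}{2p}\ln n\bigr)^p$. This already gives the right shape of the bound up to the additive $+1$ that is needed to round $q_\star$ up to an integer.

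To obtain a valid integer, I would take $q = \lceil q_\star \rceil$. Because $q \ge q_\star$ the exponential factor stays at most $e^p$, and because $q \le q_\star + 1$ the polynomial factor is at most $\bigl(\tfrac{r+\beta}{2p}\ln n + 1\bigr)^p$. Multiplying the two bounds and reinstating the factor $B$ yields exactly the right-hand side of the claim, finishing the argument.

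There is no serious technical obstacle here: the whole proof is the standard AM--GM-style balancing of two competing factors, so the only thing to watch is the boundary regime $q_\star < 2$ (which forces $q = 2$ since the hypothesis is stated only for integers $q > 1$). In that regime $\ln n$ is small and the additive $+1$ in the target bound, together with the factor $e^p$, easily absorbs the slack $2^p \cdot n^{(r+\beta)/4}$, so the stated conclusion continues to hold without modification.
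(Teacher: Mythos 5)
Your proof matches the paper's almost exactly: apply Proposition~\ref{powertonormboundprop} with $\epsilon = n^{-\beta}$, balance $q^p$ against $n^{(r+\beta)/(2q)}$ at $q_\star = \frac{r+\beta}{2p}\ln n$, take $q = \lceil q_\star\rceil$, and use $q \ge q_\star$ (so $n^{(r+\beta)/(2q)} \le e^p$) together with $q \le q_\star+1$ to arrive at the stated bound. The one thing you add that the paper silently omits is the boundary regime $q_\star < 2$, where $\lceil q_\star\rceil$ could be $1$ even though the hypothesis only covers integers $q \ge 2$; your check that $2e^{q_\star/2} \le e(q_\star+1)$ for $q_\star \in [0,2]$ (the function $q_\star+1 - \tfrac{2}{e}e^{q_\star/2}$ is increasing on $[0,2]$ and positive at $0$) shows taking $q=2$ still gives the claimed bound, so your argument is in fact slightly more careful than the one in the text.
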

\begin{proof}
We take $\epsilon = n^{-\beta}$ and we choose $q$ to minimize $\sqrt[2q]{\frac{({q^p}B)^{2q}n^r}{\epsilon}} = B{q^p}n^{\frac{r+\beta}{2q}}$. Setting the derivative of this expression to $0$ we obtain that $(\frac{p}{q}-\frac{r+\beta}{2q^2}\ln{n})B{q^p}n^{\frac{r+\beta}{2q}} = 0$, so we want $q = \frac{r+\beta}{2p}\ln{n}$. However, $q$ must be an integer, so we instead take $q = \lceil{\frac{r+\beta}{2p}\ln{n}}\rceil$. With this $q$, we have that
$$B{q^p}n^{\frac{r+\beta}{2q}} \leq B\left(\frac{r+\beta}{2p}\ln{n} + 1\right)^p{n^{\frac{p}{\ln{n}}}} = Be^p\left(\frac{(r+\beta)}{2p}\ln{n} + 1\right)^p$$
Applying Proposition \ref{powertonormboundprop} with $q$, we obtain that
$$Pr\left[||Y|| > \sqrt[2q]{\frac{E\left[tr((YY^T)^q)\right]}{\epsilon}}\right] 
\leq Pr\left[||Y|| > Be^p\left(\frac{(r+\beta)}{2p}\ln{n} + 1\right)^p\right] < n^{-\beta}$$
\end{proof}
\subsection{Partitioning by intersection pattern}
As discussed at the beginning of the section, $E\left[tr((YY^T)^q)\right]$ will be a sum of products, where part of these products will be of the form $\prod_{i=1}^{2q'}{\bar{R}_{\Omega}(a_i,b_i,c_i)}$. Here, $q'$ may or may not be equal to $q$, in fact we will often have $q' = 2q$ because each $Y$ will contribute two terms of the form $\bar{R}_{\Omega}(a,b,c)$ to the product. To handle this part of the product, we partition the terms of our sum based on the intersection pattern of which triples $(a_i,b_i,c_i)$ are equal to each other. Fixing an intersection pattern determines the expected value of $\prod_{i=1}^{2q'}{\bar{R}_{\Omega}(a_i,b_i,c_i)}$.
\begin{definition}
We define an intersection pattern to be a set of equalities and inequalities satisfying the following conditions
\begin{enumerate}
\item All of the equalities and inequalities are of the form $(a_{i_1},b_{i_1},c_{i_1}) = (a_{i_2},b_{i_2},c_{i_2})$ or $(a_{i_1},b_{i_1},c_{i_1}) \neq (a_{i_2},b_{i_2},c_{i_2})$, respectively.
\item For every $i_1,i_2$, either $(a_{i_1},b_{i_1},c_{i_1}) = (a_{i_2},b_{i_2},c_{i_2})$ is in the intersection pattern or $(a_{i_1},b_{i_1},c_{i_1}) \neq (a_{i_2},b_{i_2},c_{i_2})$ is in the intersection pattern
\item All of the equalities and inequalities are consistent with each other, i.e. there exist values of $(a_{1},b_{1},c_{1}),\cdots,(a_{2q},b_{2q},c_{2q})$ satisfying all of the equalities and inequalities in the intersection pattern.
\end{enumerate}
\end{definition}
\begin{proposition}
For a given $(a,b,c)$,
\begin{enumerate}
\item $E\left[\bar{R}_{\Omega}(a,b,c)\right] = 0$
\item For all $k > 1$, $E\left[\left(\bar{R}_{\Omega}(a,b,c)\right)^k\right] \leq \left(\frac{{n_1}{n_2}{n_3}}{m}\right)^{k-1}$
\end{enumerate}
\end{proposition}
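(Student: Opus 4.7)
The proposition concerns a very simple two-valued random variable, so the proof will be a direct elementary moment calculation rather than anything requiring concentration machinery. Let $p$ denote the probability that a fixed triple $(a,b,c)$ belongs to $\Omega$; under the sampling model used throughout the construction, $p = m/(n_1 n_2 n_3)$, chosen precisely so that $R_\Omega$ has expectation the identity. By the definition of $\bar R_\Omega$, the random variable $\bar R_\Omega(a,b,c)$ takes the value $1/p - 1 = n_1 n_2 n_3/m - 1$ with probability $p$ and the value $-1$ with probability $1-p$.

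For part (1), I would simply substitute:
\begin{equation*}
E[\bar R_\Omega(a,b,c)] = p \cdot (1/p - 1) + (1-p)\cdot (-1) = (1-p) - (1-p) = 0,
\end{equation*}
which is of course exactly the reason the scaling was chosen.

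For part (2), I would write the raw $k$-th moment directly as
\begin{equation*}
E[\bar R_\Omega(a,b,c)^k] = p(1/p - 1)^k + (1-p)(-1)^k = \frac{(1-p)^k}{p^{k-1}} + (-1)^k(1-p),
\end{equation*}
and then factor out $(1-p)$ to obtain $(1-p)\bigl[((1-p)/p)^{k-1} + (-1)^k\bigr]$. The goal is to bound this by $p^{1-k} = (n_1 n_2 n_3/m)^{k-1}$. Rewriting the bracket over the common denominator $p^{k-1}$ reduces the problem to the elementary inequality $(1-p)^{k-1} + p^{k-1} \leq 1$ for $k \geq 2$ and $p \in [0,1]$, which follows term-wise from $(1-p)^{k-1} \leq (1-p)$ and $p^{k-1} \leq p$.

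The only subtlety — and hence the main bookkeeping step — is the sign for odd $k$: there the bracket reads $((1-p)/p)^{k-1} - 1$, which is negative when $p > 1/2$ and positive when $p < 1/2$. In the positive regime the upper bound above applies, and in the negative regime the entire expectation is negative and is thus trivially at most $(n_1 n_2 n_3/m)^{k-1}$. Either way the stated bound holds, completing the proof.
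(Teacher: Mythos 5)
Your proof is correct. The paper states this proposition without proof, treating it as an elementary fact about a two-valued random variable; your direct moment computation is exactly the verification being left to the reader, and your sign analysis for odd $k$ is a careful extra step (in the operative regime $m \ll n_1 n_2 n_3$ one has $p \ll 1/2$, so the negative branch never actually arises, but it is good practice to cover it since the bound is claimed for all $p$).
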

\begin{corollary}\label{patterncorollary}
For a given intersection pattern, if there is any triple $(a,b,c)$ which appears exactly once, 
$E\left[\prod_{i=1}^{2q'}{\bar{R}_{\Omega}(a_i,b_i,c_i)}\right] = 0$. Otherwise, letting $z$ be the number of distinct triples, $E\left[\prod_{i=1}^{2q'}{\bar{R}_{\Omega}(a_i,b_i,c_i)}\right] \leq \left(\frac{{n_1}{n_2}{n_3}}{m}\right)^{2q'-z}$
\end{corollary}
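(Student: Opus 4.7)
The plan is to reduce the mixed moment to a product of one-variable moments by exploiting the independence of $\bar{R}_\Omega$ across distinct triples, and then apply the preceding proposition term by term. Under the Bernoulli sampling model (each entry of $[n_1]\times[n_2]\times[n_3]$ included in $\Omega$ independently with probability $m/(n_1 n_2 n_3)$), the random variables $\{\bar{R}_\Omega(a,b,c)\}_{(a,b,c)}$ are mutually independent. So if $t_1,\ldots,t_z$ denote the distinct triples appearing among $(a_1,b_1,c_1),\ldots,(a_{2q'},b_{2q'},c_{2q'})$, with multiplicities $k_1,\ldots,k_z$ (so $\sum_{j=1}^z k_j = 2q'$), the product rewrites as
\begin{equation*}
\prod_{i=1}^{2q'}\bar{R}_\Omega(a_i,b_i,c_i) \;=\; \prod_{j=1}^{z}\bigl(\bar{R}_\Omega(t_j)\bigr)^{k_j},
\end{equation*}
and the expectation factorizes:
\begin{equation*}
E\!\left[\prod_{i=1}^{2q'}\bar{R}_\Omega(a_i,b_i,c_i)\right] \;=\; \prod_{j=1}^{z} E\!\left[\bigl(\bar{R}_\Omega(t_j)\bigr)^{k_j}\right].
\end{equation*}

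For the first claim, suppose some triple appears exactly once, i.e.\ $k_{j^*}=1$ for some $j^*$. Then the factor $E[\bar{R}_\Omega(t_{j^*})] = 0$ by part~(1) of the preceding proposition, so the entire product of expectations vanishes.

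For the second claim, suppose every multiplicity satisfies $k_j\geq 2$. Applying part~(2) of the preceding proposition to each factor gives
\begin{equation*}
E\!\left[\bigl(\bar{R}_\Omega(t_j)\bigr)^{k_j}\right] \;\leq\; \left(\frac{n_1 n_2 n_3}{m}\right)^{k_j - 1}.
\end{equation*}
Multiplying these bounds and using $\sum_{j=1}^{z}(k_j - 1) = 2q' - z$ yields
\begin{equation*}
E\!\left[\prod_{i=1}^{2q'}\bar{R}_\Omega(a_i,b_i,c_i)\right] \;\leq\; \left(\frac{n_1 n_2 n_3}{m}\right)^{2q' - z},
\end{equation*}
which is the claimed bound. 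There is no real obstacle here; the step that requires a moment of thought is verifying independence of the $\bar{R}_\Omega(t_j)$'s, which is immediate from the Bernoulli sampling model — and aside from that the result is just bookkeeping, combining the two parts of the previous proposition in the two cases.
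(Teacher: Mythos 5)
Your proof is correct and takes essentially the same route as the paper: factor the expectation over distinct triples using the independence of the $\bar{R}_\Omega$ values in the Bernoulli sampling model, apply the preceding proposition's moment bound to each factor, and sum the exponents. The only minor difference is that you explicitly flag the independence step, which the paper leaves implicit.
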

\begin{proof}
for a given intersection pattern, let $(a_{i_1},b_{i_1},c_{i_1}),\cdots,(a_{i_z},b_{i_z},c_{i_z})$ be the distinct triples and let $c_j$ be the number of times the triple $(a_{i_j},b_{i_j},c_{i_j})$ appears. We have that 
$$E\left[\prod_{i=1}^{2q'}{\bar{R}_{\Omega}(a_i,b_i,c_i)}\right] = \prod_{j=1}^{z}{E\left[\left(\bar{R}_{\Omega}(a_{i_j},b_{i_j},c_{i_j})\right)^{c_j}\right]}$$
If $c_j = 1$ for any $j$ then this expression is $0$. Otherwise, 
$$\prod_{j=1}^{z}{E\left[\left(\bar{R}_{\Omega}(a_{i_j},b_{i_j},c_{i_j})\right)^{c_j}\right]}
\leq \prod_{j=1}^{z}{\left(\frac{{n_1}{n_2}{n_3}}{m}\right)^{c_j-1}} = \left(\frac{{n_1}{n_2}{n_3}}{m}\right)^{\left(\sum_{j=1}^{z}{c_j}\right) - z} = \left(\frac{{n_1}{n_2}{n_3}}{m}\right)^{2q'-z}$$
\end{proof}
\subsection{Bounding sums of products of tensor entries}
In this subsection, we describe how to bound the sum of products of tensor entries we obtain for a given intersection pattern after moving our bound on the expected value of the $\bar{R}_{\Omega}(a,b,c)$ terms outside the sum. We represent such a product with a hypergraph as follows.
\begin{definition}
Given a set of distinct indices and a set of tensor entries on those indices, let $H$ be the hypergraph with one vertex for each distinct index and one hyperedge for each tensor entry, where the hyperedge consists of all indices contained in the tensor entry. If the tenor entry appears to the pth power, we take this hyperedge with multiplicity $p$.
\end{definition}
With this definition in mind, we will first preprocess our products.
\begin{enumerate}
\item We will preprocess the tensor entries so that every entry appears to an even power using the inequality $|ab| \leq \frac{1}{2}(a^2 + b^2)$. This has the effect of taking two hyperedges of our choice in $H$ and replacing them with one doubled hyperedge or the other (we have to consider both possibilities). Note that this step makes all of our terms positive and can only increase their magnitude, so the result will be an upper bound on our actual sum.
\item We will add the missing terms to our sum so that for we sum over every possibility for the distinct indices (even the possibilities which make several of these indices equal and would put us in a different intersection pattern). Note that this can only increase our sum.
\end{enumerate}
\begin{remark}
It is important that we first bound the expected value of the $\bar{R}_{\Omega}(a,b,c)$ terms and move this bound outside of our sum before adding the missing terms to the sum.
\end{remark}
After preprocessing our products, our strategy will be as follows. We will sum over the indices, removing the corresponding vertices from $H$. As we do this, we will apply appropriate bounds on squared tensor entries, removing the corresponding doubled hyperedge from $H$. To obtain these bounds, we observe that we can bound the average square of our tensor entries in terms of the number of indices we are averaging over.
\begin{definition}
We say that an order 3 tensor $A$ of dimensions $n_1 \times n_2 \times n_3$ is $(B,r,\mu)$-bounded if the following bounds are true
\begin{enumerate}
\item $\max_{a,b,c}{\{A^2_{abc}\}} \leq Br$
\item $\max{\{\max_{b,c}{\{\frac{1}{n_1}\sum_{a}{A^2_{abc}}\}},
\max_{a,c}{\{\frac{1}{n_2}\sum_{b}{A^2_{abc}}\}},
\max_{a,b}{\{\frac{1}{n_3}\sum_{c}{A^2_{abc}}\}}\}} \leq \frac{B}{\mu}$
\item $\max{\{\max_{c}{\{\frac{1}{{n_1}{n_2}}\sum_{a,b}{A^2_{abc}}\}},
\max_{b}{\{\frac{1}{{n_1}{n_3}}\sum_{a,c}{A^2_{abc}}\}},
\max_{a}{\{\frac{1}{{n_2}{n_3}}\sum_{b,c}{A^2_{abc}}\}}\}} \leq \frac{B}{\mu^2}$
\item $\frac{1}{{n_1}{n_2}{n_3}}\sum_{a,b,c}{A^2_{abc}} \leq \frac{B}{\mu^3}$
\end{enumerate}
More generally, we say that a tensor $A$ is $(B,r,\mu)$-bounded if the following is true
\begin{enumerate}
\item The maximum value of an entry of $A$ squared is at most $Br$
\item Every index which we average over decreases our upper bound by a factor of $\mu$
\item If we are averaging over at least one index then we can delete the factor of $r$ in our bound.
\end{enumerate}
Since $r$ and $\mu$ will always be the same, we write $B$-bounded rather than $(B,r,\mu)$-bounded
\end{definition}
To give a sense of why these are the correct type of bounds to use, we now show that $X$ is $\left(\frac{r\mu^3}{{n_1}{n_2}{n_3}}\right)$-bounded. In Section \ref{iterativesection}, we will use an iterative argument to show that with high probability, similar bounds hold for all of the tensors $A$ we will be considering.
\begin{proposition}\label{actualtensorbounds}
$X$ is $\left(\frac{r\mu^3}{{n_1}{n_2}{n_3}}\right)$-bounded
\end{proposition}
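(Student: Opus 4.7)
The plan is to write $X_{abc}=\sum_{i=1}^r (u_i)_a(v_i)_b(w_i)_c$ and then verify each of the four bounds in the definition of $(B,r,\mu)$-boundedness with $B=r\mu^3/(n_1n_2n_3)$ directly. The two ingredients are (i) $\mu$-incoherence, which says $(u_i)_a^2\le \mu/n_1$, $(v_i)_b^2\le \mu/n_2$, $(w_i)_c^2\le \mu/n_3$ (since the vectors are unit), and (ii) orthonormality of $\{u_i\}$, $\{v_i\}$, $\{w_i\}$, which collapses cross terms $\langle u_i,u_j\rangle=\delta_{ij}$ etc.\ whenever we sum over a whole coordinate direction.

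For the entrywise bound, I would apply Cauchy--Schwarz to $X_{abc}^2=\bigl(\sum_i (u_i)_a(v_i)_b(w_i)_c\bigr)^2\le r\sum_i (u_i)_a^2(v_i)_b^2(w_i)_c^2$ and then plug in the three incoherence estimates to get $X_{abc}^2\le r^2\mu^3/(n_1n_2n_3)=Br$. For the three ``average over one index'' bounds, I would expand the square and use orthonormality of the corresponding family to kill the cross terms; for example,
\begin{equation*}
\tfrac1{n_1}\sum_a X_{abc}^2=\tfrac1{n_1}\sum_{i,j}\langle u_i,u_j\rangle(v_i)_b(v_j)_b(w_i)_c(w_j)_c=\tfrac1{n_1}\sum_i (v_i)_b^2(w_i)_c^2\le \tfrac{r\mu^2}{n_1n_2n_3}=B/\mu,
\end{equation*}
and symmetric computations handle the averages over $b$ and over $c$.

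For the ``average over two indices'' bounds, the same expansion now uses orthonormality in two of the three directions; e.g.\ $\sum_{a,b}X_{abc}^2=\sum_{i,j}\langle u_i,u_j\rangle\langle v_i,v_j\rangle(w_i)_c(w_j)_c=\sum_i (w_i)_c^2\le r\mu/n_3$, yielding $B/\mu^2$ after dividing by $n_1n_2$, and symmetrically for the other two double averages. Finally, for the global average,
\begin{equation*}
\tfrac1{n_1n_2n_3}\sum_{a,b,c}X_{abc}^2=\tfrac1{n_1n_2n_3}\sum_i\|u_i\|^2\|v_i\|^2\|w_i\|^2=\tfrac{r}{n_1n_2n_3}=B/\mu^3,
\end{equation*}
which is the fourth bound.

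There is no real obstacle: everything reduces to expanding the square of the tensor entry and exploiting that summing over an index corresponding to an orthonormal family annihilates all off-diagonal contributions, after which the remaining diagonal sum is controlled by incoherence. The mild bookkeeping is only to check that the exponent on $\mu$ decreases by exactly one each time we trade a pointwise incoherence bound (on an un-summed coordinate) for an orthonormality collapse (on a summed coordinate), which matches the form of $(B,r,\mu)$-boundedness exactly.
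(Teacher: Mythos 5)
Your proof is correct and matches the paper's argument essentially step for step: expand $X_{abc}^2$ as a double sum over components, collapse cross terms via orthonormality when summing over a coordinate direction, and apply incoherence on the remaining un-summed coordinates. The only cosmetic difference is in the entrywise bound, where you invoke Cauchy--Schwarz to reduce to a single sum before applying incoherence, whereas the paper bounds each of the $r^2$ terms in the double sum directly by $\mu^3/(n_1 n_2 n_3)$; both yield $r^2\mu^3/(n_1 n_2 n_3)=Br$.
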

\begin{proof}
Recall that $X = \sum_{i=1}^{r}{u_i \otimes v_i \otimes w_i}$ where the vectors $\{u_i\}$ are orthonormal, the vectors $\{v_i\}$ are orthonormal, and the vectors $\{w_i\}$ are orthonormal. Also recall that for all $i,a,b,c$, $u^2_{ia} \leq \frac{\mu}{n_1}$, $v^2_{ib} \leq \frac{\mu}{n_2}$, and $w^2_{ic} \leq \frac{\mu}{n_3}$. We now have the following bounds:
\begin{enumerate}
\item $$\max_{a,b,c}{\{X^2_{abc}\}} = \max_{a,b,c}{\left\{\sum_{i=1}^{r}{\sum_{i'=1}^{r}{u_{ia}v_{ib}w_{ic}u_{i'a}v_{i'b}w_{i'c}}}\right\}} \leq \frac{r^2\mu^3}{{n_1}{n_2}{n_3}}$$
\item 
\begin{align*}
\max_{b,c}{\left\{\frac{1}{n_1}\sum_{a}{X^2_{abc}}\right\}} &= \frac{1}{n_1}\max_{b,c}{\left\{\sum_{a}{\sum_{i=1}^{r}{\sum_{i'=1}^{r}{u_{ia}v_{ib}w_{ic}u_{i'a}v_{i'b}w_{i'c}}}}\right\}} \\
&= \frac{1}{n_1}\max_{b,c}{\left\{\sum_{i=1}^{r}{\sum_{i'=1}^{r}{\left(\sum_{a}{u_{ia}u_{i'a}}\right)v_{ib}w_{ic}v_{i'b}w_{i'c}}}\right\}} \\
&= \frac{1}{n_1}\max_{b,c}{\left\{\sum_{i=1}^{r}{v^2_{ib}w^2_{ic}}\right\}} \\
&\leq \frac{r\mu^2}{{n_1}{n_2}{n_3}}
\end{align*}
The other bounds where we sum over one index follow by symmetrical arguments.
\item 
\begin{align*}\max_{c}{\left\{\frac{1}{{n_1}{n_2}}\sum_{a,b}{X^2_{abc}}\right\}} &= \frac{1}{{n_1}{n_2}}\max_{c}{\left\{\sum_{a,b}{\sum_{i=1}^{r}{\sum_{i'=1}^{r}{u_{ia}v_{ib}w_{ic}u_{i'a}v_{i'b}w_{i'c}}}}\right\}} \\
&= \frac{1}{{n_1}{n_2}}\max_{c}{\left\{\sum_{i=1}^{r}{\sum_{i'=1}^{r}{\left(\sum_{a}{u_{ia}u_{i'a}}\right)\left(\sum_{b}{v_{ib}v_{i'b}}\right)w_{ic}w_{i'c}}}\right\}} \\
&=\frac{1}{{n_1}{n_2}}\max_{c}{\left\{\sum_{i=1}^{r}{w^2_{ic}}\right\}} \\
&\leq \frac{r\mu}{{n_1}{n_2}{n_3}}
\end{align*}
The other bounds where we sum over two indices follow by symmetrical arguments.
\item 
\begin{align*}\frac{1}{{n_1}{n_2}{n_3}}\sum_{a,b,c}{X^2_{abc}} &= \frac{1}{{n_1}{n_2}{n_3}}\sum_{a,b,c}{\sum_{i=1}^{r}{\sum_{i'=1}^{r}{u_{ia}v_{ib}w_{ic}u_{i'a}v_{i'b}w_{i'c}}}} \\
&= \frac{1}{{n_1}{n_2}{n_3}}\sum_{i=1}^{r}{\sum_{i'=1}^{r}{\left(\sum_{a}{u_{ia}u_{i'a}}\right)\left(\sum_{b}{v_{ib}v_{i'b}}\right)\left(\sum_{c}{w_{ic}w_{i'c}}\right)}} \\
&=\frac{1}{{n_1}{n_2}{n_3}}\sum_{i=1}^{r}{1} \\
&= \frac{r}{{n_1}{n_2}{n_3}}
\end{align*}
\end{enumerate}
\end{proof}
With these kinds of bounds in mind, we bound sums of products of tensor entries as follows. We note that we can always apply the entrywise bound for a squared tensor entry. However, to apply any of the other bounds, we must be able to sum over an index or indices where the only term in our product which depends on this index or indices is the squared tensor entry. This can be described in terms of the hypergraph $H$ as follows.
\begin{definition}
Given a hyperedge $e$ in $H$, define $b(e)$ to the the minimal $B$ such that the tensor entry corresponding to $e$ is $B$-bounded.
\end{definition}
\begin{definition}
We say that a vertex is free in $H$ if it contained in only one hyperedge and this hyperedge appears with multiplicity two.
\end{definition}
We can apply our bounds in the following ways.
\begin{enumerate}
\item We can always choose a hyperedge $e$ of $H$, use the entrywise bound of $rb(e)$ on the corresponding squared tensor entry (note the extra factor of $r$), and reduce the multiplicity of $e$ by two.
\item If there is a free vertex incident with a doubled hyperedge $e$ in $H$, we can sum over all free vertices which are incident with $e$ using the corresponding bound then delete these vertices and the doubled hyperedge $e$ from $H$. When we do this, we obtain a factor of 
$$b(e)\left(\frac{n_1}{\mu}\right)^{\text{\# of deleted a vertices}}
\left(\frac{n_2}{\mu}\right)^{\text{\# of deleted b vertices}}\left(\frac{n_3}{\mu}\right)^{\text{\# of deleted c vertices}}$$
The factors of $n_1,n_2,n_3$ appear because we are summing over these indices and the factors of $\frac{1}{\mu}$ appear because each index we sum over reduces the bound on the average value by a factor of $\mu$.
\end{enumerate}
If we apply these bounds repeatedly until there are no tensor entries/hyperedges left to bound, our final bound on a single sum of products of tensor entries will be
$$\left(\prod_{e \in H}{\sqrt{b(e)}}\right)\left(\frac{n_1}{\mu}\right)^{\# \text{ of } a \text{ indices}}
\left(\frac{n_2}{\mu}\right)^{\# \text{ of } b \text{ indices}}
\left(\frac{n_3}{\mu}\right)^{\# \text{ of } c \text{ indices}}r^{\# \text{ of entrywise bounds used}}$$
To prove our final upper bound, we will argue that we can always apply these bounds in such a way that the number of times we need to use an entrywise bound is sufficiently small.
\subsection{Counting intersection patterns}
There will be one more factor in our final bound. This factor will come from the number of possible intersection patterns with a given number $z$ of distinct triples $(a,b,c)$.
\begin{lemma}\label{patterncounting}
The total number of intersection patterns on $2q'$ triples with $z$ distinct triples $(a,b,c)$ such that every triple $(a,b,c)$ has multiplicity at least two 
is at most ${\binom{2q'}{z}}z^{2q'-z} \leq 2^{2q'}{q'}^{2q'-z}$
\end{lemma}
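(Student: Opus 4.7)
The plan is to view an intersection pattern on $2q'$ triples with exactly $z$ distinct triples simply as a partition of the index set $\{1,2,\ldots,2q'\}$ into $z$ non-empty equivalence classes (two positions are equivalent iff the corresponding triples are equal). So it suffices to upper-bound the number of such partitions subject to the extra constraint that every class has size at least two.

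First I would record the key consequence of the ``multiplicity at least two'' hypothesis: since each of the $z$ distinct triples appears at least twice among the $2q'$ slots, we must have $2z \le 2q'$, i.e.\ $z \le q'$. This fact is what eventually lets us upgrade the cruder $z^{2q'-z}$ factor to the claimed $(q')^{2q'-z}$.

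Next I would bound the number of partitions of a $2q'$-element set into $z$ classes by the standard ``choose a system of representatives, then assign the rest'' argument. Concretely, choose a $z$-element subset $R\subseteq[2q']$ to contain one representative from each class (at most $\binom{2q'}{z}$ choices), and then map each of the remaining $2q'-z$ positions to one of the $z$ classes (at most $z^{2q'-z}$ choices). Each partition is produced at least once, so
\[
\#\{\text{intersection patterns with $z$ distinct triples}\} \;\le\; \binom{2q'}{z}\, z^{2q'-z},
\]
which is the first inequality in the statement.

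Finally, to get the looser bound $2^{2q'}(q')^{2q'-z}$, I would use $\binom{2q'}{z}\le 2^{2q'}$ (total number of subsets of a $2q'$-set) together with $z \le q'$ from the first step, which gives $z^{2q'-z} \le (q')^{2q'-z}$. Multiplying these two estimates yields the claim. The argument is entirely elementary; the only conceptual point is that the multiplicity-$\ge 2$ hypothesis is used exactly to guarantee $z\le q'$, and this is the only place where something could go wrong if the hypothesis were dropped.
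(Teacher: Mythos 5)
Your proof is correct and follows essentially the same representative-selection/assignment argument as the paper: choose which $z$ positions carry distinct triples (giving the $\binom{2q'}{z}$ factor), then assign each of the remaining $2q'-z$ positions to one of those $z$ classes (giving $z^{2q'-z}$). Your explicit justification of the second inequality via $z\le q'$ (from the multiplicity-$\ge 2$ hypothesis) and $\binom{2q'}{z}\le 2^{2q'}$ is a nice, correct filling-in of a step the paper leaves implicit.
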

\begin{proof}
To determine which triples $(a,b,c)$ are equal to each other, it is sufficient to decide which triples are distinct from all previous triples (there are ${\binom{2q'}{z}}$ choices for this) and for the remaining $2q'-z$ triples, which of the $z$ distinct triples they are equal to (there are $z^{2q'-z}$ choices for this). 
\end{proof}

\section{Trace Power Calculation for $\bar{R}_{\Omega}A \otimes (\bar{R}_{\Omega}A)^T$}\label{squaretracepowercalculationsection}
In this section, we implement the techniques described in Section \ref{normmethodssection} to probabilistically bound $||\bar{R}_{\Omega}A \otimes (\bar{R}_{\Omega}A)^T||$. In particular, we prove the following theorem.
\begin{theorem}\label{noncrosstermtheorem}
If $A$ is $B$-bounded, $C \geq 1$, and 
\begin{enumerate}
\item $m > 10000C(2+\beta)^2{n_{max}}r\mu^2\ln{n_{max}}$
\item $m > 10000C(2 + \beta)^2{r}\sqrt{n_1}\max{\{n_2,n_3\}}\mu^{\frac{3}{2}}\ln{n_{max}}
\geq 10000C(2 + \beta)^2{r}\sqrt{{n_1}{n_2}{n_3}}\mu^{\frac{3}{2}}\ln{n_{max}}$
\item ${\mu}r \leq \min{\{n_1,n_2,n_3\}}$
\end{enumerate}
then defining $Y = \bar{R}_{\Omega}A \otimes (\bar{R}_{\Omega}A)^T$,
$$Pr\left[||Y|| > \frac{B{n_1}{n_2}{n_3}}{Cr\mu^3}\right] < 4n^{-(\beta+1)}$$
\end{theorem}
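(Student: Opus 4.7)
The plan is to apply the trace moment method of \cref{powertonormboundcorollary} to $Y$, so the whole task reduces to bounding $\E\brac{\tr((YY^T)^q)}$ in the form $(q^p \cdot B')^{2q} \cdot n^{O(1)}$ for some constant $p$, with $B' \lesssim B n_1 n_2 n_3 / (Cr\mu^3)$. First I would unfold $\tr((YY^T)^q)$ as a sum over closed walks of length $2q$ on the index set $[n_2]\times[n_3]$. Expanding each matrix entry $Y_{(b,c),(b',c')} = \sum_a (\bar R_\Omega A)_{a,b,c}(\bar R_\Omega A)_{a,b',c'}$ produces a sum over $4q'$-tuples of triples $(a_i,b_i,c_i)$ (with $q' = 2q$), weighted by $\prod_{i} \bar R_\Omega(a_i,b_i,c_i) \cdot \prod_{i} A_{a_i,b_i,c_i}$, subject to the matching constraints that identify consecutive $(b,c)$-pairs inside the trace.

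Next I would partition these terms by the intersection pattern of the triples $(a_i,b_i,c_i)$. By \cref{patterncorollary}, only patterns in which every triple appears at least twice survive the expectation; for such a pattern with $z$ distinct triples, the $\bar R_\Omega$-contribution to the expectation is at most $(n_1 n_2 n_3/m)^{4q-z}$. This factor is the main source of decay in $q$ and in $m$; the condition $m \gtrsim r\sqrt{n_1}\max\{n_2,n_3\}\mu^{3/2}\ln n_{\max}$ is what lets $(n_1 n_2 n_3/m)$ beat against the $(n_{\max}/\mu)$-factors produced by free-vertex deletion.

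Having fixed an intersection pattern, I would bound the associated sum of products of $A$-entries by the hypergraph scheme of \cref{normmethodssection}. Concretely: (i) preprocess via $\abs{ab}\le \tfrac12(a^2+b^2)$ so that every hyperedge of the pattern's hypergraph $H$ appears with even multiplicity, (ii) complete the sum to run over all distinct-index tuples (both steps only increase the bound), and (iii) iteratively remove free vertices of doubled hyperedges using the $B$-bounded estimates, falling back on the worst-case entrywise bound $B r$ only when no free vertex is available. Each deleted vertex contributes a factor $n_?/\mu$, each entrywise step contributes a factor $r$, and each hyperedge contributes $\sqrt{b(e)} = \sqrt{B}$. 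Finally I would sum over patterns using the count $\binom{4q}{z}z^{4q-z}\le 2^{4q}q^{4q-z}$ from \cref{patterncounting}.

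The main obstacle is step (iii): showing that the closed-walk structure of $H$ forces a good ordering in which the number of entrywise applications is small enough that the resulting $r^{\#\text{entrywise}}$ is absorbed by the $(n_1 n_2 n_3/m)^{4q-z}$ factor, uniformly over $z$ and over intersection patterns. This is the combinatorial heart of the trace-moment computation; it has the flavor of classical Wigner-style arguments but is complicated by the fact that each factor of $Y$ already contracts two copies of $\bar R_\Omega A$ along a shared $a$-coordinate, and that the matching constraints between consecutive $Y$'s on the $(b,c)$-indices interact nontrivially with coincidences imposed by the intersection pattern. Once this count is established, summing over $z \in [1,4q]$, extracting an overall $(n_1 n_2 n_3/(Cr\mu^3))^{2q}$ factor using the hypotheses on $m$, and invoking \cref{powertonormboundcorollary} with tail exponent $\beta+1$ yields the stated probability bound; the hypothesis $\mu r \le \min\{n_1,n_2,n_3\}$ is used to keep the entrywise factor $Br$ from dominating any one step.
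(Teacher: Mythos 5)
Your proposal matches the paper's high-level strategy (trace moment method, partition by intersection pattern of the $\bar R_\Omega$-triples, the $B$-bounded hypergraph-reduction scheme, and \cref{powertonormboundcorollary}), but it omits the one organizational step the paper actually relies on, and that omission is exactly where your argument stops being a proof.

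The paper does not analyze $\tr\bigl((YY^T)^q\bigr)$ directly. It first decomposes $Y = Y_1 + Y_2 + Y_3 + Y_4$ according to whether $b=b'$ and/or $c=c'$ in the entry $Y_{bcb'c'}$, applies $\norm{Y}\le \sum_j \norm{Y_j}$, and then runs the trace-moment computation separately on each $Y_j$. The point is that the hypergraph $H$ associated to $\tr\bigl((Y_j Y_j^T)^q\bigr)$ has a fixed, simple shape for each $j$: for $Y_1$ all $b$'s and $c$'s collapse to a single vertex each; for $Y_2$ (resp.\ $Y_3$) only the $b$'s (resp.\ $c$'s) collapse, leaving a single connected graph on $\{a,c,c'\}$; for $Y_4$ one gets the ``hourglass'' structure with at most two connected components. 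Each case gets its own counting lemma (\cref{failureslemmaone}, \cref{failureslemmatwo}) bounding the number of vertices in terms of $z$ and the number $x$ of cycle-deletion steps, and the different bounds on \# of vertices, \# of $a$-indices ($\le z/2$ in the $Y_4$ case), and \# of entrywise steps are what let each sub-bound close under the stated hypotheses on $m$. If you work with $Y$ directly, each of the $2q$ $Y$-factors along the walk can independently be in any of the four ``equality regimes,'' so $H$ is not a single combinatorial shape but a mixture, and the clean statements ``the total number of vertices is $z+k-x$'' etc.\ no longer hold uniformly. Your sketch flags this as the ``main obstacle'' and then leaves it open; but this is precisely the step that the decomposition is there to discharge. (One small secondary slip: you write the expansion produces ``$4q'$-tuples of triples with $q'=2q$'', i.e.\ $8q$ triples, whereas each $Y$ contributes two $\bar R_\Omega$-factors, so there are $4q$ triples, consistent with the exponent $4q-z$ you then use.)

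So the proposal is not wrong in spirit, but it has a genuine gap: without the $Y_1,\dots,Y_4$ split (or an equivalent stratification of walk-steps by equality pattern, which amounts to re-deriving it), the combinatorial bound on the number of entrywise applications — the part you identify as the heart of the matter — is not established, and the argument does not go through as written.
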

\begin{corollary}
If $C \geq 1$ and 
\begin{enumerate}
\item $m > 10000C(2+\beta)^2{n_{max}}r\mu^2\ln{n_{max}}$
\item $m > 10000C(2 + \beta)^2{r}\sqrt{n_1}\max{\{n_2,n_3\}}\mu^{\frac{3}{2}}\ln{n_{max}}
\geq 10000C(2 + \beta)^2{r}\sqrt{{n_1}{n_2}{n_3}}\mu^{\frac{3}{2}}\ln{n_{max}}$
\item ${\mu}r \leq \min{\{n_1,n_2,n_3\}}$
\end{enumerate}
then $$Pr\left[||\bar{R}_{\Omega}X \otimes (\bar{R}_{\Omega}X)^T|| > \frac{1}{C}\right] < 4n^{-(\beta+1)}$$
\end{corollary}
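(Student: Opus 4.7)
The corollary is an immediate specialization of Theorem~\ref{noncrosstermtheorem}: Proposition~\ref{actualtensorbounds} gives that $X$ is $\bigl(r\mu^3/(n_1n_2n_3)\bigr)$-bounded, so substituting $B=r\mu^3/(n_1n_2n_3)$ into the theorem's conclusion yields $\|\bar R_\Omega X\otimes (\bar R_\Omega X)^T\|\le 1/C$ with probability at least $1-4n^{-(\beta+1)}$. The substantive work is therefore the proof of the theorem itself, which I plan to attack via the trace moment method of Section~\ref{normmethodssection} and, ultimately, Corollary~\ref{powertonormboundcorollary}.

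First, I would expand $Y=\sum_a M_a\otimes M_a^T$ with $M=\bar R_\Omega A$ and write $\tr((YY^T)^q)$ as a sum of $4q$-fold products. Since each entry of $M$ factors as an $A$-entry times a $\bar R_\Omega$-factor on the same triple, the trace becomes
\[
\tr((YY^T)^q)=\sum_{a_1,\dots,a_{4q}}\sum_{(b_i,c_i)}\Bigl(\prod_{i=1}^{4q} A_{a_ib_ic_i}\Bigr)\cdot\prod_{i=1}^{4q}\bar R_\Omega(a_i,b_i,c_i).
\]
I would group the terms by intersection pattern on the triples $(a_i,b_i,c_i)$. By Corollary~\ref{patterncorollary}, any pattern containing a singleton triple contributes zero in expectation, while a pattern with $z$ distinct triples contributes at most $(n_1n_2n_3/m)^{4q-z}$ to $\E\prod_i\bar R_\Omega(a_i,b_i,c_i)$.

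The core step is then to bound, for each surviving intersection pattern, the remaining sum of products of $A$-entries. I would build the hypergraph $H$ from Section~5.3, preprocess with $|ab|\le\tfrac12(a^2+b^2)$ to make every hyperedge doubled, and peel off free vertices one doubled hyperedge at a time, charging a factor of $B/\mu^k$ for a hyperedge that loses $k$ free vertices along with the corresponding $n_i$ factors from the summed indices. When no free vertex is available, I fall back to an entrywise $Br$-bound. Combined with the intersection-pattern count $2^{4q}(2q)^{4q-z}$ from Lemma~\ref{patterncounting}, this yields a total of the shape $\E[\tr((YY^T)^q)]\le (q^p B')^{2q}n_{\max}^{s}$, to which Corollary~\ref{powertonormboundcorollary} applies. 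The two hypotheses $m\ge 10000C(2+\beta)^2 n_{\max}r\mu^2\ln n_{\max}$ and $m\ge 10000C(2+\beta)^2 r\sqrt{n_1}\max\{n_2,n_3\}\mu^{3/2}\ln n_{\max}$ are exactly what is needed for the $(n_1n_2n_3/m)^{4q-z}$ weights to be absorbed by the combinatorial factors from $H$ and leave behind the clean bound $B n_1n_2n_3/(Cr\mu^3)$ on $\|Y\|$.

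The main obstacle is the hypergraph analysis: for every surviving intersection pattern I must find a peeling order on the vertices that invokes the costly entrywise $Br$-bound on as few hyperedges as possible, since each such use contributes an extra factor of $r$ to the exponent. The combinatorics is delicate because $Y=\bar R_\Omega A\otimes (\bar R_\Omega A)^T$ couples the $a$-indices to the $(b,c)$-pairs asymmetrically; the two separate sample-size lower bounds in the hypothesis reflect two distinct bottleneck regimes of $H$, namely patterns dominated by coincidences among $a$-coordinates versus among $(b,c)$-coordinates. Showing that in both regimes the bookkeeping collapses to the same $Bn_1n_2n_3/(Cr\mu^3)$ bound is where most of the care has to go.
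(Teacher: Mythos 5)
Your first paragraph is exactly the paper's proof: the corollary is Theorem~\ref{noncrosstermtheorem} applied with $B = r\mu^3/(n_1 n_2 n_3)$, which is the boundedness constant for $X$ supplied by Proposition~\ref{actualtensorbounds}, and this substitution turns the threshold $B n_1 n_2 n_3/(C r \mu^3)$ into $1/C$. The remaining paragraphs sketch the proof of the theorem itself, which goes beyond what the stated corollary requires but is consistent with the paper's Section~\ref{squaretracepowercalculationsection}.
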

\begin{proof}
This follows immediately from Theorem \ref{noncrosstermtheorem} and the fact that $X$ is $\left(\frac{r\mu^3}{{n_1}{n_2}{n_3}}\right)$-bounded.
\end{proof}
To prove Theorem \ref{noncrosstermtheorem}, we break up $Y$ into four parts and then prove probabilistic norm bounds for each part.
\begin{definition} \ 
\begin{enumerate}
\item Define $(Y_1)_{bcb'c'} = Y_{bcb'c'}$ if $b = b'$, $c = c'$ and $0$ otherwise.
\item Define $(Y_2)_{bcb'c'} = Y_{bcb'c'}$ if $b = b'$, $c \neq c'$ and $0$ otherwise.
\item Define $(Y_3)_{bcb'c'} = Y_{bcb'c'}$ if $b \neq b'$, $c = c'$ and $0$ otherwise.
\item Define $(Y_4)_{bcb'c'} = Y_{bcb'c'}$ if $b \neq b'$, $c \neq c'$ and $0$ otherwise.
\end{enumerate}
\end{definition}
\subsection{Structure of $tr((Y_j{Y_j^T})^q)$}
We have that $Y_{bcb'c'} = \sum_{a}{\bar{R}_{\Omega}(a,b,c')\bar{R}_{\Omega}(a,b',c)A_{abc'}A_{ab'c}}$. To see the structure of $({Y_j}Y_j^T)^q$, we now compute ${Y_j}Y_j^T$. 
\begin{align*}
&({Y_j}Y_j^T)_{{b_1}{c_1}{b_2}{c_2}} = \\
&\sum_{a_1,a_2,b',c'}{\bar{R}_{\Omega}({a_1},{b_1},c')\bar{R}_{\Omega}({a_1},b',c_1)\bar{R}_{\Omega}({a_2},{b_2},c')\bar{R}_{\Omega}({a_2},b',c_2)A_{{a_1}{b_1}c'}A_{{a_1}b'{c_1}}A_{{a_2}{b_2}c'}A_{{a_2}b'{c_2}}}
\end{align*}
where the sum is taken over $b',c'$ which satisfy the appropriate constraints. The $\bar{R}_{\Omega}$ terms will not be part of our hypergraph $H$ (as their expected behavior is determined by the intersection pattern). We can view the first two terms $A_{{a_1}{b_1}c'}$ and $A_{{a_1}b'{c_1}}$ as an hourglass with upper triangle $(b_1,a_1,c')$ and lower triangle $(c_1,a_1,b')$ (where the vertices in each triangle are listed from left to right). Similarly, we can view the last two terms $A_{{a_2}{b_2}c'}$ and $A_{{a_2}b'{c_2}}$ as an hourglass with upper triangle $(c',a_2,b_2)$ and lower triangle $(b',a_2,c_2)$. Thus, the hypergraph $H$ corresponding to $tr((Y_j{Y_j^T})^q)$ will be $2q$ hourglasses glued together where the top vertices of the hourglass alternate between $b$ and $c'$ indices, the bottom vertices of the hourglass alternate between $c$ and $b'$ indices, and the middle vertices of the hourglass are the $a$ indices.
\begin{remark}
While there is no real difference between the $b$ and $b'$ indices and between the $c$ and $c'$ indices, we will keep track of this to make it easier to see the structure of $H$.
\end{remark}
As described in Section \ref{normmethodssection}, we split up $E\left[tr((Y_j{Y^T_j})^q)\right]$ based on the intersection pattern of which of the $4q$ triples of the form $(a,b,c')$ or $(a,b',c)$ are equal to each other. We only need to consider patterns where each triple and thus each hyperedge appears at least twice, as otherwise the terms in the sum will have expected value $0$. In all cases, letting $z$ be the number of distinct triples in a given intersection pattern, by Corollary \ref{patterncorollary} our bound on the expected value of the $\bar{R}_{\Omega}$ terms will be $\left(\frac{{n_1}{n_2}{n_3}}{m}\right)^{4q-z}$
\subsection{Bounds on $||Y_1||$}
Consider $E\left[tr((Y_1{Y^T_1})^q)\right]$. The constraints that $b' = b$ and $c' = c$ in every $Y$ force all of the $b$ and $b'$ indices to be equal and all of the $c$ and $c'$ indices to be equal, so our hypergraph $H$ consists of a single vertex $b$, a single vertex $c$, and two copies of the hyperedge $(a_i,b,c)$ for each $i \in [1,2q]$. For all intersection patterns, the number of distinct triples $z$ is equal to the number of distinct $a$ indices, which can be anywhere from $1$ to $2q$.

We apply our bounds on $H$ as follows.
\begin{enumerate}
\item In our preprocessing step, when there are two hyperedges $e_1$ and $e_2$ which appear with odd multiplicity, we double one of these hyperedges or the other. Thus, we can assume that all hyperedges appear with even multiplicity.
\item We will apply an entrywise bound $2q-z$ times on hyperedges of multiplicity $\geq 4$, reducing the multiplicity by $2$ each time.
\item After applying these entrywise bounds, all of the distinct $a$ vertices will be free and we can sum up over these indices one by one.
\end{enumerate}
Recall that the bound from the $R_{\Omega}$ terms is $\left(\frac{{n_1}{n_2}{n_3}}{m}\right)^{4q-z}$ and our bound for the other terms is
$$\left(\prod_{e \in H}{\sqrt{b(e)}}\right)\left(\frac{n_1}{\mu}\right)^{\# \text{ of } a \text{ entries}}
\left(\frac{n_2}{\mu}\right)^{\# \text{ of } b \text{ entries}}
\left(\frac{n_3}{\mu}\right)^{\# \text{ of } c \text{ entries}}r^{\# \text{ of entrywise bounds used}}$$
where $b(e) = B$ for all our hyperedges.
Summing over all $z \in [1,2q]$ and all intersection patterns using Lemma \ref{patterncounting}, our final bound is
$$2q \cdot 2^{4q}\max_{z \in [1,2q]}{\left\{(2q)^{4q-z}\left(\frac{{n_1}{n_2}{n_3}}{m}\right)^{4q-z}B^{2q}\left(\frac{n_1}{\mu}\right)^{z}\left(\frac{n_2}{\mu}\right)\left(\frac{n_3}{\mu}\right)r^{2q-z}\right\}}$$
The inner expression will either be maximized at $z = 2q$ or $z = 1$ and we will always take $q$ to be between $\frac{\ln{n_{max}}}{2}$ and $\frac{n_{max}}{2}$, so our final bound on $E\left[tr((Y_1{Y^T_1})^q)\right]$ is at most
$$(4q)^{4q}\max{\left\{n_{max}\left(\frac{{n^2_1}{n_2}{n_3}B}{m\mu\ln{n_{max}}}\right)^{2q}\left(\frac{n_2}{\mu}\right)\left(\frac{n_3}{\mu}\right),\left(\frac{{n^2_1}{n^2_2}{n^2_3}rB}{m^2}\right)^{2q}\frac{m}{r\mu^3}\right\}}$$
Since $m > 10000C(2+\beta)^2{n_{max}}r\mu^2\ln{n_{max}}$ and $m > 10000C(2 + \beta)^2{r}\sqrt{{n_1}{n_2}{n_3}}\mu^{\frac{3}{2}}\ln{n_{max}}$, we have that 
$$E\left[tr((Y_1{Y^T_1})^q)\right] < (16q^2)^{2q}
\left(\frac{{n_1}{n_2}{n_3}B}{10000C(2 + \beta)^2r\mu^3(\ln{n_{max}})^2}\right)^{2q}n^3_{max}$$
(note that $m < n^3_{max}$ as otherwise the tensor completion problem is trivial). We now recall Corollary \ref{powertonormboundcorollary}, which says that for a given $p \geq 1$, $r \geq 0$, $n > 0$, and $B > 0$, for a random matrix $Y$, if $E\left[tr\left((YY^T)^q\right)\right] \leq ({q^p}B)^{2q}n^r$ for all integers $q > 1$ then for all $\beta > 0$, 
$$Pr\left[||Y|| > Be^p\left(\frac{(r+\beta)}{2p}\ln{n} + 1\right)^p\right] < n^{-\beta}$$
Using Corollary \ref{powertonormboundcorollary} with the appropriate parameters, we can show that for all $\beta > 0$,
$$P\left[||Y_1|| > \frac{16{e^2}B{n_1}{n_2}{n_3}}{10000r\mu^3}\right] < n_{max}^{-(\beta+1)}$$

\subsection{Bounds on $||Y_2||$ and $||Y_3||$}
Consider $E\left[tr((Y_2{Y^T_2})^q)\right]$. The constraint that $b' = b$ in every $Y$ forces all of the $b$ and $b'$ indices to be equal, so our hypergraph $H$ consists of a single vertex $b$ and $4q$ total hyperedges of the form $(a,b,c)$ or $(a,b,c')$. Ignoring the $b$ vertex (which is part of all the hyperedges), the $(a,c)$ and $(a,c')$ edges form a single connected component. We only need to consider intersection patterns where each triple $(a,b,c)$ or $(a,b,c')$ (and thus each edge $(a,c)$ or $(a,c')$) appears with multiplicity at least two. For a given intersection pattern, let $z$ be the number of distinct edges.

We apply our bounds on $H$ as follows.
\begin{enumerate}
\item In our preprocessing step, when there are two edges $e_1$ and $e_2$ which appear with odd multiplicity, we double one of these edges or the other. Thus, we can assume that all edges appear with even multiplicity.
\item We will apply an entrywise bound $2q-z$ times on edges of multiplicity $\geq 4$, reducing the multiplicity by $2$ each time.
\item After applying these entrywise bounds, all of our edges will have multiplicity $2$. We now sum over a free $a$, $c$, or $c'$ vertex in $H$ whenever such a vertex exists. Otherwise, there must be a cycle, in which case we use the entrywise bound on one edge of the cycle and delete it.
\end{enumerate}
\begin{definition}
Let $x$ be the number of times we delete an edge in a cycle using the entrywise bound.
\end{definition}
\begin{lemma}\label{failureslemmaone}
The total number of vertices in $H$ (excluding $b$) is $z+1-x$
\end{lemma}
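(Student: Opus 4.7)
The plan is to interpret the claim as a version of Euler's formula for the connected graph $G$ obtained from $H$ by forgetting the $b$ vertex and treating each doubled hyperedge as a single (simple) edge. After the preprocessing step, every distinct hyperedge of $H$ appears with multiplicity exactly two, so $G$ is a connected multigraph (actually a simple graph after identifying parallel doubled edges) on $V$ vertices (the distinct $a$, $c$, and $c'$ indices) with exactly $z$ edges. The claim becomes $V = z + 1 - x$.

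I would prove this by tracking the cyclomatic number $\gamma(G') \defeq E(G') - V(G') + C(G')$ of the current subgraph $G'$ throughout the elimination process, where $C(G')$ is the number of connected components. I would analyze the three possible operations and check the change in $\gamma$:
\begin{compactenum}
\item Summing over one free vertex whose unique incident (doubled) edge $e$ has a non-free other endpoint: $V$ drops by $1$, $E$ drops by $1$, $C$ is unchanged, so $\Delta\gamma=0$.
\item Summing over two free vertices that both lie in the same isolated doubled edge $e$ (i.e., processing an isolated component consisting of a single edge): $V$ drops by $2$, $E$ drops by $1$, $C$ drops by $1$, so again $\Delta\gamma=0$.
\item Using the entrywise bound on an edge of a cycle: $V$ is unchanged, $E$ drops by $1$, and since the removed edge lies in a cycle it is not a bridge, so $C$ is unchanged, giving $\Delta\gamma=-1$.
\end{compactenum}
Initially $G$ is connected with $z$ edges and $V$ vertices, so $\gamma(G) = z - V + 1$. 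At the end of the procedure every edge has been processed and every vertex removed, so the final graph is empty and $\gamma = 0$. Since only operation~(3) changes $\gamma$, and it changes it by $-1$ per application, the total number of cycle-break operations must satisfy $x = z - V + 1$, which rearranges to $V = z + 1 - x$.

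The only subtlety I would pay attention to is verifying that the greedy procedure (always sum over a free vertex when one exists, only break a cycle when none does) is well-defined until all edges are gone, and in particular that an edge chosen for the entrywise-cycle bound in step~(3) truly lies on a cycle of the current subgraph, so that deleting it does not disconnect $G'$. This follows because the absence of a free vertex in $G'$ means every vertex of $G'$ has degree at least two, which forces $G'$ to contain a cycle in each of its non-trivial components; picking the deleted edge from such a cycle then preserves $C$ and decreases $\gamma$ by exactly one. I do not anticipate any serious obstacle beyond this bookkeeping, since the argument is essentially the standard Euler-characteristic count for a graph with cycles.
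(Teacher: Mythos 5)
Your proof is correct and takes essentially the same approach as the paper's: both track the elimination process, using that neither operation disconnects the graph and that cycle deletions remove no vertices while free-vertex deletions remove one vertex each (two for the final edge). You package the count as invariance of $E - V + C$ under the first two operations, whereas the paper tallies vertex removals directly per operation type, but the underlying combinatorics is identical.
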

\begin{proof}
Observe that neither deleting a free vertex nor deleting an edge in a cycle can disconnect $H$. Also, except for the final edge where both of its vertices will be free, every edge which has a free vertex has exactly one free vertex. Thus, we delete an edge in a cycle $x$ times, removing $0$ vertices each time, we delete an edge with one free vertex $z-x-1$ times, removing $1$ vertex each time, and we delete the final edge once, removing the final two vertices. This adds up to $z+1-x$ vertices in $H$.  
\end{proof}
Recall that the bound from the $R_{\Omega}$ terms is $\left(\frac{{n_1}{n_2}{n_3}}{m}\right)^{4q-z}$ and our bound for the other terms is
$$\left(\prod_{e \in H}{\sqrt{b(e)}}\right)\left(\frac{n_1}{\mu}\right)^{\# \text{ of } a \text{ entries}}
\left(\frac{n_2}{\mu}\right)^{\# \text{ of } b \text{ entries}}
\left(\frac{n_3}{\mu}\right)^{\# \text{ of } c \text{ or } c'  \text{ entries}}r^{\# \text{ of entrywise bounds used}}$$
where $b(e) = B$ for all our hyperedges.
Summing over all $z \in [1,2q]$ and all intersection patterns using Lemma \ref{patterncounting}, our final bound is
$$2q \cdot 2^{4q}\max_{z \in [1,2q],x \in [0,z-1]}
{\left\{(2q)^{4q-z}\left(\frac{{n_1}{n_2}{n_3}}{m}\right)^{4q-z}B^{2q}\left(\frac{n_{max}}{\mu}\right)^{z-1-x}\left(\frac{{n_1}{n_2}{n_3}}{\mu^3}\right)r^{2q-z+x}\right\}}$$
Since ${\mu}r \leq n_{max}$, the inner expression will either be maximized when $z = 2q$ and $x = 0$ or when $z = 1$ and $x = 0$. Again, we will always take $q$ to be between $\frac{\ln{n_{max}}}{2}$ and $\frac{n_{max}}{2}$, so our final bound on $E\left[tr((Y_2{Y^T_2})^q)\right]$ is at most
$$(4q)^{(4q)}\max{\left\{\left(\frac{{n_1}{n_2}{n_3}{n_{max}}B}{m\mu\ln{n_{max}}}\right)^{2q}\left(\frac{{n_1}{n_2}{n_3}}{\mu^2}\right),\left(\frac{{n^2_1}{n^2_2}{n^2_3}rB}{m^2}\right)^{2q}\frac{m}{r\mu^3}\right\}}$$
Since $m > 10000C(2+\beta)^2{n_{max}}r\mu^2\ln{n_{max}}$ and $m > 10000C(2 + \beta)^2{r}\sqrt{{n_1}{n_2}{n_3}}\mu^{\frac{3}{2}}\ln{n_{max}}$, we have that 
$$E\left[tr((Y_2{Y^T_2})^q)\right] < (16q^2)^{2q}
\left(\frac{{n_1}{n_2}{n_3}B}{10000C(2 + \beta)^2{r}\mu^3(\ln{n_{max}})^2}\right)^{2q}n^3_{max}$$
Using Corollary \ref{powertonormboundcorollary} with the appropriate parameters (in fact the same ones as before), we can show that for all $\beta > 0$,
$$P\left[||Y_2|| > \frac{16{e^2}B{n_1}{n_2}{n_3}}{10000r\mu^3}\right] < n_{max}^{-(\beta+1)}$$
By a symmetrical argument, we can obtain the same probabilistic bound on $||Y_3||$.

\subsection{Bounds on $||Y_4||$}
Consider $E\left[tr((Y_4{Y^T_4})^q)\right]$. Our hypergraph $H$ consists of $2q$ hyperedges of the form $(b,a,c')$ or $(c',a,b)$ from the top triangles of the hourglasses and $2q$ hyperedges of the form $(c,a,b')$ or $(b',a,c)$ from the bottom triangles of the hourglasses. We only need to consider intersection patterns where each triple (and thus each hyperedge) appears with multiplicity at least two. For a given intersection pattern, let $z$ be the number of distinct hyperedges.

Ignoring the $a$ vertices for now, we can think of $H$ as a graph on the $b$, $b'$, $c$, and $c'$ vertices. Note that the $(b,c')$ and $(c',b)$ edges are part of a single connected component and the $(c,b')$ and $(b',c)$ edges are part of a single connected component (these connected components may or may not be the same).

We apply our bounds on $H$ as follows.
\begin{enumerate}
\item In our preprocessing step, when there are two hyperedges $e_1$ and $e_2$ which appear with odd multiplicity, we double one of these hyperedges or the other. Thus, we can assume that all hyperedges appear with even multiplicity.
\item We will apply an entrywise bound $2q-z$ times on hyperedges of multiplicity $\geq 4$, reducing the multiplicity by $2$ each time.
\item After applying these entrywise bounds, all of our hyperedges will have multiplicity $2$. We now sum over a free $b$,$b'$,$c$, or $c'$ vertex in $H$ whenever such a vertex exists. Otherwise, there must be a cycle on the $(b,c')$ and $(b',c)$ parts of the hyperedges, in which case we use the entrywise bound on one hyperedge of the cycle and delete it.
\end{enumerate}
\begin{definition}
Let $x$ be the number of times we delete a hyperedge in a cycle using the entrywise bound.
\end{definition}
\begin{lemma}\label{failureslemmatwo}
Let $k$ be the number of connected components of $H$. The total number of $b$,$b'$,$c$, and $c'$ vertices in $H$ is $z+ k -x \leq z+2-x$
\end{lemma}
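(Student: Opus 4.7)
The plan is to reduce Lemma~\ref{failureslemmatwo} to a per-connected-component application of Lemma~\ref{failureslemmaone}. First I would establish the $k \le 2$ bound by analyzing the initial component structure of $H$ after the $a$-vertices are discarded. Before any intersection-pattern identifications, the $2q$ top triangles of the hourglasses form a single closed cycle of length $2q$ whose vertices alternate between $b$- and $c'$-indices, and the $2q$ bottom triangles form a single closed cycle of length $2q$ whose vertices alternate between $c$- and $b'$-indices; by construction these two cycles share no $b$, $b'$, $c$, or $c'$ vertex. Hence, after one forgets the $a$-middles, $H$ starts with exactly two connected components, and any further intersection-pattern identifications can only merge components, never split them. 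Consequently the final component count satisfies $k \le 2$, which gives the second inequality in the lemma.

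Next, I would re-run the bookkeeping of Lemma~\ref{failureslemmaone} inside each component $C_i$ separately. After preprocessing, every edge of $C_i$ has multiplicity two, and the three kinds of elimination steps performed inside $C_i$ — removing a cycle edge together with zero vertices, removing a unique free endpoint together with its edge, and removing the last edge together with its two now-free endpoints — are exactly the three cases treated in the proof of Lemma~\ref{failureslemmaone}. Writing $z_i$ for the number of distinct edges in $C_i$ and $x_i$ for the number of cycle-deletions performed inside $C_i$, the same accounting yields $z_i + 1 - x_i$ vertices in $C_i$.

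Summing over the $k$ components gives a total vertex count of $\sum_{i=1}^{k}(z_i + 1 - x_i) = z + k - x$, which combined with $k \le 2$ from the first step yields the claimed bound $z + k - x \le z + 2 - x$. The only point needing verification is that the elimination steps respect the component decomposition — removing a degree-one vertex, or a cycle edge in a component that still contains a cycle, never disconnects the remainder — so no new component can appear midway through the procedure and the per-component counts may legitimately be added. I do not expect a genuine obstacle here: once the $k \le 2$ observation is in hand, the proof is essentially a formal repackaging of Lemma~\ref{failureslemmaone}, requiring no new probabilistic or combinatorial input.
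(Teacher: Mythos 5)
Your proposal is correct and is essentially the same argument as the paper's. The paper runs the accounting globally (delete $x$ cycle edges removing $0$ vertices, $z-x-k$ free-vertex edges removing $1$ vertex, and $k$ final edges removing $2$ vertices each, for a total of $z+k-x$) and then recalls that the $(b,c')$ edges and the $(c,b')$ edges each form a single connected component, so $k\le 2$; you do the identical accounting per component via Lemma~\ref{failureslemmaone} and sum, and you spell out the $k\le 2$ bound by observing that the two alternating cycles are vertex-disjoint before any intersection-pattern identifications and identifications can only merge components. This is a cosmetic reorganization of the same proof, not a different route.
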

\begin{proof}
The proof is similar to the proof of Lemma \ref{failureslemmaone}. Observe that neither deleting a free vertex nor deleting an edge in a cycle can disconnect a connected component of $H$. Also, except for the final edge of a connected component where both of its vertices will be free, every edge which has a free vertex has exactly one free vertex. Thus, we delete an edge in a cycle $x$ times, removing $0$ vertices each time, we delete an edge with one free vertex $z-x-k$ times, removing $1$ vertex each time, and we delete the final edge of a connected component $k$ times, removing the final $2k$ vertices. This adds up to $z+k-x$ vertices in $H$. For the inequality, recall that $H$ has at most 2 connected components, one for the $(b,c')$ edges and one for the $(c,b')$ edges.
\end{proof}
Finally, we bound the number of distinct $a$ indices
\begin{proposition}
The number of distinct $a$ indices is at most $\frac{z}{2}$.
\end{proposition}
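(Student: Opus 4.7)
The plan is to show that each distinct $a$-value appearing in the intersection pattern is the first coordinate of at least two distinct triples, which will immediately give the bound.

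The key observation follows directly from the defining constraint of $Y_4$. Within any single hourglass, say hourglass $i$, the top triple has the form $(a_i, b_i, c'_i)$ and the bottom triple has the form $(a_i, b'_i, c_i)$. These two triples share their first coordinate $a_i$, but since $Y_4$ is by definition the part of $Y$ with $b \neq b'$ (and $c \neq c'$), we have $b_i \neq b'_i$, so their second coordinates differ. Hence the top and bottom triples of any single hourglass are distinct as elements of $[n_1] \times [n_2] \times [n_3]$.

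From here I would conclude as follows. Any distinct $a$-value $v$ present in the intersection pattern must equal $a_i$ for at least one hourglass $i$, and by the previous paragraph the top and bottom triples of that hourglass are two distinct triples both having $v$ as their first coordinate. Consequently, for every $a$-value $v$ that is present, the number of distinct triples with first coordinate equal to $v$ is at least $2$. Since each of the $z$ distinct triples in the intersection pattern contributes to exactly one such count (the count associated with its first coordinate), summing over all $a$-values yields $z \geq 2N$, where $N$ denotes the number of distinct $a$-indices, which is exactly the claimed bound $N \leq z/2$. The proposition is thus essentially just an unpacking of the $b \neq b'$ constraint built into the definition of $Y_4$; by symmetry one could equally well invoke $c \neq c'$ with the same effect, so no real combinatorial obstacle arises.
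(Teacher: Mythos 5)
Your proof is correct and is essentially the paper's argument, just unpacked: the paper's one-line proof ("by the definition of $Y_4$, every $a$ index must be part of at least two distinct hyperedges") rests on exactly your observation that the top and bottom triples of each hourglass share the $a$ index but differ (since $b\neq b'$ and $c\neq c'$), followed by the same double-counting of distinct triples by first coordinate.
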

\begin{proof}
Note that by the definition of $Y_4$, every $a$ index must be part of at least two distinct hyperedges.
\end{proof}
Recall that the bound from the $R_{\Omega}$ terms is $\left(\frac{{n_1}{n_2}{n_3}}{m}\right)^{4q-z}$ and our bound for the other terms is
$$\left(\prod_{e \in H}{\sqrt{b(e)}}\right)\left(\frac{n_1}{\mu}\right)^{\# \text{ of } a \text{ entries}}
\left(\frac{n_2}{\mu}\right)^{\# \text{ of } b \text{ or } b' \text{ entries}}
\left(\frac{n_3}{\mu}\right)^{\# \text{ of } c \text{ or } c' \text{ entries}}r^{\# \text{ of entrywise bounds used}}$$
where $b(e) = B$ for all our hyperedges.
Summing over all $z \in [2,2q]$ and all intersection patterns using Lemma \ref{patterncounting}, our final bound is
$$2q \cdot 2^{4q}\max_{z \in [1,2q],x \in [0,z-2]}
{\left\{(2q)^{4q-z}\left(\frac{{n_1}{n_2}{n_3}}{m}\right)^{4q-z}B^{2q}\left(\frac{n_1}{\mu}\right)^{\frac{z}{2}}\left(\frac{\max{\{n_2,n_3\}}}{\mu}\right)^{z-2-x}\left(\frac{{n^2_2}{n^2_3}}{\mu^4}\right)r^{2q-z+x}\right\}}$$
Since ${\mu}r \leq \min{\{n_1,n_2,n_3\}}$, the inner expression will either be maximized when $z = 2q$ and $x = 0$ or when $z = 2$ and $x = 0$. Again, we will always take $q$ to be between $\frac{\ln{n_{max}}}{2}$ and $\frac{n_{max}}{2}$, so our final bound on $E\left[tr((Y_2{Y^T_2})^q)\right]$ is at most
$$(4q)^{(4q)}\max{\left\{\left(\frac{{n_1}{n_2}{n_3}{\sqrt{n_1}\max{\{n_2,n_3\}}}B}{m\mu^{\frac{3}{2}}\ln{n_{max}}}\right)^{2q}\left(\frac{n^3_{max}}{\mu^2}\right),\left(\frac{{n^2_1}{n^2_2}{n^2_3}rB}{m^2}\right)^{2q}\frac{m^2}{r^2\mu^5{n_1}}\right\}}$$
Since $m > 10000C(2 + \beta)^2{r}\sqrt{{n_1}}\max{\{{n_2},{n_3}\}}\mu^{\frac{3}{2}}\ln{n_{max}}$, we have that 
$$E\left[tr((Y_4{Y^T_4})^q)\right] < (16q^2)^{2q}
\left(\frac{{n_1}{n_2}{n_3}B}{10000C(2 + \beta)^2{r}\mu^3(\ln{n_{max}})^2}\right)^{2q}n^6_{max}$$
Using Corollary \ref{powertonormboundcorollary} with the appropriate parameters, we can show that for all $\beta > 0$,
$$P\left[||Y_4|| > \frac{16{e^2}B{n_1}{n_2}{n_3}}{10000r\mu^3}\right] < n_{max}^{-(\beta+1)}$$
Putting our four bounds together with a union bound, for all $\beta > 0$, 
$$P\left[||Y|| > \frac{B{n_1}{n_2}{n_3}}{r\mu^3}\right] \leq P\left[||Y|| > \frac{64{e^2}B{n_1}{n_2}{n_3}}{10000r\mu^3}\right] < 4n_{max}^{-(\beta+1)}$$
as needed.

\section{Iterative tensor bounds}\label{iterativesection}
In this section, we show that with high probability, applying the operator $P\bar{R}_{\Omega}$ to an order 3 tensor $A$ improves our bounds on it, where we are assuming that $\Omega$ is chosen independently of $A$.
\begin{theorem}\label{iterativebound}
If $A$ is a $B$-bounded tensor, $C \geq 1$, $\beta > 0$, and  
\begin{enumerate}
\item $m > 10000C(2+\beta)^2{n_{max}}r\mu^2\ln{n_{max}}$
\item $m > 10000C(2 + \beta)^2{r}\sqrt{n_1}\max{\{n_2,n_3\}}\mu^{\frac{3}{2}}\ln{n_{max}}
\geq 10000C(2 + \beta)^2{r}\sqrt{{n_1}{n_2}{n_3}}\mu^{\frac{3}{2}}\ln{n_{max}}$
\item ${\mu}r \leq \min{\{n_1,n_2,n_3\}}$
\end{enumerate}
then
$$Pr\left[P\bar{R}_{\Omega}A \text{ is not } 
\left(\frac{B}{C}\right) \text{-bounded}\right] < 100n_{max}^{-(\beta+1)}$$
\end{theorem}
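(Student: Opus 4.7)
The plan is to reduce each of the four conditions in the definition of $(B/C)$-boundedness for $P\bar{R}_\Omega A$ to spectral norm bounds on the matrices $Y^{(m)} = \sum_\ast (\bar{R}_\Omega A)^{(m)}_\ast \otimes ((\bar{R}_\Omega A)^{(m)}_\ast)^\top$ for each of the three tensor modes $m$, combined with the incoherence of $\{u_i\}, \{v_j\}, \{w_k\}$.

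First I apply \cref{noncrosstermtheorem} once per mode of $A$---the hypotheses are symmetric up to permutation of $(n_1, n_2, n_3)$---and union bound, obtaining with probability at least $1 - 12\, n_{\max}^{-(\beta+1)}$ the simultaneous bounds $\|Y^{(m)}\| \le B n_1 n_2 n_3 / (C' r \mu^3)$ for $m \in \{1, 2, 3\}$, where $C'$ is a suitable constant multiple of $C$. I then decompose $P = P_{12} + P_{13} + P_{23} - E$, where $P_{ij}$ is the projector onto the span of $\{u_i \otimes v_j \otimes w_k : \text{positions } i, j \text{ agree and lie in } [r]\}$ and $E$ is a small inclusion--exclusion correction supported on at most $r$ basis vectors. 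Since $E$ contributes only a lower-order amount to each of the four conditions by \cref{actualtensorbounds} applied to $E$, it suffices to verify the conditions for $P_{12}\bar{R}_\Omega A$; the other pairings are handled identically.

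The Frobenius condition (condition 4) is immediate: by orthonormality of the tensor basis,
\[
\|P_{12}\bar{R}_\Omega A\|_F^2 = \sum_{i \in [r],\ k} |\langle u_i \otimes v_i \otimes w_k, \bar{R}_\Omega A\rangle|^2
\le \sum_{i \in [r]} (u_i \otimes v_i)^\top Y^{(3)} (u_i \otimes v_i) \le r\|Y^{(3)}\|,
\]
which after division by $n_1 n_2 n_3$ matches the target up to the factor $C'/C$. For condition 3 with $c$ fixed, a direct computation using orthonormality of $\{u_i\},\{v_i\}$ collapses the relevant sum to $\sum_{ab} (P_{12}\bar{R}_\Omega A)_{abc}^2 = \sum_{i \in [r]} (u_i^\top M_c v_i)^2$, where $M_c$ is the $c$-slice of $\bar{R}_\Omega A$; Cauchy--Schwarz plus the incoherence bounds $u_{ia}^2, v_{ib}^2 \le \mu/n_{\min}$ reduce conditions 1 and 2 to the same quantity after pulling out a factor of $\mu/n_{\min}$. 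So it suffices to establish the uniform-in-$c$ bound $\max_c \sum_{i \in [r]} (u_i^\top M_c v_i)^2 \le B n_1 n_2 / (C' \mu^2)$.

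The main obstacle is this uniform bound. Averaging over $c$ and applying $\|Y^{(3)}\|$ beats the target by a factor of $\mu$, but this average estimate gives no control on the maximum. To resolve this I would run the trace moment method of \cref{squaretracepowercalculationsection} directly on the block-diagonal auxiliary matrix $Z := \bigoplus_c U M_c V^\top$, where $U, V$ are the $r \times n$ matrices with rows $u_i, v_i$. Since $U$ and $V$ contribute factors of operator norm one and sprinkle incoherence factors $\sqrt{\mu/n}$ into each index they project away, the intersection-pattern analysis of \cref{squaretracepowercalculationsection}---notably \cref{patterncorollary}, \cref{patterncounting}, and the boundedness identities of \cref{actualtensorbounds}---applies essentially verbatim to $Z$, yielding the required pointwise bound with the same sample-complexity thresholds as \cref{noncrosstermtheorem}. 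A final union bound over the four conditions and three pairings gives the overall failure probability $100\, n_{\max}^{-(\beta+1)}$ asserted in the statement.
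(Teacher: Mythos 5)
There is a genuine gap. Your reduction covers conditions 1 and 2 only for the two indices that $P_{UV}$ actually projects (namely $a$ and $b$); it does not cover the third sub-case of condition 2, $\max_{a,b}\tfrac{1}{n_3}\sum_c (W^{UV}_{abc})^2$, where the averaged index is the one $P_{UV}$ leaves free. Writing $\alpha_{ic} = u_i^{\top}M_c v_i$, a Cauchy--Schwarz bound gives
\begin{displaymath}
\frac{1}{n_3}\sum_c (W^{UV}_{abc})^2 \;\le\; \Bigl(\sum_i u_{ia}^2 v_{ib}^2\Bigr)\cdot\frac{1}{n_3}\sum_c\sum_i \alpha_{ic}^2\,,
\end{displaymath}
and the incoherence factor $\sum_i u_{ia}^2 v_{ib}^2$ is only $O(r\mu^2/(n_1 n_2))$, not $O(\mu^2/(n_1 n_2))$. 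Combining this with either the averaged estimate $\sum_c\sum_i\alpha_{ic}^2 \le r\|Y^{(3)}\|$ or your proposed uniform-in-$c$ bound $\max_c\sum_i\alpha_{ic}^2 \le Bn_1n_2/(C'\mu^2)$ yields $O(Br/(C\mu))$ or $O(Br/C')$, both off by a factor of $r$ from the target $B/(C\mu)$. No choice of how to allocate the incoherence between Cauchy--Schwarz and the $\alpha$-bound closes this gap, because the $1/r$ in the spectral bound for $Y^{(3)}$ is already ``spent'' cancelling the $r$ in the Cauchy--Schwarz factor. This is precisely the case the paper flags as ``The final case is if we only average over the $c$ indices'' and handles by a \emph{separate} trace-moment computation in which the $(a',b',c')$ hyperedges are bounded first, consuming the freed $c$ vertices, and only then the $(a,b,a',b')$ hyperedges are bounded using the freed $a',b'$ vertices; no reduction to a spectral norm bound is used.

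More broadly, the paper does not route through matrix norm bounds at all for \cref{iterativebound}: it bounds the scalar quantities $(W^{UV}_{abc})^2$, $(W^{UVW}_{abc})^2$, and their averages directly, via a scalar variant of the trace power method (\cref{adaptedpowertonormbound}), carrying the averaging structure explicitly through the intersection-pattern analysis. Your claim that the analysis of \cref{squaretracepowercalculationsection} applies ``essentially verbatim'' to the block-diagonal auxiliary matrix $Z = \bigoplus_c U M_c V^{\top}$ is not substantiated: that section analyzes matrices of the form $\sum_a A_a\otimes A_a^{\top}$, and $Z$ has a different combinatorial structure (a genuine $n_3$-fold direct sum with projected $r\times r$ blocks), so a fresh intersection-pattern computation would be required. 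Given that such a computation would have to confront exactly the factor-of-$r$ bookkeeping above, it is not a shortcut around the paper's direct scalar argument.

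Finally, a smaller issue: $E = 2P_{UVW}$ is a rank-$r$ projector, not ``supported on at most $r$ basis vectors,'' and $P_{UVW}\bar R_\Omega A$ depends on all entries of $\bar R_\Omega A$. So $E\bar R_\Omega A$ cannot be dismissed as lower order by citing \cref{actualtensorbounds} (which is a statement about $X$, not a projector); the paper has to bound $W^{UVW}$ as its own case, which it does.
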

\begin{proof}
We first consider how $P$ acts on a tensor
\begin{definition} \ 
\begin{enumerate}
\item Define $P^{UV}$ to be the projection onto $span\{u_i \otimes v_i \otimes w: i \in [1,r]\}$.
\item Define $P^{UW}$ to be the projection onto $span\{u_i \otimes v \otimes w_i: i \in [1,r]\}$.
\item Define $P^{VW}$ to be the projection onto $span\{u \otimes v_i \otimes w_i: i \in [1,r]\}$.
\item Define $P^{UVW}$ to be the projection onto $span\{u_i \otimes v_i \otimes w_i: i \in [1,r]\}$.
\end{enumerate}
\end{definition}
\begin{proposition}
$P = P_{UV} + P_{UW} + P_{VW} - 2P_{UVW}$
\end{proposition}
With this in mind, we break up the tensor $W = P\bar{R}_{\Omega}A$ into four parts and then obtain probabilistic bounds for each part. 
Theorem \ref{iterativebound} will then follow from the union bound and the inequality $(a+b+c-2d)^2 \leq 5(a^2 + b^2 + c^2 + 2d^2)$. 
\begin{definition} \ 
\begin{enumerate}
\item Define $W^{UV} = P_{UV}\bar{R}_{\Omega}A$.
\item Define $W^{UW} = P_{UW}\bar{R}_{\Omega}A$.
\item Define $W^{VW} = P_{VW}\bar{R}_{\Omega}A$.
\item Define $W^{UVW} = P_{UVW}\bar{R}_{\Omega}A$.
\end{enumerate}
\end{definition}
To analyze these parts, we reexpress $P_{UV},P_{UW},P_{VW},P_{UVW}$ in terms of matrices $UV,UW,VW,UVW$.
\begin{definition} \ 
\begin{enumerate}
\item Define $UV_{aba'b'} = \sum_{i=1}^{r}{u_{ia}v_{ib}u_{ia'}v_{ib'}}$
\item Define $UW_{aca'c'} = \sum_{i=1}^{r}{u_{ia}w_{ic}u_{ia'}w_{ic'}}$
\item Define $VW_{bcb'c'} = \sum_{i=1}^{r}{v_{ib}w_{ic}v_{ib'}w_{ic'}}$
\item Define $UVW_{abca'b'c'} = \sum_{i=1}^{r}{u_{ia}v_{ib}w_{ic}u_{ia'}v_{ib'}w_{ic'}}$
\end{enumerate}
\end{definition}
\begin{proposition}\label{projectionmatrixbounds} \ 
\begin{enumerate}
\item $UV$ is $\left(\frac{r\mu^4}{{n^2_1}{n^2_2}}\right)$-bounded.
\item $UW$ is $\left(\frac{r\mu^4}{{n^2_1}{n^3_2}}\right)$-bounded.
\item $VW$ is $\left(\frac{r\mu^4}{{n^2_2}{n^2_3}}\right)$-bounded.
\item $UVW$ is $\left(\frac{r\mu^6}{{n^2_1}{n^2_2}{n^2_3}}\right)$-bounded.
\end{enumerate}
\end{proposition}
\begin{proof}
These bounds can be proved in the same way as Proposition \ref{actualtensorbounds}.
\end{proof}
\begin{proposition} \ 
\begin{enumerate}
\item $W^{UV}_{abc} = \sum_{a',b'}{UV_{aba'b'}\bar{R}_{\Omega}(a',b',c)A_{a'b'c}}$
\item $W^{UW}_{abc} = \sum_{a',c'}{UV_{aca'c'}\bar{R}_{\Omega}(a',b,c')A_{a'bc'}}$
\item $W^{VW}_{abc} = \sum_{b',c'}{UV_{bcb'c'}\bar{R}_{\Omega}(a,b',c')A_{ab'c'}}$
\item $W^{UVW}_{abc} = \sum_{a',b',c'}{UV_{abca'b'c'}\bar{R}_{\Omega}(a',b',c')A_{a'b'c'}}$
\end{enumerate}
\end{proposition}
\begin{proposition} \ 
\begin{enumerate}
\item $\left(W^{UV}_{abc}\right)^2 = \sum_{a'_1,b'_1,a'_2,b'_2}{UV_{ab{a'_1}{b'_1}}UV_{ab{a'_2}{b'_2}}\bar{R}_{\Omega}(a'_1,b'_1,c)\bar{R}_{\Omega}(a'_2,b'_2,c)A_{{a'_1}{b'_1}c}A_{{a'_2}{b'_2}c}}$
\item $\left(W^{UW}_{abc}\right)^2 = \sum_{a'_1,c'_1,a'_2,c'_2}{UW_{ac{a'_1}{c'_1}}UW_{ac{a'_2}{c'_2}}\bar{R}_{\Omega}(a'_1,b,c'_1)\bar{R}_{\Omega}(a'_2,b,c'_2)A_{{a'_1}{b}{c'_1}}A_{{a'_2}{b}{c'_2}}}$
\item $\left(W^{VW}_{abc}\right)^2 = \sum_{b'_1,c'_1,b'_2,c'_2}{VW_{bc{b'_1}{c'_1}}VW_{bc{b'_2}{c'_2}}\bar{R}_{\Omega}(a,b'_1,c'_1)\bar{R}_{\Omega}(a,b'_2,c'_2)A_{a{b'_1}{c'_1}}A_{a{b'_2}{c'_2}}}$
\item \begin{align*}
&\left(W^{UVW}_{abc}\right)^2 = \\
&\sum_{a'_1,b'_1,c'_1,a'_2,b'_2,c'_2}{UVW_{abc{a'_1}{b'_1}{c'_1}}UVW_{abc{a'_2}{b'_2}{c'_2}}\bar{R}_{\Omega}(a'_1,b'_1,c'_1)\bar{R}_{\Omega}(a'_2,b'_2,c'_2)A_{{a'_1}{b'_1}{c'_1}}A_{{a'_2}{b'_2}{c'_2}}}
\end{align*}
\end{enumerate}
\end{proposition}
We need to probabilistically bound the expressions $\sum_{\text{subset of } \{a,b,c\}}{(W^{UV,UW,VW, \text{ or } UVW}_{a,b,c})^2}$.
For each expression which we need to probabilistically bound, we can obtain this bound by analyzing the expected value of its qth power using the techniques in Section \ref{normmethodssection} and then using a result similar to Corollary \ref{powertonormboundcorollary}. We begin by probabilistically bounding $\left(W^{UVW}_{abc}\right)^2$. As the remaining bounds will all be very similar, rather than giving a full proof of the remaining bounds we will only describe the few differences and what effect they have.
\begin{lemma}
For all $a,b,c$ and all $\beta > 0$, if $m > 10000C(2+\beta)^2{n_{max}}r\mu^2\ln{n_{max}}$ then $$P\left[\left(W^{UVW}_{abc}\right)^2 > \frac{32{e^2}Br\mu}{10000Cn_{max}}\right] < n_{max}^{-(\beta+4)}$$
\end{lemma}
\begin{proof}
Similar to before, we partition our sum based on the intersection pattern of which $(a'_i,b'_i,c'_i)$ are equal. Letting $z$ be the number of distinct triples $(a'_i,b'_i,c'_i)$, the contribution from the $\bar{R}_{\Omega}(a'_i,b'_i,c'_i)$ terms will be at most a factor of 
$\left(\frac{{n_1}{n_2}{n_3}}{m}\right)^{2q-z}$. Recall that for a given intersection pattern, our bound on the remaining terms is 
$$\left(\prod_{e \in H}{\sqrt{b(e)}}\right)\left(\frac{n_1}{\mu}\right)^{\# \text{ of } a \text{ or } a' \text{ indices}}
\left(\frac{n_2}{\mu}\right)^{\# \text{ of } b \text{ or } b'  \text{ indices}}
\left(\frac{n_3}{\mu}\right)^{\# \text{ of } c \text{ or } c'  \text{ indices}}r^{\# \text{ of entrywise bounds used}}$$
\begin{remark}
Here we will only be summing over $a'$,$b'$, and $c'$, indices, but for other expressions we will be summing over $a$, $b$, and $c$ indices as well.
\end{remark}
In our hypergraph $H$, we will have hyperedges $(a'_i,b'_i,c'_i)$ corresponding to the tensor entries $A_{a'_i,b'_i,c'_i}$ and we will have hyperedges $(a,b,c,a'_i,b'_i,c'_i)$ corresponding to the matrix entries $UVW_{a,b,c,a'_i,b'_i,c'_i}$. We have that 
$$\prod_{e \in H}{\sqrt{b(e)}} = {B^q}\left(\frac{r\mu^6}{{n^2_1}{n^2_2}{n^2_3}}\right)^q$$
We apply our techniques to $H$ as follows.
\begin{enumerate}
\item Recall that in our preprocessing step, we can take a pair of hyperedges $e_1,e_2$ and replace them with either a doubled copy of $e_1$ or a doubled copy of $e_2$. Using this, we ensure that every hyperedge appears with even multiplicity.

Here, we start with hyperedges $(a',b',c')$ where every distinct $(a',b',c')$ has multiplicity at least two and hyperedges $(a,b,c,a',b',c')$ where every distinct $(a,b,c,a',b',c')$ has multiplicity at least two ($a,b,c$ are the same for all of these hyperedges). Thus, in our preprocessing step, we can ensure that all of the hyperedges $(a',b',c')$ and $(a,b,c,a',b',c')$ occur with even multiplicity and every distinct hyperedge has multiplicity at least two. 
\item We apply an entrywise bound $q$ times to the $(a,b,c,a',b',c')$ hyperedges.
\item We will apply an entrywise bound $q-z$ times on hyperedges $(a',b',c')$ of multiplicity $\geq 4$, reducing the multiplicity by $2$ each time. After doing this, all our hyperedges will have multiplicity $2$. We now ignore the $c'$ vertices and consider the graph on the $a',b'$ vertices. We then sum over a free $a'$ or $b'$ vertex in $H$ whenever such a vertex exists. Otherwise, there must be a cycle (which could be a duplicated edge if we have hyper-edges $(a',b',c'_1)$ and $(a',b',c'_2)$), in which case we use the entrywise bound on one edge of the cycle and delete it.
\end{enumerate}
\begin{definition}
Let $x$ be the number of times we delete an edge in a cycle using the entrywise bound.
\end{definition}
\begin{lemma}
Let $k$ be the number of connected components of $H$. The total number of $a'$ and $b'$ vertices in $H$ is $z+k-x \leq 2z-2x$
\end{lemma}
\begin{proof}
The first part can be proved in exactly the same way as Lemma \ref{failureslemmatwo}. For the inequality, we need to show that $k \leq z-x$. To see this, note that there are at most $z$ distinct edges and every time we delete an edge in a cycle, this removes one edge without reducing the number of connected components. After removing all cycles (and no other edges), we must have at least as many edges left as we have connected components, so $z-x \geq k$, as needed.
\end{proof}
Summing over all $z \in [1,2q]$ and all intersection patterns using Lemma \ref{patterncounting} and noting that there are at most $z$ $a'$,$b'$,$c'$ indices but we must have two fewer $a'$ or $b'$ indices for each time we delete an edge in a cycle using an entrywise bound, our final bound on $E\left[\left(\left(W^{UVW}_{abc}\right)^2\right)^q\right]$ is
$$2q \cdot 2^{2q}\max_{z \in [1,2q],x \in [0,z-1]}
{\left\{q^{2q-z}\left(\frac{{n_1}{n_2}{n_3}}{m}\right)^{2q-z}\left(\frac{Br^2\mu^6}{{n^2_1}{n^2_2}{n^2_3}}\right)^q\left(\frac{{n_1}{n_2}{n_3}}{\mu^3}\right)^{z}\left(\frac{\mu}{\min{\{n_1,n_2\}}}\right)^{2x}r^{q-z+x}\right\}}$$
Since $m >> rq$ and ${\mu}r \leq \min{\{n_1,n_2,n_3\}}$, the inner expression will be maximized when $z = q$ and $x = 0$. Again, we will take $q$ to be between $\frac{\ln{n_{max}}}{2}$ and $\frac{n_{max}}{2}$ so our final bound on $E\left[\left(\left(W^{UVW}_{abc}\right)^2\right)^q\right]$ is at most
$$(2q)^{(2q)}\left(\frac{2Br^2\mu^3}{m\ln{n_{max}}}\right)^{q}n_{max}$$
Since $m > 10000C(2 + \beta)^2{r}n_{max}\mu^{2}\ln{n_{max}}$, we have that for all $a,b,c$.
$$E\left[\left(\left(W^{UVW}_{abc}\right)^2\right)^q\right] < q^{2q}
\left(\frac{8Br\mu}{10000C(2 + \beta)^2{n_{max}}(\ln{n_{max}})^2}\right)^{q}n_{max}$$
To obtain our final probabilistic bound, we adapt Corollary \ref{powertonormboundcorollary} for non-negative scalar expressions.
\begin{corollary}\label{adaptedpowertonormbound}
For a given $p \geq 1$, $r \geq 0$, $n > 0$, and $B > 0$, for a non-negative scalar expression $Z$, if $E[Z^q] \leq ({q^p}B)^{2q}n^r$ for all integers $q > 1$ then for all $\beta > 0$, 
$$Pr\left[|Z| > B^2e^{2p}\left(\frac{(r+\beta)}{2p}\ln{n} + 1\right)^{2p}\right] < n^{-\beta}$$
\end{corollary}
\begin{proof}
This can be proved in the same way as Corollary \ref{powertonormboundcorollary} except that $|Z|$ takes the place of $||YY^T||$ which is why the bound of Corollary \ref{powertonormboundcorollary} is squared.
\end{proof}
Using Corollary \ref{adaptedpowertonormbound} with the appropriate parameters, for all $a,b,c$
$$P\left[\left(W^{UVW}_{abc}\right)^2 > \frac{32{e^2}Br\mu}{10000Cn_{max}}\right] < n_{max}^{-(\beta+4)}$$
\end{proof}
The remaining bounds can be proved in a similar way, though there are a few differences. We now consider the remaining bounds involving $W^{UVW}$. When we average over at least one coordinate, our analysis is as follows:
\begin{enumerate}
\item The $(a,b,c,a',b',c')$ hyperedges no longer all have the same $(a,b,c)$. In fact, since the intersection patterns only specify which $(a'_i,b'_i,c'_i)$ are equal to each other, we treat all of the different $a,b,c$ as distinct indices.
\item For each $(a,b,c)$, we begin with two $(a,b,c,a',b',c')$ hyperedges which have this $(a,b,c)$ (though their $(a',b',c')$ may be different) To handle this, in our preprocessing step we take each such pair of $(a,b,c,a',b',c')$ hyperedges and double one or the other.
\item Averaging over the $a$, $b$, or $c$ indices, we avoid using entrywise bounds for any of the doubled $(a,b,c,a',b',c')$ hyperedges.
\item The analysis of the $(a',b',c')$ hyperedges is exactly the same
\end{enumerate}
Taking $Z$ to be the appropriate expression (for example, $Z = \frac{1}{n_1}\sum_{a}{\left(W^{UVW}_{abc}\right)^2}$ if we are only averaging over the $a$ index), our bound on $E[Z^q]$ is affected as follows:
\begin{enumerate}
\item Avoiding using the entrywise bounds on the $(a,b,c,a',b',c')$ hyperedges reduces our bound on $E[Z^q]$ by a factor of $r^q$.
\item If we average over $a$, this gives us $q$ additional $a$ indices to sum over, increasing our bound on $E[Z^q]$ by a factor of $\left(\frac{n_1}{\mu}\right)^q$, but this also gives us a factor of $\frac{1}{n^q_1}$ so the net effect is to reduce our bound on $E[Z^q]$ by a factor of $\mu^q$. Similar logic applies to $b$ and $c$, so each index we average over (including the first) reduces our bound on $E[Z^q]$ by a factor of $\mu^q$.
\end{enumerate}
This implies that each index we average over (including the first) reduces our final bound by a factor of $\mu$ and averaging over at least one index reduces our final bound by a further factor of $r$, as needed.

At this point, we just need to consider the bounds involving $W^{UV}$, as the remaining cases are symmetric. When we analyze $Z = \left(W^{UV}_{abc}\right)^2$ rather than $\left(W^{UVW}_{abc}\right)^2$, our analysis differs as follows. Instead of having $\left(\frac{Br^2\mu^6}{{n^2_1}{n^2_2}{n^2_3}}\right)^q$ in our bound on $E[Z^q]$ from the $(a,b,c,a',b',c')$ hyperedges, we will have $\left(\frac{Br^2\mu^4}{{n^2_1}{n^2_2}}\right)^q$ from $(a,b,a',b')$ hyperedges, increasing our bound on $E[Z^q]$ by a factor of $\left(\frac{n^2_3}{\mu^2}\right)^q$. However, this is partially counteracted by the fact that we are either no longer summing over the $c'$ indices separately from the $c$ indices because we always have that $c'_i = c_i$. This removes a factor of $(\frac{n_3}{\mu})^z$ from our bound on $E[Z^q]$. Thus, our bound on $E[Z^q]$ is now
$$2q \cdot 2^{2q}\max_{z \in [1,2q],x \in [0,z-1]}
{\left\{q^{2q-z}\left(\frac{{n_1}{n_2}{n_3}}{m}\right)^{2q-z}\left(\frac{Br^2\mu^4}{{n^2_1}{n^2_2}}\right)^q\left(\frac{{n_1}{n_2}}{\mu^2}\right)^{z}\left(\frac{\mu}{\min{\{n_1,n_2\}}}\right)^{2x}r^{q-z+x}\right\}}$$
We check that it is still optimal to take $z = q$ and $x = 0$. Since $r\mu \leq \min{\{n_1,n_2,n_3\}}$, it is always optimal to take $x = 0$. Now if we reduce $z$ by $1$, this gives us a factor of at most $\frac{q{n_1}{n_2}{n_3}}{m}\cdot\frac{r\mu^2}{{n_1}{n_2}} = \frac{qr\mu^2{n_3}}{m}$. We will take $q \leq 10(1+\beta)\ln{n_{max}}$ and we have that $m > 10000C(2+\beta)^2{n_{max}}r\mu^2\ln{n_{max}}$, so it is indeed still optimal to take $z = q$ and $x = 0$. Thus, the net effect of the differences is a factor of $\left(\frac{n_3}{\mu}\right)^q$ in our bound on $E[Z^q]$ which gives us a factor of $\frac{n_3}{\mu}$ in our final bound. This gives us the following bound.
\begin{lemma}
For all $a,b,c$ and all $\beta > 0$, if $m > 10000C(2+\beta)^2{n_{max}}r\mu^2\ln{n_{max}}$ then $$P\left[\left(W^{UV}_{abc}\right)^2 > \frac{32{e^2}Br}{10000C}\right] < n_{max}^{-(\beta+4)}$$
\end{lemma}
Finally, we consider what happens if we average over one or more of the $a$, $b$, and $c$ indices. If we average over the $a$ indices or average over the $b$ indices, then instead of using entrywise bounds on the $(a,b,a',b')$ hyperedges, the index or indices we average over will create free vertices, allowing us to bound the $(a,b,a',b')$ hyperedges without using any entrywise bounds. We can now use the same reasoning as before. The final case is if we only average over the $c$ indices.
\begin{lemma}
For all $a,b$ and all $\beta > 0$, if $m > 10000C(2+\beta)^2{n_{max}}r\mu^2\ln{n_{max}}$ and then $$P\left[\frac{1}{n_3}\sum_{c}{\left(W^{UV}_{abc}\right)^2} > \frac{32{e^2}B}{10000C\mu}\right] < n_{max}^{-(\beta+4)}$$
\end{lemma}
\begin{proof}
In this case, our preprocessing ensures that all distinct hyperedges appear with multiplicity which is even and at least two. Now instead of first bounding the $(a,b,a',b')$ hyperedges and then bounding the $(a',b',c')$ hyperedges, we will first bound the $(a',b',c')$ hyperedges using all of the distinct $c$ indices and bound the $(a,b,a',b')$ hyperedges using the distinct $a',b'$ indices.

Letting $y = z - (\text{\# of distinct } c')$, we have the following bounds on the number of $a',b',c$ indices and the number of times we will use an entrywise bound
\begin{enumerate}
\item There are at most $z$ $a'$ indices and there are at most $z$ $b'$ indices.
\item There are at most $2z-2x$ $a'$ and $b'$ indices, where $x$ is the number of times we use an entrywise bound on $(a,b,a',b')$ hyperedges because of an $(a',b')$ edge in a cycle.
\item There are $(z-y)$ $c$ indices.
\item The total number of times that we will use an entrywise bound is $2q - 2z + x + y$
\end{enumerate}
Taking $Z = \frac{1}{n_3}\sum_{c}{\left(W^{UV}_{abc}\right)^2}$, this gives us a bound of 
$$2q \cdot 2^{2q}\max_{z \in [1,2q], \atop x,y \in [0,z-1]}
{\left\{q^{2q-z}\left(\frac{{n_1}{n_2}{n_3}}{m}\right)^{2q-z}\left(\frac{Br\mu^4}{{n^2_1}{n^2_2}{n_3}}\right)^q\left(\frac{{n_1}{n_2}{n_3}}{\mu^3}\right)^{z}\left(\frac{\mu}{\min{\{n_1,n_2\}}}\right)^{2x}
\left(\frac{\mu}{n_3}\right)^{y}r^{2q-2z+x+y}\right\}}$$
on $E[Z^q]$. We check that it is optimal to take $z = q$, $x = 0$, and $y = 0$. Since $r\mu \leq \min{\{n_1,n_2,n_3\}}$, it is always optimal to take $x = y = 0$. Now if we reduce $z$ by $1$, this gives us a factor of at most $\frac{q{n_1}{n_2}{n_3}}{m}\cdot\frac{r^2\mu^3}{{n_1}{n_2}{n_3}} = \frac{qr^2\mu^3}{m}$. We will take $q \leq 10(1+\beta)\ln{n_{max}}$ and we have that $r\mu \leq \min{\{n_1,n_2,n_3\}}$ and $m > 10000C(2+\beta)^2{n_{max}}r\mu^2\ln{n_{max}}$, so it is indeed optimal to take $z = q$, $x = 0$, and $y = 0$. 

Comparing the resulting bound to our bound on $E\left[\left(W^{UV}_{abc}\right)^{2q}\right]$, it is smaller by a factor of $(r\mu)^q$, so our final bound is smaller by a factor of $r\mu$, as needed.
\end{proof}
We now have all of our needed probabilistic bounds. Theorem \ref{iterativebound} follows from the inequality 
$W_{abc}^2 \leq 5\left((W^{UV}_{abc})^2 + (W^{UW}_{abc})^2 + (W^{VW}_{abc})^2 + 2(W^{UVW}_{abc})^2\right)$ and union bounds.
\end{proof}

\section{Trace Power Calculation for $P'\bar{R}_{\Omega}A \otimes (P'\bar{R}_{\Omega}A)^T$}\label{squaretermboundwithPsection}
In this section, we prove the following theorem.
\begin{theorem}\label{noncrosstermtheoremwithP}
If $A$ is $B$-bounded, $C \geq 1$, and 
\begin{enumerate}
\item $m > 10000C(2+\beta)^2{n_{max}}r\mu^2\ln{n_{max}}$
\item $m > 10000C(2 + \beta)^2{r}\sqrt{n_1}\max{\{n_2,n_3\}}\mu^{\frac{3}{2}}\ln{n_{max}}
\geq 10000C(2 + \beta)^2{r}\sqrt{{n_1}{n_2}{n_3}}\mu^{\frac{3}{2}}\ln{n_{max}}$
\item ${\mu}r \leq \min{\{n_1,n_2,n_3\}}$
\end{enumerate}
then 
$$Pr\left[||Y|| > \frac{B{n_1}{n_2}{n_3}}{Cr\mu^3}\right] < 4n^{-(\beta+1)}$$
 whenever $Y$ is any of the following:
\begin{enumerate}
\item $Y = P^{UV}\bar{R}_{\Omega}A \otimes (P^{UV}\bar{R}_{\Omega}A)^T$
\item $Y = P^{UW}\bar{R}_{\Omega}A \otimes (P^{UW}\bar{R}_{\Omega}A)^T$
\item $Y = P^{VW}\bar{R}_{\Omega}A \otimes (P^{VW}\bar{R}_{\Omega}A)^T$
\item $Y = P^{UVW}\bar{R}_{\Omega}A \otimes (P^{UVW}\bar{R}_{\Omega}A)^T$
\end{enumerate}
\end{theorem}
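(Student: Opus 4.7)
The plan is to adapt the trace moment method from Theorem \ref{noncrosstermtheorem} by carrying along the extra hypergraph structure introduced by the projector $P'$. Using the explicit formulas from Section \ref{iterativesection}, for example
$(P^{UV}\bar{R}_\Omega A)_{abc} = \sum_{a',b'}UV_{aba'b'}\bar{R}_\Omega(a',b',c)A_{a'b'c}$
and its analogues for $UW$, $VW$, and $UVW$, each expanded entry of $Y$ becomes a sum of products in which every original $A$-hyperedge is accompanied by a projection-matrix hyperedge. I would split $Y$ into the same four pieces $Y_1,\ldots,Y_4$ based on whether $b=b'$ and $c=c'$, exactly as in Section \ref{squaretracepowercalculationsection}, and bound each piece separately via the moment method.

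For each piece, expanding $E[\text{tr}((Y_j Y_j^T)^q)]$ produces a sum over intersection patterns on the triples appearing in the $\bar{R}_\Omega$ factors. Corollary \ref{patterncorollary} bounds the expected contribution of the $\bar{R}_\Omega$ factors by $(n_1n_2n_3/m)^{4q-z}$, where $z$ is the number of distinct triples, and Lemma \ref{patterncounting} bounds the number of patterns. The remaining sum of products of $A$-entries and projection-matrix entries is handled by the hypergraph strategy of Section \ref{normmethodssection}: preprocess by $|ab|\le\tfrac12(a^2+b^2)$ so that every hyperedge is doubled, then alternate between summing out free vertices using the appropriate averaged bounds and, when unavoidable, invoking entrywise bounds on edges inside a cycle.

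The key algebraic point is that by Proposition \ref{projectionmatrixbounds} the matrices $UV$, $UW$, $VW$, $UVW$ are themselves $B$-bounded with small parameters, so that the $\sqrt{b(e)}$ weight contributed by a projection hyperedge exactly absorbs the extra $n_i/\mu$ factors picked up from summing over the new primed vertices that the projection introduces. Consequently each projection hyperedge together with its extra vertices contributes at most a factor of $\sqrt r$ per entrywise bound used, matching the scaling already present in Theorem \ref{noncrosstermtheorem}. After this matching, the optimization over $(z,x)$ lands at the same endpoints (using the hypothesis $r\mu\le \min\{n_1,n_2,n_3\}$), the final bound on $E[\text{tr}((Y_jY_j^T)^q)]$ is identical to the one in Section \ref{squaretracepowercalculationsection}, and Corollary \ref{powertonormboundcorollary} with the same parameters yields the stated tail bound $4n^{-(\beta+1)}$.

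The main obstacle is the combinatorial bookkeeping for the $P^{UVW}$ block, whose projection hyperedge has six indices and can couple the $A$-hyperedges more intricately than in the $P^{UV}$, $P^{UW}$, or $P^{VW}$ cases. One must verify, along the lines of the $k\le z-x$ lemma used in Section \ref{iterativesection}, that the peeling procedure always finds enough free primed vertices before being forced into a cycle, so that no entrywise bounds beyond the $2q-z$ already charged are ever needed. Modulo this verification the argument reduces to three already-solved pieces: the projection-matrix expansion, the intersection-pattern analysis of Section \ref{squaretracepowercalculationsection}, and the moment-to-norm conversion of Corollary \ref{powertonormboundcorollary}.
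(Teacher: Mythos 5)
There is a genuine gap, and it sits precisely where you defer: the $P^{UVW}$ bookkeeping is not a routine verification but is the whole point of the argument, and the paper solves it with an idea your proposal does not contain.

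Your plan is to treat the projector as just another hyperedge, with $b(e_{UVW}) = r\mu^6/(n_1^2 n_2^2 n_3^2)$ from Proposition~\ref{projectionmatrixbounds}, and let the $\sqrt{b(e)}$ weights ``exactly absorb'' the new vertex factors. But that absorption requires the outer $(a,b,c)$ vertices to be summable as free vertices, and they are not: in $\mathrm{tr}((YY^\top)^q)$ the $4q$ outer triangles form the same hourglass structure as in the $Y_4$ analysis of Section~\ref{squaretracepowercalculationsection}, so every outer vertex sits in two $UVW$-hyperedges and the $a$-chain closes a cycle. With the generic framework you must either break those cycles with entrywise bounds (each costing an extra $r$) or, even absent any such break, you still collect $\prod_e \sqrt{b(e_{UVW})} \sim r^{2q}\mu^{12q}/(n_1n_2n_3)^{4q}$ from the $4q$ projection hyperedges. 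The paper instead explicitly decomposes $UVW_{abca'b'c'}=\sum_{i=1}^r u_{ia}v_{ib}w_{ic}\,u_{ia'}v_{ib'}w_{ic'}$ and observes that every outer vertex is shared by exactly two $UVW$ factors, so summing over it produces $\sum_a u_{i_1 a}u_{i_2 a}=\delta_{i_1 i_2}$ by orthogonality. This forces a single global choice of $i$ across all $UVW$ terms (one factor of $r$ total, not per hyperedge), and the outer vertices then contribute a factor of exactly $1$ and can be ignored. The remaining inner pieces $u_{ia'}v_{ib'}w_{ic'}$ are bounded entrywise by $(\mu^3/(n_1n_2n_3))^{1/2}$ each. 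This is what produces the paper's bound of $r\cdot(\mu^3 B/(n_1n_2n_3))^{2q}\cdot(\cdots)$ rather than the $r^{2q}\cdot(\cdots)$ your route would give, a difference of roughly a factor of $r$ in the resulting norm bound after the $2q$-th root.

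Two secondary issues. First, the paper does not split $Y$ into $Y_1,\ldots,Y_4$ in Section~\ref{squaretermboundwithPsection}; it bounds $\mathrm{tr}((YY^\top)^q)$ directly and compares to the $Y_4$ bound, because once the orthogonality trick removes the outer vertices the piece-by-piece decomposition is unnecessary. Second, your claim that each projection hyperedge contributes ``at most a factor of $\sqrt{r}$ per entrywise bound used'' is not an accurate description of how $r$ enters: $\sqrt{b(e_{UVW})}$ already carries $\sqrt{r}$ per hyperedge \emph{independently} of entrywise bounds, and entrywise bounds then cost an \emph{additional} $r$ each in the framework's accounting. Without the orthogonality cancellation, these compound to $r^{\Theta(q)}$ rather than a single $r$, and the step you call ``matching the scaling already present in Theorem~\ref{noncrosstermtheorem}'' does not go through.
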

\begin{proof}
This can be proved using the techniques of Sections \ref{normmethodssection} and \ref{squaretracepowercalculationsection} with one additional trick. We first consider the $P^{UVW}$ case and then describe the differences for the other cases. In all of these cases, we will show that the bound we obtain on $E\left[tr((Y{Y^T})^q)\right]$ is much less than the bound we obtained for $E\left[tr((Y_4{Y^T_4})^q)\right]$ in section \ref{squaretracepowercalculationsection}, which was 
$$2q \cdot 2^{4q}\max_{z \in [1,2q],x \in [0,z-2]}
{\left\{(2q)^{4q-z}\left(\frac{{n_1}{n_2}{n_3}}{m}\right)^{4q-z}B^{2q}\left(\frac{n_1}{\mu}\right)^{\frac{z}{2}}\left(\frac{\max{\{n_2,n_3\}}}{\mu}\right)^{z-2-x}\left(\frac{{n^2_2}{n^2_3}}{\mu^4}\right)r^{2q-z+x}\right\}}$$
Thus, for simplicity, for the remainder of the section, we will absorb constants, functions of only $q$, and logarithms into an $\tilde{O}$. Doing this and taking $z = 2q,x=0$, the above bound becomes $\left(\tilde{O}\left(\frac{{n_1}{n_2}{n_3}\sqrt{n_1}\max{\{n_2,n_3\}}B}{m\mu^{\frac{3}{2}}}\right)\right)^{2q}n^2_{max}$

When $Y = P^{UVW}\bar{R}_{\Omega}A \otimes (P^{UVW}\bar{R}_{\Omega}A)^T$, the structure of $tr\left((YY^T)^q\right)$ is as follows. We have $(a',b',c')$ hyperedges and we have hyperedges $(a,b,c,a',b',c')$ which we can view as an outer triangle $(a,b,c)$ and an inner triangle $(a',b',c')$. The outer triangles form hourglasses as before while the inner triangles sit inside the outer triangles.

The $\bar{R}_{\Omega}$ terms only involve the $(a',b',c')$ triples so our intersection patterns only describe these indices. Thus, we sum over all of the $a,b,c$ indices freely. We now use the following additional trick. We decompose each $UVW_{abca'b'c'}$ as 
$\sum_{i=1}^{r}{u_{ia}v_{ib}w_{ic}u_{ia'}v_{ib'}w_{ic'}}$. Now observe that every vertex in the outer triangles appears in two hyperedges. When we sum over that vertex, we get a term such as $\sum_{a}{u_{{i_1}a}u_{{i_2}a}}$. This is $0$ unless $i_1 = i_2$ and is $1$ if $i_1 = i_2$. This in fact forces a global choice for $i$ among the $UVW$ terms, giving a single factor of $r$ for the choices for this global $i$. This also means that the vertices in the outer triangles give a factor of exactly $1$, so they can be ignored! For the remaining terms of $UVW$, we use the bounds $u^2_{ia'} \leq \frac{\mu}{n_1}$, $v^2_{ib'} \leq \frac{\mu}{n_2}$, and $w^2_{ic'} \leq \frac{\mu}{n_3}$, obtaining a factor of $\left(\frac{\mu^3}{{n_1}{n_2}{n_3}}\right)^{2q}$ 

We now consider the contribution from summing over the $a',b',c'$ vertices, the contribution from the $\bar{R}_{\Omega}$ terms, and the contribution from the entries of $A$. Letting $z$ be the number of distinct triples $(a',b',c')$ in the given intersection pattern, the contribution from the $\bar{R}_{\Omega}$ terms will be $\left(\frac{{n_1}{n_2}{n_3}}{m}\right)^{4q-z}$. The contribution from the entries of $A$ from the $b(e)$ is $B^{2q}$. Letting $x$ be the number of times that we have to use an entrywise bound on a doubled edge because it is in a cycle, we have the following bounds on the number of indices and the number of times we use an entrywise bound.
\begin{enumerate}
\item The number of distinct $a$ indices, the number of distinct $b$ indices, and the number of distinct $c$ indices are all at most $z$
\item The total number of distinct indices is at most $3z-x$.
\item The number of times we use an entrywise bound is $2q-z+x$
\end{enumerate}
Putting everything together, we obtain a bound of 
$$\tilde{O}\left(\max_{z \in [1,2q], x \in [0,z-1]}{\left\{r\left(\frac{{\mu^3}B}{{n_1}{n_2}{n_3}}\right)^{2q}\left(\frac{{n_1}{n_2}{n_3}}{m}\right)^{4q-z}\left(\frac{{n_1}{n_2}{n_3}}{\mu^3}\right)^z\left(\frac{\mu}{\min{\{n_1,n_2,n_3\}}}\right)^x{r^{2q-z+x}}\right\}}\right)$$
for $E\left[tr((Y{Y^T})^q)\right]$.
This is maximized when $z = 2q$ and $x = 0$, leaving us with a bound of $\left(\tilde{O}(\frac{{n_1}{n_2}{n_3}B}{m})\right)^{2q}r$ which is much less than the bound of  
$\left(\tilde{O}\left(\frac{{n_1}{n_2}{n_3}\sqrt{n_1}\max{\{n_2,n_3\}}B}{m\mu^{\frac{3}{2}}}\right)\right)^{2q}n^2_{max}$ which we had for $E\left[tr((Y_4{Y^T_4})^q)\right]$, so we get a correspondingly smaller norm bound, as needed.

The analysis is the same for the $P^{UV}$, $P^{UW}$, and $P^{VW}$ cases except for the following differences which increase the bound on $E\left[tr((Y{Y^T})^q)\right]$, but still makes it much less than we had for $E\left[tr((Y_4{Y^T_4})^q)\right]$.
\begin{enumerate}
\item In the $P^{VW}$ case there are now two global indices, one for the top of the outer hourglasses and one for the bottom of the outer hourglasses. This gives us a global factor of $r^2$ rather than $r$.
\item Since one of the outer indices is now merged with the corresponding inner index, instead of the $UVW$ terms giving us factors of $\left(\frac{\mu}{n_1}\right)^{2q}$, $\left(\frac{\mu}{n_2}\right)^{2q}$, and $\left(\frac{\mu}{n_3}\right)^{2q}$ for the inner indices, we will only have two of these factors. This increases our bound on $E\left[tr((Y{Y^T})^q)\right]$ by a factor of at most $\left(\frac{n_{max}}{\mu}\right)^{2q}$ 
\end{enumerate}
Putting these differences together, our bound will now be $\left(\tilde{O}(\frac{{n_1}{n_2}{n_3}{n_{max}}B}{m\mu})\right)^{2q}r^2$ which is still much less than the bound we had for $E\left[tr((Y_4{Y^T_4})^q)\right]$.
\end{proof}

\addreferencesection

\bibliographystyle{amsalpha}

\bibliography{bib/mathreview,bib/dblp,bib/scholar,bib/custom}

\appendix

\section{Controlling the kernel of matrix representations}
\label{sec:zero-matching}

We prove the following lemma in this section which was an ingredient of the proof of \cref{thm:recovery}.
Let $\{u_i\},\{v_i\},\{w_i\}$ be three orthonormal bases of $\R^n$.

\begin{lemma*}[Restatement \cref{lem:zero-matching}]

  Let $R$ be self-adjoint linear operator $R$ on $\R^n\otimes \R^n$.
  Suppose $\iprod{(v_j\otimes w_k), R (v_{i}\otimes w_{i})}=0$ for all indices $i,j,k\in [r]$ such that $i\in \{j,k\}$.
  Then, there exists a self-adjoint linear operator $R'$ on $\R^n\otimes \R^n$ such that $R' (v_i\otimes w_i)=0$ for all $i\in [r]$, the spectral norm of $R'$ satisfies $\norm{R'}\le 10\norm{R}$, and $R'$ represents the same polynomial in $\R[y,z]$,
  \begin{displaymath}
    \iprod{(y\otimes z), R' (y\otimes z)} = \iprod{(y\otimes z), R (y\otimes z)}\,.
  \end{displaymath}
\end{lemma*}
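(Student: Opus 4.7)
The plan is to seek $R'$ of the form $R + S$, where $S$ is a self-adjoint ``gauge'' operator representing the zero polynomial (so that $R'$ represents the same polynomial as $R$), chosen to cancel the $K$-column of $R$, where $K := \sspan\{v_i\otimes w_i : i\in[r]\}$. After changing orthonormal bases so that $v_i = w_i = e_i$ for $i\in[r]$, the target $K$ becomes a coordinate subspace, and the condition $R'(v_i\otimes w_i)=0$ reads $R'_{(a,b),(i,i)}=0$ for all $a,b\in[n]$ and $i\in[r]$. Polarizing $\iprod{y\otimes z, S(y\otimes z)}=0$ shows that self-adjoint $S$ is gauge iff, viewing $S$ as a $4$-tensor $S_{(a,b),(c,d)}$ with first/third slots indexing $y$ and second/fourth slots indexing $z$, it is antisymmetric under swapping the two $y$-slots and independently under swapping the two $z$-slots.

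Next I would perform an orbit analysis of each target entry $R_{(a,b),(i,i)}$ under this double antisymmetry combined with self-adjointness. The orbit consists of the four positions $((a,b),(i,i))$, $((i,b),(a,i))$, $((a,i),(i,b))$, $((i,i),(a,b))$ with signs $+,-,-,+$ respectively. When $a\neq i$ and $b\neq i$, all four positions are distinct and the two negative-sign positions lie outside every $(i',i')$-column for $i'\in[r]$ (using $a,b\neq i$ to rule out coincidences); setting $S_{(a,b),(i,i)} := -R_{(a,b),(i,i)}$ and extending by the symmetries then cancels this target entry without creating new obstructions in any $K$-column. When $a=i$ or $b=i$ with the other index distinct from $i$, the relevant antisymmetry collapses the orbit and forces $S=0$ at that position, so gauge cannot touch these ``stuck'' entries of $R$; likewise $S_{(i,i),(i,i)}=0$. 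Summing the orbit-contributions over all movable $(a,b,i)$ triples yields a single self-adjoint gauge $S$ whose addition to $R$ cancels every entry of the $K$-column except possibly the stuck positions.

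The stuck positions break down as: $R_{(i,i),(i,i)}$, which vanishes by the hypothesis applied with $j=k=i$; $R_{(i,k),(i,i)}$ and $R_{(j,i),(i,i)}$ with $j,k\in[r]$, which vanish by the hypothesis with $i\in\{j,k\}$; and $R_{(i,k),(i,i)}$ (symmetrically $R_{(j,i),(i,i)}$) with the free index in $[n]\setminus[r]$. I expect this last case to be the main obstacle of the proof: gauge cannot modify these entries, so they must already vanish in $R$, yet the stated hypothesis ranges only over $j,k\in[r]$ and does not directly cover them; resolving this case will require an additional structural argument that shows these entries are forced to be zero (or an alternative construction that does not need to cancel them). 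Granted this resolution, $R' = R + S$ is self-adjoint, represents the same polynomial as $R$, and kills $K$ by construction. For the norm bound, each orbit contributes a rank-$O(1)$ summand whose nonzero entries are bounded in magnitude by $|R_{(a,b),(i,i)}|$, and since each entry of $R$ participates in $O(r)$ orbits (one per $i\in[r]$), a block-column decomposition together with a Schur-test argument bounds $\norm{S}\le 9\norm{R}$, giving $\norm{R'}\le 10\norm{R}$.
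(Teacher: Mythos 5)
Your construction is the paper's construction: the gauge $S$ you describe is exactly (up to sign) the operator $Z$ that the paper subtracts from $R$ --- same four-position orbit $((a,b),(i,i))$, $((a,i),(i,b))$, $((i,b),(a,i))$, $((i,i),(a,b))$ with signs $+,-,-,+$, and the same conclusion that positions with $a=i$ or $b=i$ are immovable. Two remarks on the points where your write-up diverges from a complete proof. First, the ``main obstacle'' you flag is real, and you should not look for an additional structural argument: the condition $\iprod{v_i\otimes w_k,R(v_i\otimes w_i)}=0$ and $\iprod{v_j\otimes w_i,R(v_i\otimes w_i)}=0$ for \emph{all} $j,k\in[n]$ (not just $[r]$) is \emph{necessary} for the conclusion, since for any self-adjoint $R'$ representing the same polynomial the coefficient of the monomial $\iprod{y,v_i}^2\iprod{z,w_i}\iprod{z,w_k}$ forces $\iprod{v_i\otimes w_k,R'(v_i\otimes w_i)}=\iprod{v_i\otimes w_k,R(v_i\otimes w_i)}$. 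The lemma's hypothesis as printed is therefore too weak; the paper's proof silently uses the stronger version (it asserts ``for all $i,j,k$, $c_{iiik}=0$ and $c_{iiji}=0$'' with $j,k$ unrestricted), and in the one place the lemma is invoked (\cref{lem:checkingRcondition}) the verification does in fact give the vanishing for arbitrary $k\in[n]$. So the resolution is to read the hypothesis with $j,k$ ranging over $[n]$. A second, smaller bookkeeping point: when $a=b=i''\in[r]$ with $i''\neq i$, the orbit for target $((i'',i''),(i,i))$ coincides as a set with the orbit for target $((i,i),(i'',i''))$, so summing ``one contribution per movable triple'' double-counts; the paper inserts a factor $\tfrac12$ on these diagonal terms for exactly this reason.

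On the norm bound, your sketch is too lossy as stated: the column $(i,i)$ of $S$ carries entries $-R_{(a,b),(i,i)}$ for all $(a,b)$, so its $\ell_1$-mass can be of order $n\norm{R}$ and an entrywise Schur test loses polynomial factors. The argument that works (and is the paper's) is to split $S$ by position type into a constant number of pieces: the piece supported on the columns $(i,i)$ is just $-RP_X$ (where $P_X$ projects onto $\mathrm{span}\{v_i\otimes w_i\}$), hence has norm at most $\norm{R}$, and likewise its adjoint; each ``crossed'' piece is block-diagonal, the block for fixed $i$ mapping $v_i\otimes w_k\mapsto\sum_j c_{iijk}\,v_j\otimes w_i$ with operator norm at most its Frobenius norm $\norm{R(v_i\otimes w_i)}_2\le\norm{R}$, and distinct $i$ act between mutually orthogonal subspaces. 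Summing the constantly many pieces gives $\norm{S}\le O(1)\cdot\norm{R}$ with the constant well below $9$.
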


\noindent
\emph{Proof.}
We write
$$R(v_i \otimes w_i) = \sum_{jk}{c_{iijk}v_j \otimes w_k}$$
Then the condition on the bilinear form of $R$ implies that
for all $i,j,k$, $c_{iiik} = 0$ and $c_{iiji} = 0$.

We now take $Z$ to be the following matrix
\begin{align*}
Z &= \sum_{i,j,k}{c_{iijk}\left((v_j \otimes w_k)(v_i \otimes w_i)^T - (v_j \otimes w_i)(v_i \otimes w_k)^T 
- (v_i \otimes w_k)(v_j \otimes w_i)^T + (v_i \otimes w_i)(v_j \otimes w_k)^T\right)} \\
&+ \sum_{ij}{\frac{c_{iijj}}{2}\left((v_j \otimes w_j)(v_i \otimes w_i)^T - (v_j \otimes w_i)(v_i \otimes w_j)^T 
- (v_i \otimes w_j)(v_j \otimes w_i)^T + (v_i \otimes w_i)(v_j \otimes w_j)^T\right)}
\end{align*}
It can be verified directly that $Z$ represents the $0$ polynomial and has the same behavior on each of the $(v_i \otimes w_i)$ as $R$. The factor of $\frac{1}{2}$ in the second sum comes from the fact that $c_{jjii} = c_{iijj}$ and the fourth term for $c_{jjii}$ matches the first term for $c_{iijj}$

We choose $R'=R-Z$. In order to show the bound $\norm{R'}\le 10\norm{R}$ it is enough to show that $\norm{Z}\le 9\norm{R}$

We analyze the norm of $Z$ as follows. We break $Z$ into parts according to each type of term and analyze each part separately. Define $X$ to be the subspace spanned by the $(v_i \otimes w_i)$, define $P_X$ to be the projection onto $X$ and define $P^{\perp}_X$ to be the projection onto the subspace orthogonal to $X$. 

For the part $\sum_{ijk}{c_{iijk}(v_j \otimes w_k)(v_i \otimes w_i)^T}$, note that $\sum_{ijk}{c_{iijk}(v_j \otimes w_k)(v_i \otimes w_i)^T} = {P^{\perp}_X}R'{P_X}$ so it has norm at most $||R||$.

For the part $\sum_{ijk}{c_{iijk}(v_j \otimes w_i)(v_i \otimes w_k)^T}$, note that under a change of basis this is equivalent to a block-diagonal matrix with blocks $\sum_{jk}{c_{iijk}{v_j}w^T_k}$.
The norm of each such block is at most its Frobenius norm, which is the norm of $\sum_{jk}c_{iijk}(v_j \otimes w_k) = R'(v_i \otimes w_i)$.
Thus, this part also has norm at most $||R||$.
Using similar arguments, we can bound the norm of the other parts by $||R||$ as well, obtaining that $||Z|| \leq 8||R||$.\qed

\section{Full Trace Power Calculation}
\label{sec:full-trace-power}
In this section, we analyze $||\sum_{a}{A_a \otimes B^T_a}||$ where 
$A = (\bar{R}_{\Omega_l}P_l)\cdots(\bar{R}_{\Omega_1}P_1)(\bar{R}_{\Omega_0}X)$ or $A = P_{l+1}(\bar{R}_{\Omega_l}P_l)\cdots(\bar{R}_{\Omega_1}P_1)(\bar{R}_{\Omega_0}X)$ for some projection operators $P_1,\cdots,P_l,P_{l+1}$ and $B = (\bar{R}_{\Omega_{l'}}P_{l'})\cdots(\bar{R}_{\Omega_1}P'_1)(\bar{R}_{\Omega_0}X)$ or $B = P_{l'+1}(\bar{R}_{\Omega_{l'}}P_{l'})\cdots(\bar{R}_{\Omega_1}P'_1)(\bar{R}_{\Omega_0}X)$ for some projection operators $P'_1,\cdots,P'_{l'},P'_{l'+1}$. In particular, we prove the following theorem using the trace power method.
\begin{theorem}\label{normboundtheorem}
There is an absolute constant $C$ such that for any $\alpha > 1$ and $\beta > 0$, 
$$Pr\left[||\sum_{a}{A_a \otimes B^T_a}|| > \alpha^{-(l+l'+2)}\right] < n^{-\beta}$$
as long as 
\begin{enumerate}
\item $r\mu \leq \min{\{n_1,n_2,n_3\}}$
\item $m > C\alpha\beta\mu^{\frac{3}{2}}r\sqrt{n_1}\max{\{n_2,n_3\}}log(\max{\{n_1,n_2,n_3\}})$
\item $m > C\alpha\beta\mu^{2}r\max{\{n_1,n_2,n_3\}}log(\max{\{n_1,n_2,n_3\}})$
\end{enumerate}
\end{theorem}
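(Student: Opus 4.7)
The approach is to follow the same recipe used for the symmetric statement \cref{thm:squarenormboundtheorem} but executed on the asymmetric matrix $Y=\sum_a A_a\otimes B_a^T$. First apply \cref{powertonormboundcorollary} to reduce the claimed deviation inequality to an upper bound of the form $\E[\Tr((YY^T)^q)]\le (\poly(q)\cdot\alpha^{-(l+l'+2)})^{2q}\cdot n^{O(1)}$ valid for every integer $q$ with, say, $\tfrac12\ln n_{\max}\le q\le \tfrac12 n_{\max}$. Expanding $YY^T=\sum_{a,a'}(A_aA_{a'}^T)\otimes(B_a^TB_{a'})$ and then expanding each occurrence of $A$ and $B$ via their iterative definitions (with the further decomposition $P=P^{UV}+P^{UW}+P^{VW}-2P^{UVW}$ from \cref{sec:certificate-higher}) writes $\Tr((YY^T)^q)$ as a huge sum of products of entries of $X$, entries of the projection matrices $UV,UW,VW,UVW$, and random scalars $\bar R_{\Omega_i}(a,b,c)$.

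Next, partition this sum by the intersection pattern among all random triples, being careful that triples attached to different $\Omega_i$ cannot coincide. By \cref{patterncorollary} only patterns in which every triple has multiplicity at least two contribute, and each contributes a factor of $(n_1n_2n_3/m)^{\text{excess}}$. For each surviving pattern, represent the remaining product of tensor entries by a hypergraph and bound it with the preprocessing--plus--free-vertex strategy of \cref{normmethodssection}: make each hyperedge even by preprocessing, then alternately sum over free indices and apply an entrywise bound on a cycle edge, exactly as in the analyses of $Y_1,\dots,Y_4$ in \cref{squaretracepowercalculationsection} and of $P'\bar R_\Omega A\otimes(P'\bar R_\Omega A)^T$ in \cref{squaretermboundwithPsection}.

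The gain over the trivial bound comes from the iterative boundedness improvement: by \cref{iterativebound}, conditional on $\Omega_0,\dots,\Omega_{i-1}$, with probability $1-n^{-\omega(1)}$ the intermediate tensor $(\bar R_{\Omega_i}P)\cdots(\bar R_{\Omega_0}X)$ is $(B_0/C^i)$-bounded, where $B_0=r\mu^3/(n_1n_2n_3)$ is the base bound from \cref{actualtensorbounds}. This improvement is what converts the per-level $\bar R$-excess factor into an $\alpha^{-1}$ contribution for every $\bar R P$ layer on both the $A$-chain (of depth $l+1$) and the $B$-chain (of depth $l'+1$); combined over all layers of all $2q$ copies of $A$ and all $2q$ copies of $B$, this produces the $\alpha^{-(l+l'+2)}$ factor to the $2q$-th power inside the trace moment, as required.

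The main obstacle is the combinatorial bookkeeping once the two chains have different depths but must share the randomness $\Omega_0,\dots,\Omega_{\min(l,l')}$. Unlike the symmetric situation of \cref{squaretracepowercalculationsection}, where every hourglass in the trace had identical internal structure, here the $2q$ outer hourglasses alternate between an $A$-flavor hourglass of depth $l+1$ and a $B$-flavor hourglass of depth $l'+1$. I expect to handle this by processing the hypergraph in two passes, one per flavor: within each flavor the intersection-pattern analysis and index summations proceed exactly as in \cref{squaretracepowercalculationsection,squaretermboundwithPsection}, so the case distinctions in $z$ and $x$ carry over verbatim; across flavors the analysis decouples at every layer $i$ thanks to the independence of $\Omega_i$ from all other $\Omega_j$ at the same level in the opposite chain. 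Combining \cref{patterncounting}, \cref{projectionmatrixbounds}, and the per-layer iterative gain then yields the bound on $\E[\Tr((YY^T)^q)]$ claimed above, and \cref{powertonormboundcorollary} converts it to the stated probability bound under the quantitative hypotheses on $m$.
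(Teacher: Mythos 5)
Your proposal misidentifies where the gain actually comes from and misses the central difficulty that forces the paper to prove this theorem separately rather than by iteration.

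First, you claim the $\alpha^{-(l+l'+2)}$ factor is produced by appealing to \cref{iterativebound} ``conditional on $\Omega_0,\dots,\Omega_{i-1}$'' so that the intermediate tensors along each chain are $(B_0/C^i)$-bounded. That is the strategy used to prove \cref{thm:squarenormboundtheorem}, not \cref{thm:normboundtheorem}. The remark immediately after \cref{thm:normboundtheorem} explains exactly why this strategy does not transfer: the inequality $\bignorm{\sum_a A_a\otimes B_a^T}\le\sqrt{\norm{\sum_a A_a\otimes A_a^T}}\sqrt{\norm{\sum_a B_a\otimes B_a^T}}$ fails in general, so boundedness of each chain in the Frobenius/incoherence sense does not let you conclude anything about the mixed quantity after conditioning. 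The actual proof (\cref{sec:full-trace-power}) never invokes \cref{iterativebound}; it bounds $\E[\Tr((YY^T)^q)]$ by a single global combinatorial computation in which the per-layer gain comes from the factors $\bigl(\tfrac{r\mu^2 n_{\max}}{m}\bigr)$, $\bigl(\tfrac{r\mu^{3/2}\sqrt{n_1}\max\{n_2,n_3\}}{m}\bigr)$, and $\bigl(\tfrac{r\mu^3}{m}\bigr)$ that appear when accounting for each level's $R_\Omega$ excess together with the projection-hyperedge base values.

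Second, and more seriously, you assert that ``across flavors the analysis decouples at every layer $i$ thanks to the independence of $\Omega_i$ from all other $\Omega_j$ at the same level in the opposite chain.'' This is false: the two chains $A$ and $B$ use the \emph{same} $\Omega_0,\dots,\Omega_{\min(l,l')}$. An $R_i$-triangle from the $A$-chain can coincide with an $R'_i$-triangle from the $B$-chain, which is precisely why the paper defines $z_i,z'_i$ with the fractional counting rule, introduces the $\Delta$-coefficients to quantify the discrepancy between potential and actual distinct indices at each level, builds the auxiliary multi-graph $G$ with the weight functions $w_{\mathrm{edge}}$ and $w_{\mathrm{triangle}}$, and proves the ordering lemma (via anchor vertices, release times, and the treatment of bad vertices) to control the number of entrywise-bound uses by $\Delta+\sum(2q-z_i)+\sum(2q-z'_i)$. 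Finally the random-partitioning argument of \cref{randompartitionsection} is what licenses the ``equal-for-all-$j$ or distinct-for-all-$j$'' simplification, without which none of the index-counting bounds are valid. Your claim that the ``case distinctions in $z$ and $x$ carry over verbatim'' from \cref{squaretracepowercalculationsection,squaretermboundwithPsection} skips precisely this machinery, which is the substance of the proof.
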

\begin{remark}
In this draft, we only sketch the case where we do not have projection operators in front. To handle the cases where there are projection operators in front, we can use the same ideas that are sketched out in Section \ref{squaretermboundwithPsection},
\end{remark}
\subsection{Term Structure}
When we expand out the sums in $tr\left(\left((\sum_{a}{A_a \otimes B^T_a})(\sum_{a}{A_a \otimes B^T_a})^T\right)^{q}\right)$, our terms will have the following structure. We label the indices so that each $\bar{R}_{\Omega_j}$ operator has its own indices $(a_{ij},b_{ij},c_{ij})$ or $(a'_{ij},b'_{ij},c'_{ij})$. Many of these indices will be equal.
\begin{enumerate}
\item For all $i \in [0,l]$ and all $j \in [1,2q]$ we have indices $(a_{ij},b_{ij},c_{ij})$ and a corresponding term $\bar{R}_{\Omega_i}(a_{ij},b_{ij},c_{ij})$ in the product.
\item For all $i \in [0,l']$ and all $j \in [1,2q]$ we have indices $(a'_{ij},b'_{ij},c'_{ij})$ and a corresponding term $\bar{R}_{\Omega_i}(a'_{ij},b'_{ij},c'_{ij})$ in the product.
\item For all $j \in [1,2q]$ we have a term $X_{{a_0}{b_0}{c_0}}$ and a term $X_{{a'_0}{b'_0}{c'_0}}$ in the product.
\item For all $i \in [0,l]$ and all $j \in [1,2q]$ we have a term $P_i(a_{ij},b_{ij},c_{ij},a_{(i-1)j},b_{(i-1)j},c_{(i-1)j})$ in the product.
\item For all $i \in [0,l']$ and all $j \in [1,2q]$ we have a term $P'_i(a'_{ij},b'_{ij},c'_{ij},a'_{(i-1)j},b'_{(i-1)j},c'_{(i-1)j})$ in the product.
\end{enumerate}
We represent the terms in the product graphically as follows.
\begin{definition} \ 
\begin{enumerate}
\item For all $i$, we represent the terms $\bar{R}_{\Omega_i}(a_{ij},b_{ij},c_{ij})$ and $\bar{R}_{\Omega_i}(a'_{ij},b'_{ij},c'_{ij})$ by triangles. We call these triangles $R_{i}$-triangles and $R'_{i}$-triangles respectively.
\item For all $i$ and $j$, 
\begin{enumerate}
\item If $P_i = P_{UV}$ then we represent $P_i(a_{ij},b_{ij},c_{ij},a_{(i-1)j},b_{(i-1)j},b_{(i-1)j})$ by a hyperedge $(a_{ij},b_{ij},a_{(i-1)j},b_{(i-1)j})$. We call this hyperedge a $UV$-hyperedge.
\item If 
$P_i = P_{UW}$ then we represent $P_i(a_{ij},b_{ij},c_{ij},a_{(i-1)j},b_{(i-1)j},b_{(i-1)j})$ by a hyperedge $(a_{ij},c_{ij},a_{(i-1)j},c_{(i-1)j})$. We call this hyperedge a $UW$-hyperedge.
\item If 
$P_i = P_{VW}$ then we represent $P_i(a_{ij},b_{ij},c_{ij},a_{(i-1)j},b_{(i-1)j},b_{(i-1)j})$ by a hyperedge $(b_{ij},c_{ij},b_{(i-1)j},c_{(i-1)j})$. We call this hyperedge a $VW$-hyperedge.
\item If $P_i = P_{UVW}$ then we represent $P_i(a_{ij},b_{ij},c_{ij},a_{(i-1)j},b_{(i-1)j},b_{(i-1)j})$ by a hyperedge $(a_{ij},b_{ij},c_{ij},a_{(i-1)j},b_{(i-1)j},c_{(i-1)j})$. We call this hyperedge a $UVW$-hyperedge.
\end{enumerate}
We represent the $P'_i$ terms by hyperedges in a similar manner.
\item For all $j \in [1,2q]$, we represent the term $X_{{a_{0j}}{b_{0j}}{c_{0j}}}$ with a hyperedge $({a_{0j}},{b_{0j}},{c_{0j}})$ and we represent the term $X_{{a'_{0j}}{b'_{0j}}{c'_{0j}}}$ with a hyperedge $({a'_{0j}},{b'_{0j}},{c'_{0j}})$. We call these hyperedges $X$-hyperedges.
\end{enumerate}
\end{definition}
We have the following equalities among the indices:
\begin{enumerate}
\item For all $j \in [1,2q]$, $a_{lj} = a'_{l'j}$
\item For all $j \in [1,2q]$, if $j$ is even then $b_{lj} = b_{l(j+1)}$ and $c'_{l'j} = c'_{l'(j+1)}$
\item For all $j \in [1,2q]$, if $j$ is odd then $c_{lj} = c_{l(j+1)}$ and $b'_{l'j} = b'_{l'(j+1)}$
\item For all $i \in [1,l]$ and all $j \in [1,2q]$, if $P_i = P_{UV}$ then $c_{ij} = c_{(i-1)j}$, if $P_i = P_{UW}$ then $b_{ij} = b_{(i-1)j}$, and if $P_i = P_{VW}$ then $a_{ij} = a_{(i-1)j}$
\item For all $i \in [1,l]$ and all $j \in [1,2q]$, if $P'_i = P_{UV}$ then $c'_{ij} = c'_{(i-1)j}$, if $P'_i = P_{UW}$ then $b'_{ij} = b'_{(i-1)j}$, and if $P'_i = P_{VW}$ then $a'_{ij} = a'_{(i-1)j}$
\end{enumerate}
\subsection{Techniques}
In this section, we describe how to bound the expected value of 
$$tr\left(\left((\sum_{a}{A_a \otimes B_a})(\sum_{a}{A_a \otimes B_a})^T\right)^{q}\right)$$
We first consider the $\bar{R}_{\Omega_i}$ terms, which for a given choice of the indices are as follows:
$$\left(\prod_{i=0}^{l}{\prod_{j=1}^{2q}{\bar{R}_{\Omega_i}(a_{ij},b_{ij},c_{ij})}}\right)
\left(\prod_{i=0}^{l'}{\prod_{j=1}^{2q}{\bar{R}_{\Omega_i}(a'_{ij},b'_{ij},c'_{ij})}}\right)$$
For a given choice of the indices, the expected value of this part can be bounded as follows
\begin{definition}
For all $i$, let $z_i$ be the number of distinct $R_{i}$-triangles and let $z'_i$ be the number of distinct $R'_{i}$-triangles. If a triangle appears as both an $R_{i}$-triangle and as an $R'_{i}$-triangle then it contributes $\frac{1}{2}$ to both $z_i$ and $z'_i$ (so the total number of distinct triangles at level $i$ is $z_i + z'_i$)
\end{definition}
\begin{lemma}
For a given choice of the indices $\{a_{ij},b_{ij},c_{ij}\}$ and $\{a'_{ij},b'_{ij},c'_{ij}\}$
\begin{enumerate}
\item If any triangle appears exactly once at some level $i$ then 
$$E\left[\left(\prod_{i=0}^{l}{\prod_{j=1}^{2q}{\bar{R}_{\Omega_i}(a_{ij},b_{ij},c_{ij})}}\right)
\left(\prod_{i=0}^{l'}{\prod_{j=1}^{2q}{\bar{R}_{\Omega_i}(a'_{ij},b'_{ij},c'_{ij})}}\right)\right]=0$$
\item If for all $i$, all of the triangles which appear at level $i$ appear at least twice then
\begin{align*}
0 &< E\left[\left(\prod_{i=0}^{l}{\prod_{j=1}^{2q}{\bar{R}_{\Omega_i}(a_{ij},b_{ij},c_{ij})}}\right)
\left(\prod_{i=0}^{l'}{\prod_{j=1}^{2q}{\bar{R}_{\Omega_i}(a'_{ij},b'_{ij},c'_{ij})}}\right)\right] \\
&\leq \left(\frac{{n_1}{n_2}{n_3}}{m}\right)^{\sum_{i=0}^{l}{(2q-z_i)}+\sum_{i=0}^{l'}{(2q-z'_i)}}
\end{align*}
\end{enumerate}
\end{lemma}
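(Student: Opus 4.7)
The proof rests on two independence observations. First, the random sets $\Omega_0, \Omega_1, \ldots, \Omega_{\max(l,l')}$ are sampled independently, so the expectation factors as a product over levels $i$ of expectations that involve only $\bar{R}_{\Omega_i}$. Second, since $\Omega_i$ is constructed by including each $(a,b,c)\in[n_1]\times[n_2]\times[n_3]$ via an independent Bernoulli indicator, the scalars $\bar{R}_{\Omega_i}(t)$ for distinct triples $t$ are themselves mutually independent. Grouping the $\bar{R}_{\Omega_i}$ factors at a fixed level $i$ by the distinct triples that occur there (which includes both the $R_i$- and $R'_i$-triangles, since both use the same $\Omega_i$), the level-$i$ expectation further factors as a product over those distinct triples.

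At level $i$ the total number of distinct triples is $z_i + z'_i$ by the counting convention in the statement, and since there are $2q$ contributions to each of the unprimed and primed level-$i$ products, the total multiplicity summed over these distinct triples equals $4q$. Writing these multiplicities as $c_{i,1}, \ldots, c_{i,z_i+z'_i}$, the level-$i$ expectation equals
\[
\prod_{k} E\bigl[\bar{R}_{\Omega_i}(t_{i,k})^{c_{i,k}}\bigr],
\]
and the expectation of the full product is the product of these over all levels $i$ in use.

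For statement (1), if some $c_{i,k} = 1$ then the corresponding factor has expectation $0$ by the first bullet of the proposition preceding \cref{patterncorollary}, so the whole product vanishes. For statement (2), when every $c_{i,k} \ge 2$, the second bullet of that proposition gives $E[\bar{R}_{\Omega_i}(t_{i,k})^{c_{i,k}}] \le (n_1 n_2 n_3/m)^{c_{i,k}-1}$. Multiplying within level $i$ produces the exponent $\sum_k c_{i,k} - (z_i+z'_i) = 4q - (z_i+z'_i) = (2q-z_i)+(2q-z'_i)$, and multiplying across levels yields the bound $(n_1 n_2 n_3/m)^{\sum_i(2q-z_i)+\sum_i(2q-z'_i)}$ exactly as claimed. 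Strict positivity holds because every even moment of $\bar{R}_\Omega$ is strictly positive and the sign of any product of such moments is determined.

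The only place requiring care is the bookkeeping associated with the convention that a triangle appearing simultaneously as an $R_i$- and as an $R'_i$-triangle contributes $\tfrac12$ to each of $z_i$ and $z'_i$; this convention exists precisely so that $z_i+z'_i$ equals the number of distinct level-$i$ triples and so that $(2q-z_i)+(2q-z'_i)$ equals $4q$ minus that count, matching the exponent produced by the moment bound. Beyond this accounting, the argument is a routine moment computation enabled by the independence of $\Omega_0, \Omega_1, \ldots$ and the independence of cell-inclusion indicators within each $\Omega_i$, so I do not anticipate any further obstacle.
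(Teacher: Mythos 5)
Your argument matches the paper's proof in structure and substance: independence of the $\Omega_i$ across levels, together with independence of the cell-inclusion indicators within a level, lets the expectation factor into a product of moments $E\big[\bar R_{\Omega_i}(t)^c\big]$ over distinct level-$i$ triples, and the per-level exponent in $n_1 n_2 n_3 / m$ is the total multiplicity minus the number of distinct triples, i.e.\ $4q - (z_i + z'_i) = (2q - z_i) + (2q - z'_i)$ when both the primed and unprimed chains reach level $i$ (with the obvious adjustment when only one does). Your handling of the $\tfrac{1}{2}$-counting convention is also exactly what the paper intends.

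The one place I would tighten is the justification of the strict lower bound. The multiplicities $c_{i,k}$ are $\ge 2$ but need not be even, so ``every even moment of $\bar R_\Omega$ is strictly positive'' does not cover all cases, and ``the sign of any product of such moments is determined'' is weaker than what you need — a determined sign could be negative. The missing observation is that for $p = m/(n_1 n_2 n_3) < \tfrac{1}{2}$ one has
$E\big[\bar R_\Omega(a,b,c)^c\big] = (1-p)\Big( \big(\tfrac{1-p}{p}\big)^{c-1} + (-1)^c \Big) > 0$
for every integer $c \ge 2$, including odd $c$; the paper simply asserts this positivity in passing. This is a small gap, and the upper bound — which is what is actually used downstream — is argued correctly.
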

\begin{proof}
If there is any triangle $(a,b,c)$ which appears exactly once in level $i$ then $\bar{R}_{\Omega_i}(a_i,b_i,c_i)$ has expectation $0$ and is independent of every other term in the product so the entire product has value $0$. Otherwise, note that for $k > 1$, $0 < E\left[(R_{\Omega_i}(a,b,c))^k\right] \leq \left(\frac{{n_1}{n_2}{n_3}}{m}\right)^{k-1}$. Further note that $R_{\Omega_i}(a,b,c)$ terms with either different $i$ or different $a,b,c$ are independent of each other. Thus, using this bound, each copy of a triangle beyond the first gives us a factor of $\left(\frac{{n_1}{n_2}{n_3}}{m}\right)$. The total number of factors which we obtain is the total number of triangles minus the number of distinct triangles (where triangles at different levels are automatically distinct) and the result follows.
\end{proof}
We now note that this bound holds for all sets of indices that follow the same intersection pattern of which $R_{i}$-triangles and $R'_{i}$-triangles are equal to each other. Thus, we can group all terms which have the same intersection pattern together, using this bound on all of them. 

Each such intersection pattern forces additional equalities between the indices. After taking these equalities into account, we must sum over the remaining distinct indices. We now analyze what happens with the remaining terms of the product as we sum over these indices. We begin by considering how well we can bound the sum of entries of $X$ squared if we sum over $0$,$1$,$2$, or all $3$ indices.
\begin{lemma} \ 
\begin{enumerate}
\item $\max_{abc}{\{X^2_{abc}\}} \leq \frac{r^2\mu^3}{{n_1}{n_2}{n_3}}$
\item $\max_{bc}{\{\sum_{a}{X^2_{abc}}\}} \leq \frac{r\mu^2}{{n_2}{n_3}}$
\item $\max_{c}{\{\sum_{a,b}{X^2_{abc}}\}} \leq \frac{r\mu}{{n_3}}$
\item $\sum_{a,b,c}{X^2_{abc}} = r$
\end{enumerate}
\end{lemma}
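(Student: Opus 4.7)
The plan is to prove each of the four bounds by direct computation, exploiting orthonormality of $\{u_i\}$, $\{v_i\}$, $\{w_i\}$ together with the $\mu$-incoherence estimates $u_{ia}^2 \le \mu/n_1$, $v_{ib}^2 \le \mu/n_2$, $w_{ic}^2 \le \mu/n_3$. The common starting point is to expand the square
\[ X_{abc}^2 = \sum_{i=1}^{r}\sum_{i'=1}^{r} u_{ia} v_{ib} w_{ic} u_{i'a} v_{i'b} w_{i'c}, \]
and the four bounds then correspond to summing this expression over zero, one, two, or three of the coordinate indices and moving the sums inside, using orthonormality to collapse the double sum whenever possible.

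For the entrywise bound (part 1), I would apply the incoherence estimates term by term to the above double sum: each of the $r^2$ summands is at most $\mu^3/(n_1 n_2 n_3)$ in absolute value, giving the stated $r^2\mu^3/(n_1 n_2 n_3)$. For part 2, the key observation is that summing over $a$ contracts the $u$ factor via $\sum_a u_{ia} u_{i'a} = \delta_{i i'}$, so the double sum reduces to $\sum_i v_{ib}^2 w_{ic}^2$, and applying the incoherence bounds on $v$ and $w$ yields $r\mu^2/(n_2 n_3)$. Part 3 proceeds analogously: summing additionally over $b$ uses $\sum_b v_{ib}^2 = 1$ to collapse the expression to $\sum_i w_{ic}^2$, and incoherence of $w$ bounds this by $r\mu/n_3$. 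Finally, summing over $c$ in part 4 invokes $\sum_c w_{ic}^2 = 1$ to give exactly $\sum_i 1 = r$, with no incoherence estimate needed.

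The computation is essentially identical to the proof of \cref{actualtensorbounds}, which shows that $X$ is $(r\mu^3/(n_1 n_2 n_3))$-bounded; the bounds there are written as averages normalized by $n_1$, $n_2$, and $n_3$, whereas here they appear as raw sums, differing only by the corresponding factors. There is no real obstacle; this is a routine orthonormality-plus-incoherence calculation, and the main purpose of stating it as a separate lemma is to record the explicit constants that will be invoked when the trace-power argument bounds products in which entries of $X$ are summed against $\bar R_\Omega$ terms over various subsets of the coordinate indices.
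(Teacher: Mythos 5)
Your proof is correct and follows essentially the same route as the paper: expand $X_{abc}^2$ as a double sum over $i,i'$, collapse via orthonormality $\sum_a u_{ia}u_{i'a}=\delta_{ii'}$ (and $\sum_b v_{ib}^2=1$, $\sum_c w_{ic}^2=1$) for each additional index summed, and apply the incoherence bound to the factors that remain. As you note, this is the same calculation as Proposition \ref{actualtensorbounds}, just with the sums kept unnormalized.
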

\begin{proof}
For the first statement, 
$$X^2_{abc} = \sum_{i,j}{u_{ia}v_{ib}w_{ic}u_{ja}v_{jb}w_{jc}} \leq 
r^2\max_{i}\{u^2_{ia}v^2_{ib}w^2_{ic}\} \leq \frac{r^2\mu^3}{{n_1}{n_2}{n_3}}$$
For the second statement,
$$\sum_{a}{X^2_{abc}} = 
\sum_{i,j}{\left(\sum_{a}{u_{ia}u_{ja}}\right)v_{ib}w_{ic}v_{jb}w_{jc}} = 
\sum_{i,a}{u^2_{ia}v^2_{ib}w^2_{ic}} \leq \frac{\mu^2}{{n_2}{n_3}}\sum_{a,i}{u^2_{ia}} = \frac{r\mu^2}{{n_2}{n_3}}$$
For the third statement,
$$\sum_{a,b}{X^2_{abc}} = 
\sum_{i,j,b}{\left(\sum_{a}{u_{ia}u_{ja}}\right)v_{ib}w_{ic}v_{jb}w_{jc}} = 
\sum_{i,a,b}{u^2_{ia}v^2_{ib}w^2_{ic}} \leq \frac{\mu}{{n_3}}\sum_i{\left(\sum_{a}{u^2_{ia}}\sum_{b}{v^2_{ib}}\right)} = \frac{r\mu}{{n_3}}$$
The final statement can be proved in a similar way.
\end{proof}
Note that every index we sum over reduces the average value by $\mu$. Further note that if we do not sum over any indices, there is an extra factor of $r$ in our bound. Following similar logic, similar statements hold for the $P_i$ and $P'_i$ terms. 

We utilize this as follows. We start with a hypergraph $H$ which represents the current terms in our product. We first preprocess our product using the inequality $|ab| \leq \frac{x}{2}a^2 + \frac{b^2}{2x}$ (carefully choosing each $a$, $b$, and $x$) to make all of our hyperedges have even multiplicity. Note that when doing this, we cannot fully control which doubled hyperedges we will have; if we apply this on hyperedges $e_1$ and $e_2$ we could end up with two copies of $e_1$ or two copies of $e_2$.
 
Now if we have a hyperedge with multiplicity 4 or more, we use the entrywise bound to reduce its multiplicity by $2$. For example, if our sum was $\sum_{a}{X^4_{abc}}$ then we would use the inequality 
$$\sum_{a}{X^4_{abc}} \leq \max_{abc}{\{X^2_{abc}\}}\sum_{a}{X^2_{abc}}$$ 
to bound this sum. 

Once every hyperedge appears with power 2, we choose an ordering for how we will bound the hyperedges. For each hyperedge, we sum over all indices which are currently only incident with that hyperedge, take the appropriate bound, and then delete the hyperedge and these indices from our current hypergraph $H$. We account for all of this with the following definitions:
\begin{definition} \ 
\begin{enumerate}
\item Define the base value of an $X$-hyperedge $e$ to be $v(e) = \sqrt{\frac{r\mu^3}{{n_1}{n_2}{n_3}}}$
\item Define the base value of a $UV$-hyperedge $e$ to be $v(e) = \sqrt{\frac{r\mu^4}{{n^2_1}{n^2_2}}}$
\item Define the base value of a $UW$-hyperedge $e$ to be $v(e) = \sqrt{\frac{r\mu^4}{{n^2_1}{n^2_3}}}$
\item Define the base value of a $VW$-hyperedge $e$ to be $v(e) = \sqrt{\frac{r\mu^4}{{n^2_2}{n^2_3}}}$
\item Define the base value of an $UVW$-hyperedge $e$ to be $v(e) = \sqrt{\frac{r\mu^6}{{n^2_1}{n^2_2}{n^2_3}}}$
\end{enumerate}
\end{definition}
\begin{definition}
We say that a index in our hypergraph $H$ is free if it is incident with at most one hyperedge.
\end{definition}
For a given intersection pattern, assuming that every vertex is incident with at least one hyperedge after the preprocessing, our final bound will be
\begin{align*}&\left(\frac{{n_1}{n_2}{n_3}}{m}\right)^{\sum_{i=0}^{l}{(2q-z_i)}+\sum_{i=0}^{l'}{(2q-z'_i)}}
\left(\prod_{e}{v(e)}\right)\left(\frac{n_1}{\mu}\right)^{\text{\# of a indices}} \\
&\left(\frac{n_2}{\mu}\right)^{\text{\# of b indices}}
\left(\frac{n_3}{\mu}\right)^{\text{\# of c indices}}r^{\text{\# of doubled hyperedges we bound with no free index}}
\end{align*}
To see this, note that from the discussion above, when an index $a$,$b$, or $c$ is free and we sum over it, we obtain $n_1$, $n_2$, or $n_3$ terms respectively but this also reduces the current bound we are using by a factor of $\mu$. This will happen precisely one time for every index which is incident to at least one edge. Thus, for this part the ordering doesn't really matter. However, there is an extra factor of $r$ whenever we bound a doubled hyperedge with no free index (including when this hyperedge has multiplicity 4 or higher and we reduce its multiplicity by 2). We want to avoid this extra factor of $r$ as much as possible. We describe how to do this in subsection \ref{choosingordering}.
\begin{remark}
When summing over an index, we may not acutally sum over all possiblities because this could create equalities between triangles which should not be equal according to the intersection pattern. However, adding in these missing terms can only increase the sum, so it is still an upper bound.
\end{remark} 
\subsection{Bounding the number of indices}
In this subsection, we describe bounds on the number of each type of index for a given intersection pattern. We then define a coefficient $\Delta$ which is the discrepency between the our bounds and the actual number of indices and reexpress our boun in terms of $\Delta$.
 
We make the following simplifying assumption about our sums.
\begin{enumerate}
\item For all $i \in [0,l']$, we either have that $a'_{ij} = a_{ij}$ for all $j \in [1,2q]$ or $a'_{ij} \neq a_{ij}$ for all $j \in [1,2q]$.
\item For all $i \in [0,l']$, we either have that $b'_{ij} = b_{ij}$ for all $j \in [1,2q]$ or $b'_{ij} \neq b_{ij}$ for all $j \in [1,2q]$.
\item For all $i \in [0,l']$, we either have that $c'_{ij} = c_{ij}$ for all $j \in [1,2q]$ or $c'_{ij} \neq c_{ij}$ for all $j \in [1,2q]$.
\end{enumerate}
Moreover, all of these choices are fixed beforehand. We justify this assumption with a random partitioning argument in subsection \ref{randompartitionsection}.

With this setup, we first bound the number of each type of index which appears.
\begin{definition} 
For all $i$,
\begin{enumerate}
\item We define $x_{ia}$ to be the number of distinct indices $a_{ij}$ which do not appear at a higher level, we define $x_{ib}$ to be the number of distinct indices $b_{ij}$ which do not appear at a higher level, and we define $x_{ic}$ to be the number of distinct indices $c_{ij}$ which do not appear at a higher level.
\item We define $x'_{ia}$ to be the number of distinct indices $a'_{ij}$ which do not appear at a higher level, we define $x'_{ib}$ to be the number of distinct indices $b'_{ij}$ which do not appear at a higher level, and we define $x'_{ic}$ to be the number of distinct indices $c'_{ij}$ which do not appear at a higher level.
\end{enumerate}
In the case where we have an equality $a'_{ij} = a_{ij}$ and this index does not appear at a higher level, we instead count it as $\frac{1}{2}$ for $x_{ia}$ and $\frac{1}{2}$ for $x'_{ia}$ (and similarly for $b$ and $c$).
\end{definition}
Recall that we defined $z_{i}$ to be the number of distinct $R_i$-triangles and we defined $z'_{i}$ to be the nmber of distinct $R'_{i}$-triangles. $z_{i}$ and $z'_i$ give the following bounds on the coefficients
\begin{proposition} \ 
\begin{enumerate}
\item For all $i < l$, $x_{ia} \leq z_i$, $x_{ib} \leq z_i$, and $x_{ic} \leq z_i$.
\item For all $i < l'$, $x'_{ia} \leq z'_i$, $x'_{ib} \leq z'_i$, and $x'_{ic} \leq z'_i$.
\item If $l' \neq l$, $b'_{lj} \neq b_{lj}$, or $c'_{lj} \neq c_{lj}$, 
\begin{enumerate}
\item $x_{la} + x'_{l'a} \leq \min{\{z_l,z'_{l'}\}}$
\item $x_{lb} + x_{lc} \leq z_l + 1$
\item $x'_{l'b} + x'_{l'c} \leq z'_{l'} + 1$
\end{enumerate}
\item In the special case that $l' = l$, $b'_{lj} = b_{lj}$, and $c'_{lj} = c_{lj}$,
\begin{enumerate}
\item $x_{la} + x'_{l'a} = z_l + z'_{l'}$
\item $x_{lb} + x_{lc} = x'_{l'b} + x'_{l'c} = 1$
\end{enumerate}
\end{enumerate}
\end{proposition}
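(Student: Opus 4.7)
The plan is to prove each of the four statements by a direct counting argument, using the structure of the triangle identifications at the relevant level.

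For parts (1) and (2), the bounds are nearly immediate. At any level $i$, each distinct $a$-index that does not appear at a higher level is the $a$-coordinate of at least one $R_i$-triangle, so the number of such indices is at most the number of distinct $R_i$-triangles, namely $z_i$. The bounds for $b$, $c$, and the primed variants follow by the same reasoning.

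For part (3), the counts at the top must take into account the trace-closure identifications $a_{lj}=a'_{l'j}$ together with the pairing constraints $b_{lj}=b_{l(j+1)}$ for even $j$ and $c_{lj}=c_{l(j+1)}$ for odd $j$ on the A-side (with the symmetric constraints on the B-side). For the $a$-bound, every top $a$-index is, by the first identification, shared between an A-top triangle and a B-top triangle, so by the $\tfrac12+\tfrac12$ convention $x_{la}+x'_{l'a}$ equals the number of distinct top $a$-indices. Since each such index is a coordinate of at least one distinct $R_l$-triangle and at least one distinct $R'_{l'}$-triangle, this count is at most $\min\{z_l,z'_{l'}\}$. For the combined $b,c$-bound, I would form the graph whose vertices are the distinct top $b$- and $c$-indices and whose edges are the distinct A-top triangles $(a_{lj},b_{lj},c_{lj})$, connecting the $b$- and $c$-coordinates of each triangle; the pairing constraints show that consecutive triangles in $j$ share a $b$- or $c$-vertex, so the graph is the union of walks coming from the cyclic trace structure. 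The non-special hypothesis of (3) prevents the B-side constraints from merging everything into a single vertex on each side, and a standard Euler-type inequality (vertices $\le$ edges $+$ components) combined with the fact that the walk is connected then gives $x_{lb}+x_{lc}\le z_l+1$; the analogous bound for $x'_{l'b}+x'_{l'c}$ follows symmetrically.

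For part (4), the hypothesis $l'=l$, $b'_{lj}=b_{lj}$, $c'_{lj}=c_{lj}$ lets me combine the A-side pairings ($b_{lj}=b_{l(j+1)}$ for even $j$) with the B-side pairings pushed through the identification ($b'_{lj}=b'_{l(j+1)}$ for odd $j$, hence $b_{lj}=b_{l(j+1)}$ for odd $j$) to conclude $b_{lj}=b_{l(j+1)}$ for all $j$, so all top $b$-indices coincide, and analogously all top $c$-indices coincide. This single shared $b$-index, split by the $\tfrac12+\tfrac12$ convention, yields $x_{lb}+x_{lc}=1$ and $x'_{l'b}+x'_{l'c}=1$. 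Since $b$ and $c$ are constant at the top, distinct top triangles are in bijection with distinct top $a$-indices, so combining $a_{lj}=a'_{l'j}$ with the sharing convention gives that both $z_l+z'_{l'}$ and $x_{la}+x'_{l'a}$ equal the number of distinct top $a$-indices, establishing the equality.

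The hard part will be the careful bookkeeping demanded by the fractional counting convention for indices and triangles shared between the primed and unprimed groups, and in particular the chain-graph argument of (3), which must be checked in each of the three disjunctive sub-cases of the hypothesis ($l'\ne l$, or $b'_{lj}\ne b_{lj}$, or $c'_{lj}\ne c_{lj}$) to confirm that the graph on top $b,c$-vertices has at most one connected component once the relevant pairing constraints on both sides are imposed. Once this combinatorial analysis of the top-level identifications is handled, the rest is routine counting.
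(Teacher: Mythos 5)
Your proof is correct and follows essentially the same route as the paper: parts (1), (2), and (3a) from the observation that distinct top-level indices live in distinct triangles (with the $\tfrac12+\tfrac12$ convention and $a_{lj}=a'_{l'j}$ giving the $\min$ in (3a)); part (3b)/(3c) from the Euler-type bound on a connected graph whose edges are the distinct top triangles on the $b,c$-vertices; and part (4) from noticing that the pairing constraints on both sides together collapse all top $b$- and $c$-indices, putting distinct triangles in bijection with distinct $a$-indices. One minor wording issue: for (3b)/(3c) the non-special hypothesis is not actually needed — the bound $x_{lb}+x_{lc}\le z_l+1$ is just the connected-graph bound (vertices $\le$ edges $+1$) and holds regardless of whether B-side identifications further collapse vertices; the hypothesis only matters for distinguishing the strictly tighter equalities of case (4). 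Everything else lines up with the paper's argument.
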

\begin{proof}
The first two statements and 3(a) follow from the observation that distinct vertices must be in distinct triangles. For 3(b), note that if we take the $b,c$ edges from each $R_i$-triangle, the resulting graph is connected. Thus, each distinct such edge (which must come from a distinct triangle) after the first edge can only add one new vertex and the result follows. 3(c) can be proved analogously.

For the fourth statement, note that in this case all of the $b_{lj}$ and $b'_{lj}$ indices are equal to a single index $b$ and all of the $c_{lj}$ and $c'_{lj}$ indices are equal to a single index $c$. Thus, the number of distinct $a_{lj}$ is equal to the number of distinct $R_{l}$ and $R'_{l}$ triangles.
\end{proof}
With these bounds in mind, we define $x^{max}$ coefficients which represent the maximum number of distinct indices we can expect (given the structure of $A$ and $B$ and the values $z_i,z'_i$) and $\Delta$ coefficients which describe the discrepency between this maximum and the number of distinct indices which we actually have.
\begin{definition} \ 
\begin{enumerate}
\item For all $i < l$, 
\begin{enumerate}
\item If $P_{i+1} = P_{UV}$ then we define $x^{max}_{ia} = x^{max}_{ib} = z_i$. We define $\Delta_{ia} = x^{max}_{ia} - x_{ia}$, $\Delta_{ib} = x^{max}_{ib} - x_{ib}$, and $\Delta_{ic} = 0$.
\item If $P_{i+1} = P_{UW}$ then we define $x^{max}_{ia} = x^{max}_{ic} = z_i$. We define $\Delta_{ia} = x^{max}_{ia} - x_{ia}$, $\Delta_{ic} = x^{max}_{ic} - x_{ic}$, and $\Delta_{ib} = 0$.
\item If $P_{i+1} = P_{VW}$ then we define $x^{max}_{ia} = x^{max}_{ib} = z_i$. We define $\Delta_{ib} = x^{max}_{ib} - x_{ib}$, $\Delta_{ic} = x^{max}_{ic} - x_{ic}$, and $\Delta_{ia} = 0$.
\item If $P_{i+1} = P_{UVW}$ then we define $x^{max}_{ia} = x^{max}_{ib} = x^{max}_{ic} = z_i$. We define $\Delta_{ia} = x^{max}_{ia} - x_{ia}$, $\Delta_{ib} = x^{max}_{ib} - x_{ib}$, and $\Delta_{ic} = x^{max}_{ic} - x_{ic}$.
\end{enumerate}
\item For all $i < l'$, 
\begin{enumerate}
\item If $P'_{i+1} = P_{UV}$ then we define ${x'}^{max}_{ia} = {x'}^{max}_{ib} = z'_i$. We define $\Delta'_{ia} = {x'}^{max}_{ia} - {x'}_{ia}$, $\Delta'_{ib} = {x'}^{max}_{ib} - {x'}_{ib}$, and $\Delta'_{ic} = 0$.
\item If $P'_{i+1} = P_{UW}$ then we define ${x'}^{max}_{ia} = {x'}^{max}_{ic} = z'_i$. We define $\Delta'_{ia} = {x'}^{max}_{ia} - {x'}_{ia}$, $\Delta'_{ic} = {x'}^{max}_{ic} - {x'}_{ic}$, and $\Delta'_{ib} = 0$.
\item If $P'_{i+1} = P_{VW}$ then we define ${x'}^{max}_{ia} = {x'}^{max}_{ib} = z'_i$. We define $\Delta'_{ib} = {x'}^{max}_{ib} - {x'}_{ib}$, $\Delta'_{ic} = {x'}^{max}_{ic} - {x'}_{ic}$, and $\Delta'_{ia} = 0$.
\item If $P'_{i+1} = P_{UVW}$ then we define ${x'}^{max}_{ia} = {x'}^{max}_{ib} = {x'}^{max}_{ic} = z'_i$. We define $\Delta_{ia} = {x'}^{max}_{ia} - {x'}_{ia}$, $\Delta'_{ib} = {x'}^{max}_{ib} - {x'}_{ib}$, and $\Delta'_{ic} = {x'}^{max}_{ic} - {x'}_{ic}$.
\end{enumerate}
\item If $l' \neq l$, $b'_{l'j} \neq b_{lj}$, or $c'_{l'j} \neq c_{lj}$ then we define $x^{max}_{ll'a} = \min{\{z_l,z'_{l'}\}}$, we define $x^{max}_{lbc} = z_l+1$, and we define ${x'}^{max}_{lbc} = z'_{l'} + 1$. 
In the special case where $l' = l$, $b'_{l'j} = b_{lj}$, and $c'_{l'j} = c_{lj}$, we define 
$x^{max}_{ll'a} = z_l + z'_l$ and $x^{max}_{lbc} = {x'}^{max}_{l'bc} = 1$.
In both of these cases, we define $\Delta_{ll'a} = x^{max}_{ll'a} - x_{la} - x'_{l'a}$, $\Delta_{lbc} = x^{max}_{lbc} - x_{lb} - x_{lc}$, and $\Delta'_{l'bc} = {x'}^{max}_{l'bc} - x'_{l'b} - x'_{l'c}$.
\end{enumerate}
\end{definition}
\begin{definition}
We define 
$$\Delta = \Delta_{ll'a} + \Delta_{lbc} + \Delta'_{l'bc} + 
\sum_{i=0}^{l-1}{(\Delta_{ia} + \Delta_{ib} + \Delta_{ic})} + 
\sum_{i=0}^{l'-1}{(\Delta'_{ia} + \Delta'_{ib} + \Delta'_{ic})}$$
\end{definition}
We now reexpress our bound in terms of $\Delta$.
\begin{lemma}
For a given intersection pattern and choices for the equalities or inequalities between the $a_{ij},b_{ij},c_{ij}$ and $a'_{ij},b'_{ij},c'_{ij}$ indices, we can obtain a bound which is a product of 
$$\frac{{n_2}{n_3}}{\mu^2}\left(\frac{\mu}{\min{\{n_1,n_2,n_3\}}}\right)^{\Delta}r^{\text{\# of doubled hyperedges we bound with no free index} - \left(\sum_{i=0}^{l}{(2q-z_i)}+\sum_{i=0}^{l'}{(2q-z'_i)}\right)}$$
and terms of the form $\frac{r\mu^{\frac{3}{2}}\sqrt{n_1}\max{\{n_2,n_3\}}}{m}$, 
$\frac{r\mu^2\max{\{n_1,n_2,n_3\}}}{m}$, or $\frac{r\mu^3}{m}$
\end{lemma}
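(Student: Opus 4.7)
The plan is to start from the per-pattern upper bound
\[
\Paren{\tfrac{n_1 n_2 n_3}{m}}^{\sum_i (2q-z_i)+\sum_i(2q-z'_i)}\Paren{\prod_{e} v(e)}\Paren{\tfrac{n_1}{\mu}}^{\#a}\Paren{\tfrac{n_2}{\mu}}^{\#b}\Paren{\tfrac{n_3}{\mu}}^{\#c}r^{\#\text{no-free}}
\]
derived in the preceding subsections and to regroup its factors. First, I would absorb the coordinate-index deficits: substituting $\#\alpha = x^{\max}_\alpha - \Delta_\alpha$ for $\alpha\in\{a,b,c\}$ pulls out $\prod_\alpha(\mu/n_\alpha)^{\Delta_\alpha}$, which is uniformly dominated by the single factor $(\mu/\min\{n_1,n_2,n_3\})^{\Delta}$ demanded by the lemma.

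Next I would carry out per-level pairings of the remaining ingredients. At an internal level $i$ one has $(n_1n_2n_3/m)^{2q-z_i}$ moment factors, $v(P_{i+1})^{2q}$ (or $v(X)^{2q}$ if $i=0$), index factors $(n_\alpha/\mu)^{x^{\max}_{i\alpha}}$ from the new indices, and $r^{q-z_i}$ entrywise contributions. A direct computation in each projection case collapses this block into $2q-z_i$ identical copies of a cross term: for $P_{i+1}\in\{UV,UW,VW\}$ the copy is $r\mu^2 n_\gamma/m$ (with $\gamma$ the coordinate omitted by $P_{i+1}$), dominated by the second allowed form; for $P_{i+1}=UVW$ it is exactly the third form $r\mu^3/m$; and the apex level together with the hourglass structure from the $Y_4$ analysis in \cref{squaretracepowercalculationsection} yields copies of the first form $r\mu^{3/2}\sqrt{n_1}\max\{n_2,n_3\}/m$. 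The apex boundary also generically contributes the surplus base factor $n_2n_3/\mu^2$ (from $x^{\max}_{lbc}=z_l+1$ and ${x'}^{\max}_{l'bc}=z'_{l'}+1$), and in the degenerate matching-$bc$ case one simply reinserts this factor as a weakening since $n_2n_3/\mu^2\ge 1$. The residual $r$-exponent $\#\text{no-free}-\sum(2q-z_i)-\sum(2q-z'_i)$ is exactly what remains after absorbing one $r$ into each of the $\sum(2q-z_i)+\sum(2q-z'_i)$ cross terms.

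The hard part will be the per-level case analysis: verifying, for each choice of projection type at each internal level and for each apex configuration, that the collection of moment, $v(e)$, index, and entrywise factors genuinely groups into an integer number of copies of one of the three allowed forms, without any orphan factor left over. A second delicate point is the shared-triangle regime in which a single triangle appears as both an $R_i$- and an $R'_{i'}$-triangle and contributes $1/2$ to each of $z_i,z'_{i'}$; here one must check that the resulting half-integer exponents combine across the two sides into integer cross-term counts, and that the pairing with entrywise $r$'s still matches up. Once the bookkeeping in these cases is checked, assembling the three pieces above gives the claimed product form.
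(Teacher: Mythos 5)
Your proposal follows essentially the same route as the paper's proof: start from the per-pattern bound, pull out the $\Delta$ deficit factors and dominate them by $(\mu/\min\{n_1,n_2,n_3\})^{\Delta}$, then collapse each level's moment factors, base values $v(e)$, index factors, and entrywise $r$'s into $2q-z_i$ copies of a cross term (picking $r\mu^2 n_\gamma/m$ for a two-coordinate projection, $r\mu^3/m$ for $P_{UVW}$, and $r\mu^{3/2}\sqrt{n_1}\max\{n_2,n_3\}/m$ at the apex), with the surplus $n_2 n_3/\mu^2$ coming from the apex counting $x^{\max}_{lbc}=z_l+1$ and ${x'}^{\max}_{l'bc}=z'_{l'}+1$. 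This is exactly the structure of the paper's proof, which enumerates the same four projection cases and the same apex cases.

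One place where your argument is less careful than the paper's per-level bookkeeping is the final $r$-accounting sentence. You assert the residual $r$-exponent is $\#\text{no-free}-\sum(2q-z_i)-\sum(2q-z'_i)$ ``after absorbing one $r$ into each of the cross terms,'' but this only works if the sole source of $r$'s is the entrywise count $r^{\#\text{no-free}}$. In fact the base values themselves carry $r$'s: $v(X)^{4q}$ contributes $r^{2q}$ and each $v(P_i)^{2q}$ or $v(P'_i)^{2q}$ contributes $r^q$, so $\prod_e v(e)$ supplies $r^{q(l+l'+2)}$ in addition to $r^{\#\text{no-free}}$. Your own per-level description (``$r^{q-z_i}$ entrywise contributions'' plus $v(P_{i+1})^{2q}$, which hides $r^q$) recognizes this split, and the paper's items 1 and 4 do too: at each internal level only $q-z_i$ of the $2q-z_i$ cross-term $r$'s are drawn from $\#\text{no-free}$, the other $q$ coming from $v(e)$. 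Your closing claim therefore does not follow from the earlier part of your sketch without an additional argument about where the base-value $r^{q(l+l'+2)}$ goes. This tension is inherited from the lemma's statement itself, so it is not a gap you introduced, but if you want a self-contained justification you should track the two sources of $r$ separately the way the paper's itemized decomposition does.
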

\begin{proof}
Recall that our bound was 
\begin{align*}&\left(\frac{{n_1}{n_2}{n_3}}{m}\right)^{\sum_{i=0}^{l}{(2q-z_i)}+\sum_{i=0}^{l'}{(2q-z'_i)}}
\left(\prod_{e}{v(e)}\right)\left(\frac{n_1}{\mu}\right)^{\text{\# of a indices}} \\
&\left(\frac{n_2}{\mu}\right)^{\text{\# of b indices}}
\left(\frac{n_3}{\mu}\right)^{\text{\# of c indices}}r^{\text{\# of doubled hyperedges we bound with no free index}}
\end{align*}
For all $i < l$, we consider the part of this bound which comes from $P_{i+1}$ and the indices $a_i,b_i,c_i$ which do not appear at a higher level. Similary, for all $i < l'$, we consider the part of this bound which comes from $P'_{i+1}$ and the indices $a'_i,b'_i,c'_i$ which do not appear at a higher level. Finally, we consider the part of this bound that comes from the $X$ hyperedges, the $R_{l}$-triangles, the $R'_{l'}$-triangles, and their indices.
\begin{enumerate}
\item If $P_{i+1} = P_{UV}$ then we can decompose the corresponding terms into the following parts:
\begin{enumerate}
\item $(\frac{r\mu^4}{{n^2_1}{n^2_2}})^q$ from the hyperedges.
\item $\left(\frac{{n_1}{n_2}}{\mu^2}\right)^q$ from the $q$ potential new $a$, $b$, and $c$ indices.
\item $\left(\frac{{n_1}{n_2}{n_3}}{m}\right)^q$ from the $q$ potential distinct triangles.
\item $\left(r \cdot \frac{\mu^2}{{n_1}{n_2}} \cdot \frac{{n_1}{n_2}{n_3}}{m}\right)^{q-z_i} = \left(\frac{r{\mu^2}n_3}{m}\right)^{q-z_i}$ from the actual number of distinct triangles, the corresponding reduced maximum number of potential new indices, and the factors of $r$ which we take from $r^{\text{\# of doubled hyperedges we bound with no free index}}$
\item $\left(\frac{\mu}{n_1}\right)^{\Delta_{ia}}\left(\frac{\mu}{n_2}\right)^{\Delta_{ib}}
\left(\frac{\mu}{n_3}\right)^{\Delta_{ic}} \leq 
\left(\frac{\mu}{\min{\{n_1,n_2,n_3\}}}\right)^{\Delta_{ia}+\Delta_{ib}+\Delta_{ic}}$ from the actual number of new indices which we have
\end{enumerate}
Putting everything together we obtain 
$$\left(\frac{r{\mu^2}n_3}{m}\right)^{2q-z_i}
\left(\frac{\mu}{\min{\{n_1,n_2,n_3\}}}\right)^{\Delta_{ia}+\Delta_{ib}+\Delta_{ic}}$$
Similar arguments apply if $P_{i+1} = P_{VW}$ or $P_{VW}$
\item If $P_{i+1} = P_{UVW}$ then we can decompose the corresponding terms into the following parts:
\begin{enumerate}
\item $(\frac{r\mu^6}{{n^2_1}{n^2_2}{n^2_3}})^q$ from the hyperedges.
\item $\left(\frac{{n_1}{n_2}{n_3}}{\mu^2}\right)^q$ from the $q$ potential new $a$, $b$, and $c$ indices.
\item $\left(\frac{{n_1}{n_2}{n_3}}{m}\right)^q$ from the $q$ potential distinct triangles.
\item $\left(r \cdot \frac{\mu^3}{{n_1}{n_2}{n_3}} \cdot \frac{{n_1}{n_2}{n_3}}{m}\right)^{q-z_i} = \left(\frac{r{\mu^3}}{m}\right)^{q-z_i}$ from the actual number of distinct triangles, the corresponding reduced maximum number of potential new indices, and the factors of $r$ which we take from $r^{\text{\# of doubled hyperedges we bound with no free index}}$
\item $\left(\frac{\mu}{n_1}\right)^{\Delta_{ia}}\left(\frac{\mu}{n_2}\right)^{\Delta_{ib}}
\left(\frac{\mu}{n_3}\right)^{\Delta_{ic}} \leq 
\left(\frac{\mu}{\min{\{n_1,n_2,n_3\}}}\right)^{\Delta_{ia}+\Delta_{ib}+\Delta_{ic}}$ from the actual number of new indices which we have
\end{enumerate}
Putting everything together we obtain 
$$\left(\frac{r{\mu^3}}{m}\right)^{2q-z_i}
\left(\frac{\mu}{\min{\{n_1,n_2,n_3\}}}\right)^{\Delta_{ia}+\Delta_{ib}+\Delta_{ic}}$$
Similar arguements holds for the $P'$ terms.
\item If $l' \neq l$, $b'_{lj} \neq b_{lj}$, or $c'_{lj} \neq c_{lj}$ then our remaining terms are as follows
\begin{enumerate}
\item $(\frac{r\mu^3}{{n_1}{n_2}{n_3}})^{2q}$ from the hyperedges.
\item $\frac{{n_2}{n_3}}{\mu^2}\left(\frac{{n_1}(\max{\{n_2,n_3\}})^2}{\mu^3}\right)^q$ from the $q$ potential $a$ indices and $2q+2$ potential $b$ or $c$ indices (which must have at least one $b$ index and at least one $c$ index).
\item $\left(\frac{{n_1}{n_2}{n_3}}{m}\right)^{2q}$ from the $2q$ potential distinct triangles.
\item $\left(r \cdot \frac{\mu^{\frac{3}{2}}}{\sqrt{n_1}\max{\{n_2,n_3\}}} \cdot \frac{{n_1}{n_2}{n_3}}{m}\right)^{2q-z_l-z'_l} \leq \left(\frac{r{\mu^{3/2}}\sqrt{n_1}\max{\{n_2,n_3\}}}{m}\right)^{2q-z_l-z'_l}$ from the actual number of distinct triangles, the corresponding reduced maximum number of potential new indices, and the factors of $r$ which we take from $r^{\text{\# of doubled hyperedges we bound with no free index}}$
\item $\left(\frac{\mu}{n_1}\right)^{\Delta_{ll'a}}\left(\frac{\mu}{\max{\{n_2,n_3\}}}\right)^{\Delta_{lbc} + \Delta'_{l'bc}} \leq 
\left(\frac{\mu}{\min{\{n_1,n_2,n_3\}}}\right)^{\Delta_{ll'a}+\Delta_{lbc}+\Delta'_{l'bc}}$ from the actual number of new indices which we have
\end{enumerate}
Putting everything together we obtain 
$$\frac{{n_2}{n_3}}{\mu^2}\left(\frac{r{\mu^{3/2}}\sqrt{n_1}\max{\{n_2,n_3\}}}{m}\right)^{4q-z_l-z'_l}
\left(\frac{\mu}{\min{\{n_1,n_2,n_3\}}}\right)^{\Delta_{ll'a}+\Delta_{lbc}+\Delta'_{l'bc}}$$
\item In the special case that $l' = l$, $b'_{lj} = b_{lj}$, and $c'_{lj} = c_{lj}$, we have the same terms except that now there is only one $b$ and $c$ index and there are $2q$ potential $a$ indices. Following similar logic we obtain a bound of 
$$\frac{{n_2}{n_3}}{\mu^2}\left(\frac{r\mu^{2}{n_1}}{m}\right)^{4q-z_l-z'_l}
\left(\frac{\mu}{\min{\{n_1,n_2,n_3\}}}\right)^{\Delta_{ll'a}+\Delta_{lbc}+\Delta'_{l'bc}}$$
\end{enumerate}
\end{proof}
With this lemma in hand, to show our bound it is sufficient to show that we can choose an ordering on the hyperedges such that the number of times we bound a doubled hyperedge without a free index is at most $\Delta + \sum_{i=0}^{l}{(2q-z_i)}+\sum_{i=0}^{l'}{(2q-z'_i)}$
\subsection{Choosing an ordering}\label{choosingordering}
In this section, we describe how to choose a good ordering for bounding the hyperedges.
\begin{lemma}
For any structure for $A$ and $B$ (including equalities or inequalities between $a_{ij},b_{ij},c_{ij}$ and $a'_{ij},b'_{ij},c'_{ij}$) and any intersection pattern, there is a way to double the hyperedges using the inequality $|ab| \leq \frac{x}{2}a^2 + \frac{1}{2x}b^2$ and then bound the doubled hyperedges one by one so that 
\begin{enumerate}
\item After doubling the hyperedges, every index is part of at least one hyperedge.
\item The number of times that we bound a doubled hyperedge without a free index is at most $\Delta + \sum_{i=0}^{l}{(2q-z_i)}+\sum_{i=0}^{l'}{(2q-z'_i)}$
\end{enumerate}
\end{lemma}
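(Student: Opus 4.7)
The plan is to construct the ordering and the doubling by a top-down traversal of the tensor network, processing hyperedges in layers from the outermost trace structure at level $l$ (and $l'$) inward to the $X$-hyperedges at level $0$. The key accounting identity to maintain is that each time we are forced to bound a doubled hyperedge with no free index, we can charge the cost against a distinct unit of slack drawn either from the $\Delta_{*}$ terms or from the triangle-collision counts $(2q-z_i)$ and $(2q-z'_{i})$.

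First I would process the outer layer. The $R_l$-triangles, $R'_{l'}$-triangles, the cyclic identifications $b_{lj}=b_{l(j+1)}$ and $c_{lj}=c_{l(j+1)}$ (and their primed analogues), and the shared $a$-indices $a_{lj}=a'_{l'j}$ together determine a sub-hypergraph whose vertex count is exactly $x^{max}_{ll'a}+x^{max}_{lbc}+{x'}^{max}_{l'bc} - \Delta_{ll'a}-\Delta_{lbc}-\Delta'_{l'bc}$. I would pair up any singly-occurring hyperedges at this layer using the inequality $|ab|\le \tfrac{x}{2}a^2+\tfrac{1}{2x}b^2$ so that, after doubling, every vertex lies in at least one doubled hyperedge. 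The number of doubled hyperedges here that must be bounded without a free index is precisely governed by $\Delta_{ll'a}+\Delta_{lbc}+\Delta'_{l'bc}+(2q-z_l)+(2q-z'_{l'})$, because each coincidence of triangles or of outer indices removes exactly one ``slot'' that would otherwise have carried a free index.

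Next I would process levels $i=l-1,\dots,0$ on the $A$-side, and symmetrically on the $B$-side, in a single inductive sweep. At step $i$, the active hyperedges are those introduced by $P_{i+1}$; each such hyperedge contains indices at level $i$ (not yet processed) and indices at level $i{+}1$ (already processed and eliminated). A given doubled hyperedge at this step has a free vertex precisely when at least one of its level-$i$ indices is new and occurs only in that hyperedge. The definitions of $\Delta_{ia}$, $\Delta_{ib}$, $\Delta_{ic}$ (which count level-$i$ indices coinciding with indices at higher levels or across the two sides) and of $(2q-z_i)$ (counting triangle collisions at level $i$) together cap the number of doubled hyperedges at this layer that can fail to have a free vertex. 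As in the outer layer, singly-occurring hyperedges are paired by the AM-GM inequality so that no vertex is orphaned; the choice of which hyperedge to double is made locally to preserve the property that every vertex remains incident to a doubled hyperedge.

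Summing the local bounds across all layers telescopes to the claimed global bound of $\Delta+\sum_{i=0}^{l}(2q-z_i)+\sum_{i=0}^{l'}(2q-z'_i)$ bounds-without-free-index. The main obstacle I expect is verifying that the local doubling choices can always be made consistently so that (i) every original vertex ends up in some doubled hyperedge and (ii) the slack units charged at different layers are genuinely distinct. This amounts to exhibiting a system of distinct representatives between ``hyperedges forced to be bounded without a free index'' and slack units, which I would establish by a Hall-type argument at each layer combined with the observation that the slack at layer $i$ only interacts with hyperedges touching levels $\le i{+}1$. Once this matching exists at each layer, the accounting gives the bound and the lemma follows.
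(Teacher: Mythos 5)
The high-level strategy you describe---process top-down from the outer trace structure, pair up same-triangle hyperedges to ensure coverage, and charge each bound-without-a-free-index to a unit of $\Delta$ or of $2q - z_i$---is the same strategy the paper uses, so the plan is sound. But the proposal leaves out the one step that actually does the work, and the filler you propose (``a Hall-type argument at each layer'' plus ``the slack at layer $i$ only interacts with hyperedges touching levels $\le i{+}1$'') would not go through as stated. The difficulty is created by the doubling step itself: when you apply $|ab|\le \tfrac{x}{2}a^2+\tfrac{1}{2x}b^2$ to two projector-hyperedges sharing the same level-$i$ triangle, you cannot control \emph{which} of the two ends up doubled, so you cannot guarantee that the doubled hyperedge descends to the particular level-$(i{-}1)$ triangle you would like. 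The paper flags this explicitly: for each triangle at level $i\ge 1$ there is a doubled $P_i$- or $P'_i$-hyperedge going to \emph{some} lower triangle, ``but we don't know which one.'' Consequently, when your layer-$i$ sweep wants to eliminate a level-$i$ vertex $v$ via the doubled hyperedge nominally attached to it, that hyperedge may not actually contain $v$; in that case $v$ becomes an anchor for a batch $E_{i-1}(v)$ of level-$(i{-}1)$ edges, and one must \emph{recurse downward} to free $v$ before returning. This is the entire reason the paper introduces the auxiliary multigraph $G$, the weights $w_{edge}$ and $w_{triangle}$, the notion of a vertex being ``released'' at a level, and the anchor-vertex recursion on $E_{i-1}(v)$. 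Your locality assumption is exactly what that machinery is built to circumvent.

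A Hall/SDR argument applied layer by layer would also not deliver item (ii) without a global consistency statement, for two concrete reasons. First, a failure at level $i$ can be caused by a dangling choice that only manifests at level $i{-}1$ (the doubled hyperedge for an edge $e$ descending to the ``wrong'' lower triangle), so the bipartite graph whose Hall condition you would need to verify spans all levels and encodes the unknown dangling pattern---verifying it essentially reconstructs the recursion. Second, when $E_{i-1}(v)$ and $E_{i-1}(v')$ for two distinct anchors $v,v'$ happen to be connected, that connection must be charged to $w_{triangle}$ exactly once, which is what the paper's ``release'' rule enforces; a per-layer matching argument has no mechanism to prevent double-counting across anchors. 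So the proposal identifies the right target but does not supply the combinatorial argument (anchor-vertex recursion, or an equivalent cross-level matching together with an anti-double-counting rule) that proves the failure count is at most $\Delta + \sum_{i=0}^{l}(2q-z_i)+\sum_{i=0}^{l'}(2q-z'_i)$.
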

\begin{proof}
To double the $X$-hyperedges, we choose pairs of $X$-hyperedges corresponding to the same triangle. This guarantees us at least one doubled hyperedge for every triangle at level $0$. We double any remaining $X$-hyperedges arbitrarily.

We show by induction on $i$ that we cover all indices with these hyperedges. The base case $i=0$ is already done. If we have already covered all indices at level $i-1$ then consider the hyperedges corresponding to the projection operators $P_i$ and $P'_i$. All of these hyperedges go between a triangle at level $i-1$ and a triangle at level $i$. We double pairs of these hyperedges which correspond to the same triangle at level $i$. This guarantees that for every triangle at level $i$, there is at least one doubled hyperedge corresponding to it. This hyperedge may not cover all three of the vertices of the triangle, but if it misses one, this one must be equal to a vertex at the level below which was already covered by assumption. We double the remaining hyperedges corresponding to the projection operators $P_i$ and $P'_i$ arbitrarily.

When performing this doubling, whenever the two hyperedges $e_1$ and $e_2$ have the same base value, we use the inequality $|{e_1}{e_2}| \leq \frac{e^2_1 + e^2_2}{2}$. In the rare case when they have different base values, we use the inequality $|{e_1}{e_2}| \leq \frac{v(e_2)}{2v(e_1)}e^2_1 + \frac{v(e_1)}{2v(e_2)}e^2_2$ to preserve the product of the base values.
 
Note that by this construction, for every triangle at level $i \geq 1$, there is a doubled hyperedge corresponding to some $P_i$ or $P'_i$ which goes between this triangle and a lower triangle, but we don't know which one.

We now describe our ordering on the hyperedges. To find this ordering, we consider the following multi-graph.
\begin{definition}
We define the multi-graph $G$ to have vertex set $V(G) = \cup_{ij}{\{a_{ij},b_{ij},c_{ij},a'_{ij},b'_{ij},c'_{ij}\}}$ (with all equalities implied by the intersection pattern, the structure of the matrices $A$ and $B$, and the choices for equalities or inequalities between the primed indices and unprimed indices.). We take the edges of $G$ as follows. For all $i < l$ and for each distinct triangle $(a_{ij},b_{ij},c_{ij})$ or $(a'_{ij},b'_{ij},c'_{ij})$, we take the elements which do not appear in a higher level. If this is true for two of the three elements (which will be the case most of the time) we take the corresponding edge. If this is true for all three elements, we choose two of them to take as an edge, making this choice so that we take the same type of edge for all triangles at that level. If this is only true for one element, we take a loop on that element.

There are two cases for what happens with $i = l$
\begin{enumerate}
\item If $l' \neq l$, $b'_{lj} \neq b_{lj}$, or $c'_{lj} \neq c_{lj}$ then for every triangle $(a_{lj},b_{lj},c_{lj})$ we take the edge $(b_{lj},c_{lj})$. If $l' = l$ then for every triangle $(a'_{lj},b'_{lj},c'_{lj})$ we take the edge $(b'_{lj},c'_{lj})$
\item If $l' = l$, $b'_{lj} = b_{lj}$, and $c'_{lj} = c_{lj}$ then we take loops on every distinct element $a_{lj}$.
\end{enumerate}
\end{definition}
We analyze $\Delta$ in terms of this $G$. If we have a fixed budget of edges and want to maximize the number of vertices which we have, we want to have as many connected components as possible and we want each connected component to have the minimal number of edges. We define weights on the connected components of $G$ measuring how far they are from satisfying these ideals.
\begin{definition}
Given a connected component $C$ of $G$, we define $w_{edge}(C)$ to be the number of non-loop edges it contains plus $1$ minus the number of vertices it contains.
\end{definition}
\begin{definition}
Given a connected component $C$ of $G$, we define $w_{triangle}(C)$ as follows
\begin{enumerate}
\item If $C$ does not contain any $b_{lj}$, $c_{lj}$, $b'_{l'j}$, or $c'_{l'j}$ then we define $w_{triangle}(C)$ to be the number of distinct triangles whose corresponding edge in $G$ is in $C$ minus $1$. 
\item If $C$ is the connected component containing $b_{lj}$ and $c_{lj}$ for all $j$ then we set $w_{triangle}(C) = 0$
\item If $C$ is the connected component containing $b'_{l'j}$ and $c'_{l'j}$ for all $j$ then we set $w_{triangle}(C)$ to be the number of distinct $R_{l'}$ triangles $(a_{l'j},b_{l'j},c_{l'j})$ whose corresponding edge in $G$ is in $C$.
\item If $C$ is a connected component containing some $c'_{l'j}$ but no $b'_{l'j}$ (because all of the $b'_{l'j}$ appeared at a higher level) or vice versa, then we define $w_{triangle}(C)$ to be the number of distinct triangles whose corresponding edge in $G$ is in $C$ minus $1$.
\end{enumerate}
\end{definition}
\begin{definition}
We say that a vertex $a_{ij}$ is bad if 
\begin{enumerate}
\item The projector $P_{i+1}$ involves the vertex $a_{ij}$ (i.e. we do not have the constraint $a_{(i+1)j} = a_{ij}$ directly)
\item $a_{ij}$ appears at a higher level.
\end{enumerate}
We define badness similarly for the $a',b,b',c,c'$ indices. Note that we could have $a_i$ be bad while $a'_i$ is not bad even if $a'_i = a_i$ (in fact this equality must be true in this case).
\end{definition}
\begin{lemma}
$$\Delta \geq \sum_{C}{(w_{edge}(C)+w_{triangle}(C))}+\sum_{i<l:a_{ij},b_{ij},\text{ or } c_{ij} \text{ is bad}}{z_i}
+\sum_{i<l:a'_{ij},b'_{ij},\text{ or } c'_{ij} \text{ is bad}}{z'_i}$$
\end{lemma}
\begin{proof}
As discussed above, every time a connected component contains an extra edge above what it needs to be connected, this reduces the number of indices we can have by 1. Similarly, in the optimal case we have one connected component per triangle (with the exception of the $R_{l}$-triangles and perhaps the $R'_{l'}$-triangles), so every time a connected component contains an extra triangle (or rather the edge corresponding to that triangle), this reduces the number of connected components by 1. For the remaining terms, note that if there are bad vertices, our previous bounds assumed that we would have new indices of that type but we do not. The resulting difference in the bounds is the corresponding $z_i$ or $z'_i$. Note that this also works out in the special case that $a'_i = a_i$, $b'_i = b_i$, $c'_i = c_i$. Here we can view each $a$ index as being half $a_i$ and half $a'_i$ and similarly for the $b$ and $c$ indices.
\end{proof}
With this lemma in hand, our strategy is as follows. We choose an ordering on the hyperedges so that each time we fail to have a free index, we can attribute it to one of the terms described above. We first preprocess our doubled hyperedges so that each hyperedge appears with multiplicity exactly 2. This requires bounding $\sum_{i=0}^{l}{(2q-z_i)}+\sum_{i=0}^{l'}{(2q-z'_i)}$ doubled hyperedges with no free index. At this point, there is a one to one correspondence between our doubled hyperedges and edges of $G$. Note that this correspondence is somewhat strange, we only know that each edge in $G$ is part of the upper level triangle for its corresponding hyperedge.

We now describe our procedures for ordering the hyperedges
\begin{definition}
We say that a vertex $v$ is an anchor for an edge $e$ of $G$ if either
\begin{enumerate}
\item $v,e \subseteq \{a_{ij},b_{ij},c_{ij}\}$ for some $i$ and $j$ and 
$v$ appears at a higher level.
\item $v,e \subseteq \{a'_{ij},b'_{ij},c'_{ij}\}$ for some $i$ and $j$ and 
$v$ appears at a higher level.
\end{enumerate}
\end{definition}
\begin{definition}
For an anchor vertex $v_{anchor}$, define $E_i(v_{anchor})$ to be the set of all edges at level $i$ which have $v_{anchor}$ as an anchor vertex.
\end{definition}
\begin{definition} 
We say that a vertex $v$ or edge $e$ is uncovered if it is not incident with any hyperedges between its level and the level above and covered otherwise. For a vertex $v$ which is not part of $G$ at level $i$, we say that $v$ is uncovered at level $i$ if there is no $j \geq 0$ such that $v$ incident with a hyperedge between level $i+j$ and $i+j+1$.
\end{definition}
\begin{definition}
We say that a vertex $v$ is released at level $i$ if there are no hyperedges remaining between level $i$ and $i-1$ whose upper and lower triangles both contain $v$.
\end{definition}
Our main recursive procedure is as follows. We are considering a collection of connected component of the graph at level $i$ where everything  is uncovered except possibly for one edge $e_r$. If an edge $e_r$ is covered and has anchor vertex $v_{anchor}$ then we assume that this collection contains all of $E_{i}(v_{anchor})$ and that $v_{anchor}$ is uncovered at level $i$.

We first consider the case when there are no bad vertices (we will consider the cases where we have bad vertices afterwards). If $G$ contains a cycle, we can delete an edge and its corresponding hyperedge to break the cycle, accounting for this by decreasing $w_{edge}(C)$. Otherwise, unless $C$ is just the single edge $e_r$, there must be a vertex $v$ and edge $e$ in $C$ such that $e \neq e_r$ and $e$ is the only edge incident with $v$.

We now consider the hyperedge corresponding to $e$. If $v$ is part of this hyperedge then we can delete $e$ and this hyperedge and continue. Otherwise, $v$ must be an anchor vertex for many edges at the level below. Moreover, $v$ is uncovered at level $i-1$. We now consider $E_{i-1}(v)$. If $E_{i-1}(v)$ and everything connected to it is uncovered except for the edge $e'_r$ which is the bottom edge of the hyperedge corresponding to $e$, then we can apply our procedure recursively on $E_{i-1}(v)$ and everything connected to it. Otherwise, $E_{i-1}(v)$ must be connected to $E_{i-1}(v'_{anchor})$ for some other anchor vertex $v'_{anchor}$ which has not yet been released at level $i$. Note that since there are no bad vertices, $E_{i-1}(v) \cap E_{i-1}(v'_{anchor}) = \emptyset$. Thus, there is a contribution of at least $1$ to $w_{triangle}$ of one of these connected components from the connection between $E_{i-1}(v)$ and $E_{i-1}(v'_{anchor})$. Using this contribution, we can delete $e$ and continue. After doing this, $v$ is released at level $i$.
\begin{remark}
Whenever we have a connection between $E_{i-1}$ for two anchor vertices, we relase one of them at level $i$ immediately after taking this connection into account. This ensures that we do not double count contributions to $w_{triangle}$.
\end{remark} 

If we are left with the single edge $e_r$ then there are several cases. Letting $v$ be the anchor vertex for $e_r$, if $v$ goes down to the level below then consider the hyperedge corresponding to $e_r$ and let $e'_r$ be its bottom edge. Since we have deleted all edges in $E_i(v)$ except for $e_r$, either all of $E_{i-1}(v)$ except for $e'_r$ is uncovered or $E_{i-1}(v)$ is connected to $E_{i-1}(v'_{anchor})$ for a different anchor vertex $v'_{anchor}$ which has not yet been released at level $i$. In the first case, we can apply our recursive procedure on $E_{i-1}(v)$ and all edges connected to it. In the second case, we instead delete $e_r$ as before and go back to the level above. Again, after doing this, $v$ is now released at level $i$.

If $v$ does not go down to the level below (or we are already at the bottom) then the hyperedge coresponding to $e_r$ contains $v$. Moreover, by our assumption $v$ is uncovered at level $i$. Thus, $v$ is a free index for $e_r$ so we can delete $e_r$ and go back to the level above.

This procedure will succeed in the case that there are no bad vertices. We now handle bad vertices by reducing to the case where there are no bad vertices.

We consider the case where are below level $l'$ and we do not haave that $a'_{ij} = a_{ij}$, $b'_{ij} = b_{ij}$, and $c'_{ij} = c_{ij}$. We will handle these cases separately. 

If the $a'_{ij}$ are bad vertices, this must be because of equalities $a'_{ij} = a_{ij}$. We handle this by replacing each $a'_{ij}$ with a new vertex and running our procudure on this altered graph. This will cause failures when we try to use $a'_{ij}$ or $a_{ij}$ as a free index. That said, once we've tried to use all but one of a set of equal vertices, the final one will succeed, so the number of additional failures is at most $z'_i$. We can account for this using the term $\sum_{i<l:a'_{ij},b'_{ij},\text{ or } c'_{ij} \text{ is bad}}{z'_i}$. We handle bad $a_{ij},b_{ij},b'_{ij},c_{ij},c'_{ij}$ vertices in a similar manner.

In the case that $a'_{ij} = a_{ij}$, $b'_{ij} = b_{ij}$, and $c'_{ij} = c_{ij}$, if the $a'_{ij}$ and $b_{ij}$ are bad vertices, this must be because of the equalities $a'_{ij} = a_{ij}$ and $b'_{ij} = b_{ij}$. We handle this by creating a new vertex for each $a'_{ij}$, having the hyperedges between levels $i$ and $i+1$ use the old vertices, and having the hyperedges at lower levels use the new vertices. We modify $G$ so that instead of loops at level $i$, the edges involve these new vertices. This makes it so that the only anchor vertices for edges at level $i$ are the vertices $b'_{(i+1)j}$. Since all edges of $G$ now have a unique anchor, the recursive procedure succeeds. We can accomplish this with the terms $\sum_{i<l:a_{ij},b_{ij},\text{ or } c_{ij} \text{ is bad}}{z_i}
+\sum_{i<l:a'_{ij},b'_{ij},\text{ or } c'_{ij} \text{ is bad}}{z'_i}$.

We consider level $l'$ separately. If the bottom of level $l'$ contains bad vertices, we cannot make these vertices distinct. However, if this happens then we have loops in $G$ for the bottom triangles at level $l'$. These triangles are distinct from the triangles on top at level $l'$. 

We handle this by using $w_{triangle}$ to delete edges from $G$ at this level so that each component contains at most one loop. When we run the procedure, we can use $w_{triangle}$ when $E_{i-1}(v)$ is connected to a loop as well as when it is connected to $E_{i-1}(v'_{anchor})$ for some other anchor vertex $v'_{anchor}$ which has not been released at level $i$. This allows us to process each component of $G$ at level $i$ until we are left with either a covered edge or a single loop, both of which can be handled by our procedure.
\end{proof}
\subsection{Counting intersection patterns and random partitioning}\label{randompartitionsection}
There are two pieces left to add. First, all of our analysis so far was for a given intersection pattern. We must sum over all intersection patterns.
\begin{lemma}
For all $i$, there are at most $\binom{2q}{z_i}(z_i)^{2q-z_i} \leq 2^{2q}(2q)^{2q-z_i}$ choices for which $R_{i}$-triangles are equal to each other.
\end{lemma}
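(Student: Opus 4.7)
The plan is to give a straightforward double-counting upper bound, essentially mirroring the argument used earlier for \cref{patterncounting}. An intersection pattern among the $R_i$-triangles is precisely a partition of the index set $[2q]$ into $z_i$ non-empty equivalence classes (two positions are equivalent iff the corresponding $R_i$-triangles are declared equal). So the lemma reduces to bounding the number of set partitions of $[2q]$ into exactly $z_i$ blocks.

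First I would overcount such partitions by the following two-step procedure. In the first step, choose which $z_i$ positions in $[2q]$ serve as the ``first occurrence'' of each distinct triangle (for instance, the leftmost element of each block); there are $\binom{2q}{z_i}$ such choices. In the second step, for each of the remaining $2q - z_i$ positions, independently assign it to one of the $z_i$ previously chosen representatives; this gives $(z_i)^{2q - z_i}$ choices. Every intersection pattern with $z_i$ distinct triangles is produced at least once by this scheme, which yields the first inequality $\binom{2q}{z_i}(z_i)^{2q - z_i}$.

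Finally, I would loosen this bound to the stated closed form by applying the two elementary inequalities $\binom{2q}{z_i} \le 2^{2q}$ (total number of subsets of $[2q]$) and $z_i \le 2q$ (since $z_i$ counts distinct triangles among $2q$ of them). Multiplying these yields $(z_i)^{2q - z_i} \le (2q)^{2q - z_i}$, and hence $\binom{2q}{z_i}(z_i)^{2q - z_i} \le 2^{2q}(2q)^{2q - z_i}$, as claimed.

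There is no real obstacle here; the argument is purely combinatorial and essentially identical to the proof of \cref{patterncounting} given earlier in the paper. The only nuance worth noting is that, unlike in \cref{patterncounting}, we do not impose that every triangle appears with multiplicity at least two, so the bound is $(2q)^{2q-z_i}$ rather than $q^{2q-z_i}$; this weaker bound is harmless for the subsequent union bound over intersection patterns because the compensating factors of $\left(\tfrac{{n_1}{n_2}{n_3}}{m}\right)^{2q-z_i}$ coming from Corollary \ref{patterncorollary} easily absorb it in the parameter regime of Theorem~\ref{normboundtheorem}.
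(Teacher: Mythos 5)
Your proof is correct and essentially identical to the paper's argument: both choose the $\binom{2q}{z_i}$ positions that are ``first occurrences'' and then assign each of the remaining $2q-z_i$ positions to one of the $z_i$ representatives, giving $(z_i)^{2q-z_i}$ choices, with the closed form following from $\binom{2q}{z_i}\le 2^{2q}$ and $z_i\le 2q$. Your side remark about why the exponent base is $2q$ here rather than $q$ as in \cref{patterncounting} (no multiplicity-two restriction is imposed) is a fair observation, though the paper does not spell it out.
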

\begin{proof}
To specify a partition of the 2q $R_{i}$-triangles into $z_i$ parts, we specify which triangles are distinct from all previous triangles. There are $\binom{2q}{z_i}$ choices for which triangles these are. For the remaining triangles, we specify which previous triangle they are equal to. There are at most $(z_i)^{2q-z_i}$ choices for this.
\end{proof}
In our bound, we can group this with the other factors corresponding to the $R_{i}$-triangles. Since we take $q$ to be $O(logn)$, this is fine as $m$ has a $log(n)$ factor.

Second, we justify our assumption that 
\begin{enumerate}
\item For all $i \in [0,l']$, we either have that $a'_{ij} = a_{ij}$ for all $j \in [1,2q]$ or $a'_{ij} \neq a_{ij}$ for all $j \in [1,2q]$.
\item For all $i \in [0,l']$, we either have that $b'_{ij} = b_{ij}$ for all $j \in [1,2q]$ or $b'_{ij} \neq b_{ij}$ for all $j \in [1,2q]$.
\item For all $i \in [0,l']$, we either have that $c'_{ij} = c_{ij}$ for all $j \in [1,2q]$ or $c'_{ij} \neq c_{ij}$ for all $j \in [1,2q]$.
\end{enumerate}
To achieve this, instead of looking at the entire matrix $\sum_{a}{A_a \otimes B^T_a}$, we split it into parts based on the equalities/inequalities we're looking at. To obtain the case where indices $a$ and $a'$ are always equal,we just restrict ourselves in $\sum_{a}{A_a \otimes B^T_a}$ to the terms where this is the case. To obtain the case where indices $a$ and $a'$ are never equal, we choose a random partition $V,V^c$ of the indices and restrict ourselves in $\sum_{a}{A_a \otimes B^T_a}$ to the terms where $a \in V$ and $a' \in V^c$. If there are multiple indices that we wish to fork over, we apply this argument to each one (choosing the vertex partitions independently).

This construction has the property that if we take the expectation over all the possible vertex partitions, we obtain a constant times the part of $\sum_{a}{A_a \otimes B^T_a}$ we are interested in. Using this, it can be shown that probabilistic nrom bounds on these restricted matrices imply probabilistic norm bounds on the original matrix. For details, see Lemma 27 of ``Bounds on the Norms of Uniform Low Degree Graph Matrices''. From the above subsections, we have probabilistic norm bounds on the restricted matrices and the result follows.
\subsection{Other Cross Terms}
In this subsection, we sketch how the argument differs when $B = X$ rather than $B = \bar{R}_{\Omega_0}X$ or $B = P'_0\bar{R}_{\Omega_0}X$. 
\begin{theorem}\label{xcrosstermnormboundtheorem}
There is an absolute constant $C$ such that for any $\alpha > 1$ and $\beta > 0$, 
$$Pr\left[||\sum_{a}{A_a \otimes X^T}|| > \alpha^{-(l+1)}\right] < n^{-\beta}$$
as long as 
\begin{enumerate}
\item $r\mu \leq \min{\{n_1,n_2,n_3\}}$
\item $m > C\alpha\beta\mu^{\frac{3}{2}}r\sqrt{n_1}\max{\{n_2,n_3\}}log(\max{\{n_1,n_2,n_3\}})$
\item $m > C\alpha\beta\mu^{2}r\max{\{n_1,n_2,n_3\}}log(\max{\{n_1,n_2,n_3\}})$
\end{enumerate}
\end{theorem}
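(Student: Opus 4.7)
The plan is to prove Theorem \ref{xcrosstermnormboundtheorem} via the trace moment method, following closely the template of Section \ref{squaretracepowercalculationsection} and Appendix \ref{sec:full-trace-power}. I set $Y = \sum_a A_a \otimes X_a^T$ and bound $\E[\tr((YY^T)^q)]$ for $q = \Theta(\log n_{max})$, then apply Corollary \ref{powertonormboundcorollary} to convert the trace bound into a probabilistic norm bound. Expanding $A = (\bar R_{\Omega_l}P) \cdots (\bar R_{\Omega_1}P)(\bar R_{\Omega_0}X)$ according to its iterative definition, each term of $\tr((YY^T)^q)$ becomes a product of $\bar R_\Omega$ values, $P$-kernel weights, and $X$-entries. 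The key asymmetry is that only the $A$-side carries $\bar R_\Omega$ randomness, whereas the other factor is the deterministic $X$.

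Next I partition the expansion by the intersection pattern of $\bar R_\Omega$ triples. There are only $(l+1)\cdot 2q$ such triples in total, all from the $A$-side, as opposed to the $(l+l'+2)\cdot 2q$ appearing in the symmetric Theorem \ref{thm:normboundtheorem}. For a pattern with $z_i$ distinct triples at level $i$, Corollary \ref{patterncorollary} contributes $(n_1 n_2 n_3/m)^{\sum_i(2q-z_i)}$. I then build a hypergraph $H$ as in Appendix \ref{sec:full-trace-power}, encoding the deterministic remainder: $P$-hyperedges and the base $X$-hyperedges inside each $A$-tower, together with the $2q$ extra $X$-hyperedges coming from the right-hand factor. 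The $X$-side consists of $q$ hourglasses (one per $YY^T$ block), each sharing its two $a$-indices with the corresponding $A$-towers.

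The key technical step is the ``global $i$'' trick from Section \ref{squaretermboundwithPsection}, applied to the $X$-side. Decomposing each $X_{abc} = \sum_{i=1}^r u_{ia}v_{ib}w_{ic}$ and summing over the $(b_2,c_2)$ shared within each $X$-hourglass, orthonormality of $\{v_i\}$ and $\{w_i\}$ forces both $X$-entries in the hourglass to carry the same index $i$, collapsing the hourglass to $\sum_i u_{ia_1}u_{ia_2}$ on the two connected $a$-indices. This is formally identical to a $P^{UV}$-type hyperedge; its base value is controlled by Proposition \ref{projectionmatrixbounds}. After this collapse, the surviving expression is essentially a one-sided version of the analysis of Appendix \ref{sec:full-trace-power}, and I can reuse the ordering/counting procedure of Section \ref{choosingordering} with essentially no change.

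The main obstacle will be verifying that the reduced number of random levels propagates cleanly through the $\Delta$-accounting and the final exponent tally: each surviving level should contribute a factor of $\alpha^{-1}$ under the stated hypotheses $m \gtrsim \alpha\beta\mu^{3/2}r\sqrt{n_1}\max\{n_2,n_3\}\log n_{max}$ and $m \gtrsim \alpha\beta\mu^2 r n_{max}\log n_{max}$, yielding the target bound $\alpha^{-(l+1)}$ rather than $\alpha^{-(l+l'+2)}$. The delicate bookkeeping lies in confirming that the $q$ collapsed $X$-hourglasses, now appearing as $P^{UV}$-like weights on the $A$-side $a$-indices, do not disrupt the anchor/free-vertex discipline of Section \ref{choosingordering}, and that each collapsed hourglass still participates correctly when the ordering encounters an $a$-index shared between an $A$-tower and the $X$-side. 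I expect this to work essentially verbatim because a collapsed $X$-hourglass behaves like a $P^{UV}$ hyperedge, which the framework already handles.
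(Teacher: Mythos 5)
Your high-level plan is on track: you correctly recognize that the proof should go through the trace moment method of Section~\ref{normmethodssection}, partition by intersection patterns, and that the new ingredient is to exploit the orthonormality in $X=\sum_i u_i\otimes v_i\otimes w_i$ to collapse the $X$-side, analogously to the ``global $i$'' trick of Section~\ref{squaretermboundwithPsection}. You also correctly identify the target exponent $\alpha^{-(l+1)}$ and which two hypotheses on $m$ should be needed.

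However, your description of the collapse mischaracterizes both the combinatorics and the outcome, and this is precisely the point the paper is careful about. Within one $X$-``hourglass'' the two $X$-entries share only one of the two indices ($b'$ or $c'$, alternating); the other index is shared with the neighboring block. So summing ``$(b_2,c_2)$ shared within each hourglass'' does not collapse an hourglass to $\sum_i u_{ia_1}u_{ia_2}$ — after summing over only the internal index you are left with $\sum_i u_{ia_1}u_{ia_2}w_{ic_1}w_{ic_2}$ (or the $v$-analogue), with dangling $c$'s connecting to the neighbors. The paper's argument sums over \emph{all} of the $b'_{0j}$ and $c'_{0j}$ at once (legitimate because these indices appear in no $\bar R_\Omega$ factor), whereupon the chain of $\delta$'s forces a \emph{single} global $i$ across all $2q$ $X$-entries of the right factor; every $v$- and $w$-sum evaluates to exactly $1$, and the entire $X$-side reduces to the scalar $\sum_{i} \prod_j u_{ia'_{0j}} \le r(\mu/n_1)^q$. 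In particular, nothing $P^{UV}$-like enters the hypergraph at all — Proposition~\ref{projectionmatrixbounds} is not needed here — and treating the collapse as $q$ independent $P^{UV}$-type hyperedges would be both incorrect (the per-hourglass $i$'s are not independent; the $c$ indices joining hourglasses remain) and would obscure the simple multiplicative bound that makes the final comparison to the $B=\bar R_{\Omega_0}X$ case of Theorem~\ref{normboundtheorem} work cleanly.
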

\begin{proof}[Proof sketch:]
The terms from $X$ directly are 
$$\prod_{j=1}^{2q}{X_{a'_{0j}b'_{0j}c'_{0j}}} = \prod_{j=1}^{2q}{\left(\sum_{i_{j}}u_{{i_j}a'_{0j}}v_{{i_j}b'_{0j}}w_{{i_j}c'_{0j}}\right)}$$

Note that the $R_{\Omega}$ factors are completely independent of the $b'_{0j}$ and $c'_{0j}$ indices. Thus, we can sum over the $b'_{0j}$ and $c'_{0j}$ indices first. When we do, this zeros out all terms except the ones where all of the $i_j$ are equal. Moreover, all of the $v$ and $w$ terms sum to $1$. The $u_{i_j}$ terms can be bounded by $\left(\frac{\mu}{n_1}\right)^q$. We now compare the bound we had before with the bound we have here.

For $\bar{R}_{\Omega_0}X$ we had factors
\begin{enumerate}
\item $(\frac{r\mu^3}{{n_1}{n_2}{n_3}})^{2q}$ from the $X$-hyperedges.
\item $\frac{{n_2}{n_3}}{\mu^2}\left(\frac{{n_1}(\max{\{n_2,n_3\}})^2}{\mu^3}\right)^q$ from the $q$ potential $a$ indices and $2q+2$ potential $b$ or $c$ indices.
\item $\left(\frac{{n_1}{n_2}{n_3}}{m}\right)^{2q}$ from the $2q$ potential distinct triangles.
\item $\left(r \cdot \frac{\mu^{\frac{3}{2}}}{\sqrt{n_1}\max{\{n_2,n_3\}}} \cdot \frac{{n_1}{n_2}{n_3}}{m}\right)^{2q-z_l-z'_l} \leq \left(\frac{r{\mu^{3/2}}\sqrt{n_1}\max{\{n_2,n_3\}}}{m}\right)^{2q-z_l-z'_l}$ from the actual number of distinct triangles, the corresponding reduced maximum number of potential new indices, and the factors of $r$ which we take from $r^{\text{\# of doubled hyperedges we bound with no free index}}$
\item $\left(\frac{\mu}{n_1}\right)^{\Delta_{ll'a}}\left(\frac{\mu}{\max{\{n_2,n_3\}}}\right)^{\Delta_{lbc} + \Delta'_{l'bc}} \leq 
\left(\frac{\mu}{\min{\{n_1,n_2,n_3\}}}\right)^{\Delta_{ll'a}+\Delta_{lbc}+\Delta'_{l'bc}}$ from the actual number of new indices which we have
\end{enumerate}
We now have the following factors instead:
\begin{enumerate}
\item $(\frac{r\mu^3}{{n_1}{n_2}{n_3}})^{q}r(\frac{\mu}{n_1})^{q}$ from the $X$-hyperedges.
\item $\frac{\max{\{n_2,n_3\}}}{\mu}\left(\frac{{n_1}\max{\{n_2,n_3\}}}{\mu^2}\right)^q$ from the $q$ potential $a$ indices and $q+1$ potential $b$ or $c$ indices.
\item $\left(\frac{{n_1}{n_2}{n_3}}{m}\right)^{q}$ from the $q$ potential distinct triangles.
\item $\left(r \cdot \frac{\mu^{2}}{n_1\max{\{n_2,n_3\}}} \cdot \frac{{n_1}{n_2}{n_3}}{m}\right)^{q-z_l} \leq \left(\frac{r{\mu^{2}}\max{\{n_2,n_3\}}}{m}\right)^{q-z_l}$ from the actual number of distinct triangles, the corresponding reduced maximum number of potential new indices, and the factors of $r$ which we take from $r^{\text{\# of doubled hyperedges we bound with no free index}}$
\item $\left(\frac{\mu}{n_1}\right)^{\Delta_{ll'a}}\left(\frac{\mu}{\max{\{n_2,n_3\}}}\right)^{\Delta_{lbc} + \Delta'_{l'bc}} \leq 
\left(\frac{\mu}{\min{\{n_1,n_2,n_3\}}}\right)^{\Delta_{ll'a}+\Delta_{lbc}+\Delta'_{l'bc}}$ from the actual number of new indices which we have
\end{enumerate}
The difference is in the first three terms, grouping these terms together gives 
$$\frac{r\max{\{n_2,n_3\}}}{\mu}\left(\frac{r\mu^2\max{\{{n_2},{n_3}\}}}{m}\right)^q$$
By our assumption, $m \geq Cr\mu^2\max{\{{n_2}{n_3}\}}log(n)^2$ so we are fine.
\end{proof}

\end{document}